\newtheorem{theorem}{Theorem}
\newtheorem{claim}{Claim}
\newtheorem{lemma}{Lemma}
\newtheorem{corollary}{Corollary}
\newtheorem*{definition*}{Definition}
\newtheorem{proposition}{Proposition}
\newtheorem{assumption}{Assumption}
\begin{document}

\title{A Hitting Time Analysis of Stochastic Gradient\\ Langevin Dynamics}

\author{
Yuchen Zhang\footnote{\footnotesize Computer Science Department, Stanford University, Stanford, CA 94305. Email: {\tt zhangyuc@cs.stanford.edu}.}
\qquad
Percy Liang\footnote{\footnotesize Computer Science Department, Stanford University, Stanford, CA 94305. Email: {\tt pliang@cs.stanford.edu}.}
\qquad
Moses Charikar\footnote{\footnotesize Computer Science Department, Stanford University, Stanford, CA 94305. Email: {\tt moses@cs.stanford.edu}.}
}

\maketitle

\begin{abstract}
\noindent
We study the Stochastic Gradient Langevin Dynamics (SGLD) algorithm for non-convex optimization. The algorithm performs stochastic gradient descent, where in each step it injects appropriately scaled Gaussian noise to the update. We analyze the algorithm's hitting time to an arbitrary subset of the parameter space. Two results follow from our general theory: First, we prove that for empirical risk minimization, if the empirical risk is pointwise close to the (smooth) population risk, then the algorithm finds an approximate local minimum of the population risk in polynomial time, escaping suboptimal local minima that only exist in the empirical risk. Second, we show that SGLD improves on one of the best known learnability results for learning linear classifiers under the zero-one loss.
\end{abstract}


\section{Introduction}

A central challenge of non-convex optimization is avoiding sub-optimal local minima. Although escaping all local minima is NP-hard in general~\citep[e.g.][]{blum1992training}, one might expect that it should be possible to escape ``appropriately shallow'' local minima, whose basins of attraction have relatively low barriers. As an illustrative example, consider minimizing an empirical risk function in Figure~\ref{fig:local-min}. As the figure shows, although the empirical risk is uniformly close to the population risk, it contains many poor local minima that don't exist in the population risk. Gradient descent is unable to escape such local minima.

A natural workaround is to inject random noise to the gradient. Empirically, adding gradient noise has been found to improve learning for deep neural networks and other non-convex models~\citep{neelakantan2015neural,neelakantan2015adding,kurach2015neural,
kaiser2015neural,zeyer2016comprehensive}.
However, theoretical understanding of the value of gradient noise is still incomplete.
For example, \citet{ge2015escaping} show that by adding isotropic noise $w$ and by choosing a sufficiently small stepsize $\eta$, the iterative update:
\begin{align}\label{eqn:ge-sgd}
x \leftarrow x - \eta\,(\nabla f(x) + w)
\end{align}
is able to escape strict saddle points. Unfortunately, this approach, as well as the subsequent line of work on escaping saddle points~\citep{lee2016gradient, anandkumar2016efficient, agarwal2016finding}, doesn't guarantee escaping even shallow local minima.

Another line of work in Bayesian statistics studies the Langevin Monte Carlo (LMC) method \citep{roberts1996exponential}, which employs an alternative noise term. Given a function $f$, LMC performs the iterative update:
\begin{align}\label{eqn:simplified-LMC}
x \leftarrow x - \eta\,(\nabla f(x) + \sqrt{2/(\eta \xi)}\,w)\quad \mbox{where}\quad w\sim N(0, I),
\end{align}
where $\xi>0$ is a ``temperature'' hyperparameter. Unlike the bounded noise added in formula~\eqref{eqn:ge-sgd}, LMC adds a large noise term that scales with $\sqrt{1/\eta}$. With a small enough $\eta$, the noise dominates the gradient, enabling the algorithm to escape any local minimum. For empirical risk minimization, one might substitute the exact gradient $\nabla f(x)$ with a stochastic gradient, which gives the Stochastic Gradient Langevin Dynamics (SGLD) algorithm~\citep{welling2011bayesian}. It can be shown that both LMC and SGLD asymptotically converge to a stationary distribution $\mu(x)\propto e^{-\xi f(x)}$~\citep{roberts1996exponential, teh2016consistency}. As $\xi\to\infty$, the probability mass  of $\mu$ concentrates on the global minimum of the function~$f$, and the algorithm asymptotically converges to a neighborhood of the global minimum.

\begin{figure}
\centering
\includegraphics[width=0.45\textwidth]{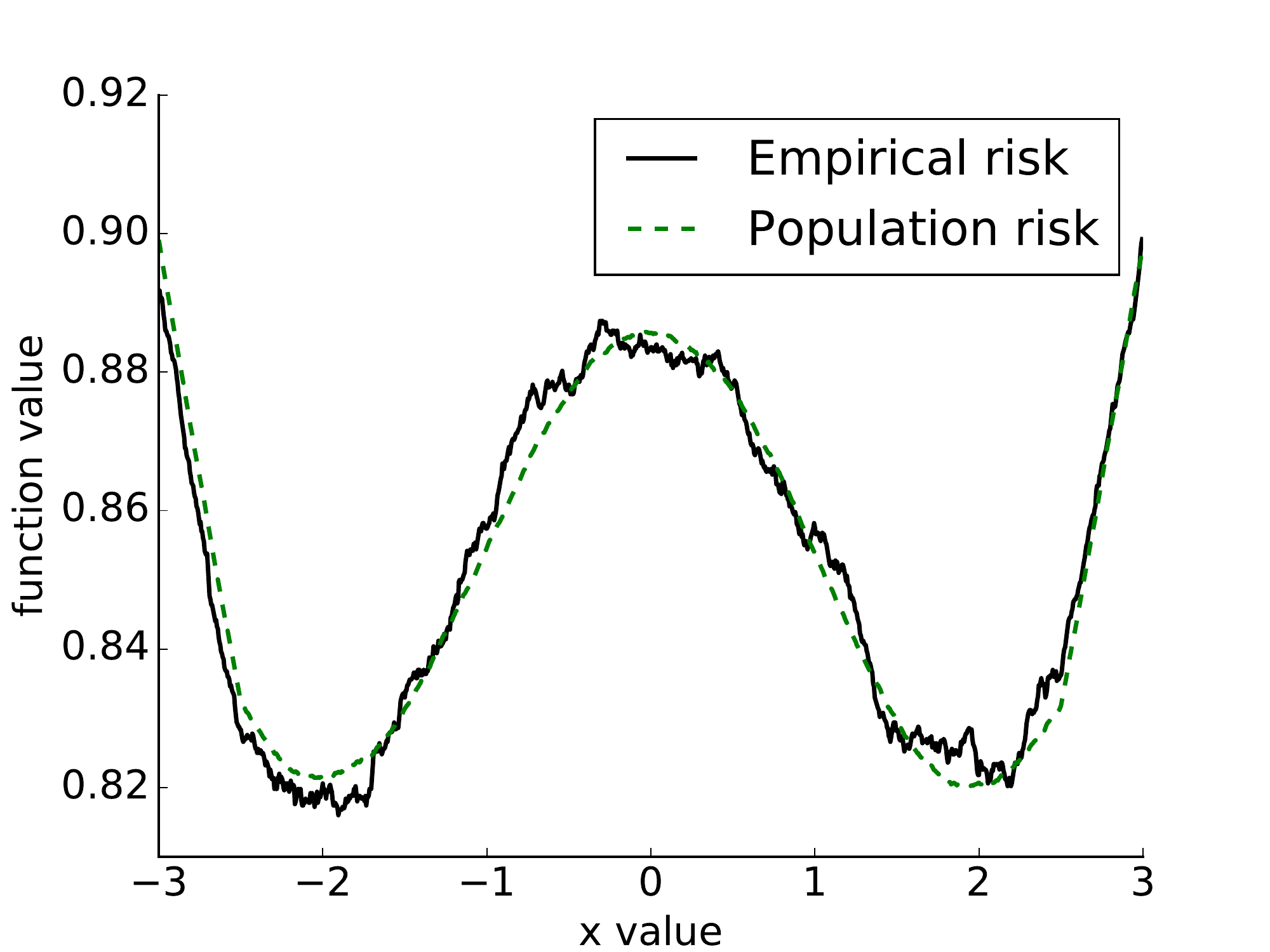}
\caption{\small Empirical risk (sample size $=5000$) versus population risk (sample size $\to\infty$) on one-dimensional zero-one losses. The two functions are uniformly close, but the empirical risk contains local minima that that are far worse than the population local minima.}
\label{fig:local-min}
\end{figure}

Despite asymptotic consistency, there is no theoretical guarantee that LMC is able to find the global minimum of a general non-convex function, or even a local minimum of it, in polynomial time. Recent works focus on bounding the mixing time (i.e.~the time for converging to $\mu$) of LMC and SGLD. \citet{bubeck2015finite}, \citet{dalalyan2016theoretical} and \citet{bonis2016guarantees} prove that on convex functions, LMC converges to the stationary distribution in polynomial time. On non-convex functions, however, an exponentially long mixing time is unavoidable in general. According to \citet{bovier2004metastability}, it takes the Langevin diffusion at least $e^{\Omega(\xi h)}$ time to escape a depth-$h$ basin of attraction. Thus, if the function contains multiple ``deep'' basins with $h=\Omega(1)$, then the mixing time is lower bounded by~$e^{\Omega(\xi)}$. 

In parallel work to this paper, \citet{raginsky2017nonconvex} upper bound the time of SGLD converging to an approximate global minimum of non-convex functions. They show that the upper bound is polynomial in the inverse of a quantity they call the \emph{uniform spectral gap}. Similar to the mixing time bound, in the presence of multiple local minima, the convergence time to an approximate global minimum can be exponential in dimension~$d$ and the temperature parameter~$\xi$.

\paragraph{Contributions} In this paper, we present an alternative analysis of SGLD algorithm.\footnote{The theory holds for the standard LMC algorithm as well.} Instead of bounding its mixing time, we bound the algorithm's \emph{hitting time} to an arbitrary set $U$ on a general non-convex function. The hitting time captures the algorithm's optimization efficiency, and more importantly, it enjoys polynomial rates for hitting appropriately chosen sets regardless of the mixing time, which could be exponential. We highlight two consequences of the generic bound: First, under suitable conditions, SGLD hits an approximate local minimum of $f$, with a hitting time that is polynomial in dimension $d$ and all hyperparameters; this extends the polynomial-time guarantees proved for convex functions~\citep{bubeck2015finite, dalalyan2016theoretical, bonis2016guarantees}.
Second, the time complexity bound is stable, in the sense that any $\order(1/\xi)$ perturbation in $\ell_\infty$-norm of the function~$f$ doesn't significantly change the hitting time. This second property is the main strength of SGLD: For any function $f$, if there exists another function $F$ such that $\norms{f-F}_\infty = \order(1/\xi)$, then we define the set $U$ to be the approximate local minima of $F$. The two properties together imply that even if we execute SGLD on function $f$, it hits an approximate local minimum of $F$ in polynomial time. In other words, SGLD is able to escape ``shallow'' local minima of $f$ that can be eliminated by slightly perturbing the function.

This stability property is useful in studying empirical risk minimization (ERM) in situations where the empirical risk $f$ is pointwise close to the population risk $F$, but has poor local minima that don't exist in the population risk. This phenomenon has been observed in statistical estimation with non-convex penalty functions~\citep{wang2014optimal, loh2015regularized}, as well as in minimizing the zero-one loss (see Figure~\ref{fig:local-min}). Under this setting, our result implies that SGLD achieves an approximate local minimum of the (smooth) \emph{population risk} in polynomial time, ruling out local minima that only exist in the empirical risk. It improves over recent results on non-convex optimization~\citep{ge2015escaping,lee2016gradient, anandkumar2016efficient, agarwal2016finding}, which compute approximate local minima only for the empirical risk.

As a concrete application, we prove a stronger learnability result for the problem of learning linear classifiers under the zero-one loss~\citep{arora1993hardness}, which involves non-convex and non-smooth empirical risk minimization.
Our result improves over the recent result of \citet{awasthi2015efficient}: the method of \citet{awasthi2015efficient} handles noisy data corrupted by a very small Massart noise (at most $1.8\times 10^{-6}$), while our algorithm handles Massart noise up to any constant less than $0.5$.
As a Massart noise of $0.5$ represents completely random observations,
we see that SGLD is capable of learning from very noisy data.

\paragraph{Techniques} The key step of our analysis is to define a positive quantity called the \emph{restricted Cheeger constant}. This quantity connects the hitting time of SGLD, the geometric properties of the objective function, and the stability of the time complexity bound. For an arbitrary function $f:K\to \R$ and an arbitrary set $V\subset K$, the restricted Cheeger constant is defined as the minimal ratio between the surface area of a subset $A\subset V$ and its volume with respect to a measure $\mu(x)\propto e^{-f(x)}$. We prove that the hitting time is polynomial in the inverse of the restricted Cheeger constant (Section~\ref{sec:generic-bounds}). The stability of the time complexity bound follows as a natural consequence of the definition of this quantity (Section~\ref{sec:define-rcc}). We then develop techniques to lower bound the restricted Cheeger constant based on geometric properties of the objective function (Section~\ref{sec:cheeger-lower-bounds}). 

\paragraph{Notation} For any positive integer $n$, we use $[n]$ as a
shorthand for the discrete set $\{1,2,\dots, n\}$.  For a rectangular
matrix $A$, let $\lncs{A}$ be its nuclear norm (i.e., the sum of singular values), and $\ltwos{A}$ be its spectral norm (i.e., the maximal singular value). For any point $x\in \R^d$ and an arbitrary set $V\subset \R^d$, we denote their Euclidean distance by $d(x,V)\defeq \inf_{y\in V} \ltwos{x-y}$. We use $\ball(x;r)$ to denote the Euclidean ball of radius $r$ that centers at point $x$.

\section{Algorithm and main results}
\label{sec:main-theorem}

In this section, we define the algorithm and the basic concepts, then present the main theoretical results of this paper.

\subsection{The SGLD algorithm}

\begin{algorithm}[t]
\small
\begin{flushleft}
{\textbf Input: }{Objective function $f:K \to \R$; hyperparameters $(\xi,\eta,\kmax,\DistBound)$.}
\begin{enumerate}
\item Initialize $x_0\in K$ by uniformly sampling from the parameter space $K$.
\item For each $k \in \{1,2,\dots,\kmax\}$: Sample $w\sim N(0, I_{d\times d})$. Compute a stochastic gradient $g(x_{k-1})$ such that $\E[g(x_{k-1})|x_{k-1}] = \nabla f(x_{k-1})$. Then update:
\begin{subequations}
\begin{align}
&y_k = x_{k-1} - \eta\, g(x_{k-1}) + \sqrt{2\eta/\xi}\,w;\label{eqn:sld-update-1}\\
&x_k = \left\{\begin{array}{ll}
	y_k & \mbox{if } y_k\in K\cap \ball(x_{k-1};\DistBound),\\
	x_{k-1} & \mbox{otherwise}.
\end{array}\right.	
\label{eqn:sld-update-2}
\end{align}
\end{subequations}
\end{enumerate}
\vspace{-10pt}
  {\textbf Output: }$\xhat = x_{k^*} \text{ where }  k^* \defeq \argmin_k \{f(x_k)\}$.\vspace{-10pt}
\end{flushleft}
\caption{Stochastic Gradient Langevin Dynamics}\label{alg:SLD}
\end{algorithm}

Our goal is to minimize a function $f$ in a compact parameter space $K\subset \R^d$. The SGLD algorithm~\citep{welling2011bayesian} is summarized in Algorithm~\ref{alg:SLD}. In step~\eqref{eqn:sld-update-1}, the algorithm performs SGD on the function~$f$, then adds Gaussian noise to the update. Step~\eqref{eqn:sld-update-2} ensures that the vector $x_k$ always belong to the parameter space, and is not too far from $x_{k-1}$ of the previous iteration.\footnote{The hyperparameter $\DistBound$ can be chosen large enough so that the constraint $y_k\in \ball(x_{k-1};\DistBound)$ is satisfied with high probability, see Theorem~\ref{theorem:sld-general}.} After $\kmax$ iterations, the algorithm returns a vector~$\xhat$. 
Although standard SGLD returns the last iteration's output, we study a variant of the algorithm which returns the best vector across all iterations. This choice is important for our analysis of hitting time. We note that evaluating $f(x_k)$ can be computationally more expensive than computing the stochastic gradient $g_k$, because the objective function is defined on the entire dataset, while the stochastic gradient can be computed via a single instance. Returning the best $x_k$ merely facilitates theoretical analysis and might not be necessary in practice.

Because of the noisy update, the sequence $(x_0,x_1,x_2,\dots)$ asymptotically converges to a stationary distribution rather than a stationary point~\citep{teh2016consistency}. Although this fact introduces challenges to the analysis,
we show that its non-asymptotic efficiency can be characterized by a positive quantity called the \emph{restricted Cheeger constant}.

\subsection{Restricted Cheeger constant}
\label{sec:define-rcc}

For any measurable function $f$, we define a probability measure $\mu_f$ whose density function is:
\begin{align}\label{eqn:define-prob-measure}
	\mu_f(x) \defeq \frac{e^{-f(x)}}{\int_K e^{-f(x)}dx} \propto e^{-f(x)} \quad\mbox{for all}\quad x\in K.
\end{align}
For any function $f$ and any subset $V\subset K$, we define the restricted Cheeger constant as:
\begin{align}\label{eqn:cheeger-constant}
	\C_{f}(V) \defeq \liminf_{\SmallVar\searrow 0} \inf_{A\subset V}  \frac{\mu_f(A_\SmallVar)-\mu_f(A)}{\SmallVar\, \mu_f(A)},\quad\mbox{where}\quad A_\SmallVar \defeq \{x\in K: d(x,A)\leq \SmallVar \}.
\end{align}
The restricted Cheeger constant generalizes the notion of the Cheeger isoperimetric constant~\citep{cheeger1969lower}, quantifying how well a subset of $V$ can be made as least connected as possible to the rest of the parameter space. The connectivity is measured by the ratio of the surface measure $\liminf_{\SmallVar\searrow 0} \frac{\mu_f(A_\SmallVar)-\mu_f(A)}{\SmallVar}$ to the set measure $\mu_f(A)$.
Intuitively, this quantifies the chance of escaping the set $A$ under the probability measure $\mu_f$.

\paragraph{Stability of restricted Cheeger constant} A property that will be important in the sequal is that the restricted Cheeger constant is stable under perturbations:
if we perturb $f$ by a small amount, then the values of $\mu_f$ won't change much, so that the variation on $\C_{f}(V)$ will also be small. More precisely, for functions $f_1$ and $f_2$ satisfying $\sup_{x\in K} |f_1(x) - f_2(x)| = \nu$, we have
\begin{align}\label{eqn:robustness-property}
	\C_{f_1}(V) = \liminf_{\SmallVar\searrow 0} \inf_{A\subset V}  \frac{\int_{A_\SmallVar\backslash A} e^{-f_1(x)}dx}{\SmallVar\int_A e^{-f_1(x)}dx}
	\geq \liminf_{\SmallVar\searrow 0} \inf_{A\subset V}  \frac{\int_{A_\SmallVar\backslash A} e^{-f_2(x)-\nu}dx}{\SmallVar\int_A e^{-f_2(x)+\nu}dx}
	= e^{-2\nu}\C_{f_2}(V),
\end{align}
and similarly $\C_{f_2}(V) \geq e^{-2\nu}\C_{f_1}(V)$. As a result, if two functions $f_1$ and $f_2$ are uniformly close, then we have $\C_{f_1}(V) \approx \C_{f_2}(V)$ for a constant $\nu$. This property enables us to lower bound $\C_{f_1}(V)$ by lower bounding the restricted Cheeger constant of an alternative function $f_2 \approx f_1$,
which might be easier to analyze.
%

\subsection{Generic non-asymptotic bounds}
\label{sec:generic-bounds}

We make several assumptions on the parameter space and on the objective function.

\begin{assumption}[parameter space and objective function]\label{assumption:conditions-for-szld}
~
\begin{itemize}
\item The parameter space $K$ satisfies: there exists $\hmax > 0$, such that for any $x\in K$ and any $h\leq \hmax$, the random variable $y\sim N(x, 2h I)$ satisfies $P(y\in K) \geq \frac{1}{3}$.


\item The function $f: K\to [0,B]$ is bounded, differentiable and $\Lf$-smooth in $K$, meaning that for any $x,y\in K$, we have $|f(y)-f(x)-\inprod{y-x}{\nabla f(x)}| \leq \frac{\Lf}{2}\ltwos{y-x}^2$.
\item The stochastic gradient vector $g(x)$ has sub-exponential tails: there exists $\bmax > 0$, $\Gg > 0$, such that given any $x\in K$ and any vector $u\in \R^d$ satisfying $\ltwos{u} \leq \bmax$, the vector $g(x)$ satisfies $\E\left[e^{\langle u, g(x)\rangle^2} \mid x \right] \leq \exp(\Gg^2 \ltwos{u}^2)$.
\end{itemize}
\end{assumption}

The first assumption states that the parameter space doesn't contain sharp corners, so that the update~\eqref{eqn:sld-update-2} won't be stuck at the same point for too many iterations. It can be satisfied, for example, by defining the parameter space to be an Euclidean ball and choosing $\hmax = o(d^{-2})$. The probability $1/3$ is arbitrary and can be replaced by any constant in $(0,1/2)$. The second assumption requires the function $f$ to be smooth.
We show how to handle non-smooth functions in Section~\ref{sec:erm} by appealing to the stability property of the restricted Cheeger constant discussed earlier.
The third assumption requires the stochastic gradient to have sub-exponential tails, which is a standard assumption in stochstic optimization.

\begin{theorem}\label{theorem:sld-general}
Assume that Assumption~\ref{assumption:conditions-for-szld} holds. For any subset $U\subset K$ and any $\xi,\rho,\SmallProbParam > 0$, there exist $\eta_0 > 0$ and $\kmax\in\mathbb{Z}^+$, such that if we choose any stepsize $\eta \in (0, \eta_0]$ and hyperparameter $\DistBound \defeq 4\sqrt{2\eta d/\xi}$, then with probability at least $1-\SmallProbParam$, SGLD after $k_{\max}$ iterations returns a solution $\xhat$ satisfying:
\begin{align}\label{eqn:szld-optimality-gap}
	f(\xhat) \leq \sup_{x:\,d(x,U)\leq \rho} f(x).
\end{align}
The iteration number $\kmax$ is bounded by
\begin{align}\label{eqn:kmax-bound}
	\kmax \leq \frac{M}{\min\{1,\C_{(\xi f)} (K\backslash U)\}^4}
\end{align}
where the numerator $M$ is polynomial in $(B, \Lf, \Gg, \log(1/\SmallProbParam), d, \xi, \eta_0/\eta, \hmax^{-1}, \bmax^{-1}, \rho^{-1})$. See Appendix~\ref{sec:subsection-for-the-main-theorem} for the explicit polynomial dependence.
\end{theorem}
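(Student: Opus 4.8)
The plan is to relate the discrete SGLD iteration to a continuous-time Langevin diffusion with stationary measure $\mu_{\xi f}$, and then argue that the hitting time of the set $U$ (or rather its $\rho$-enlargement) is controlled by the restricted Cheeger constant $\C_{\xi f}(K\setminus U)$. The high-level picture: as long as the iterate has not yet entered $U$, it is effectively evolving inside $K\setminus U$, and the escape rate from any region of $K\setminus U$ under the stationary measure is lower bounded — by definition — by the restricted Cheeger constant. So the process cannot stay in $K\setminus U$ for too long. First I would set up a continuous-time comparison: show that for $\eta$ small enough, the SGLD trajectory over an interval of length proportional to $1/\C^4$ stays close (in an appropriate sense, e.g. coupling or KL/Wasserstein control) to the Langevin diffusion $dX_t = -\nabla(\xi f)(X_t)\,dt + \sqrt{2}\,dB_t$, absorbing the errors from (i) discretization, (ii) the stochastic-gradient noise (controlled via the sub-exponential tail assumption), and (iii) the projection step in \eqref{eqn:sld-update-2} (controlled via the no-sharp-corners assumption on $K$ and the choice $\DistBound = 4\sqrt{2\eta d/\xi}$, which makes the projection a rare event). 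Each of these contributes a term that is polynomial in the listed parameters and vanishes as $\eta\to 0$, which is what pins down $\eta_0$.

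Next I would establish the hitting-time bound for the idealized diffusion. The key analytic step is a differential inequality for the unnormalized mass that the diffusion places on $K\setminus U$ (equivalently, on the complement of the $\rho$-neighborhood of $U$): using a conductance/Cheeger-type argument, the rate at which probability leaks out of any subset $A\subset K\setminus U$ is at least $\C_{\xi f}(K\setminus U)$ times its current mass, which integrates to an exponential decay $e^{-c\,\C\, t}$ of the probability of not having hit $U$. To convert this into a guarantee on $f(\xhat)$ rather than just on hitting $U$, I would note that once the trajectory enters $\{x : d(x,U)\le\rho\}$ at some time, the recorded best iterate satisfies $f(\xhat)\le \sup_{d(x,U)\le\rho} f(x)$; a short additional argument handles the discrepancy between "the continuous diffusion hits the neighborhood" and "some discrete iterate $x_k$ lands in it," again using that consecutive iterates move by at most $\DistBound = O(\sqrt{\eta d/\xi})$ so the neighborhood is not skipped over. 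Running the diffusion for time $T = \Theta(\log(1/\SmallProbParam)/\C)$ makes the failure probability at most $\SmallProbParam$, and translating $T$ back to a discrete iteration count via $\kmax \approx T/\eta$ gives the stated bound, with the extra powers of $\C$ in \eqref{eqn:kmax-bound} coming from the cumulative discretization/coupling error (which itself scales like a power of $1/\C$).

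I expect the main obstacle to be step (i)–(iii) above: making the continuous-to-discrete comparison quantitative and uniform in a way that only costs polynomial factors, while simultaneously keeping the exponential-in-$\C$ decay intact. In particular, the stochastic gradient can be unbounded, so the naive coupling of one SGLD step with one Euler–Maruyama step of the diffusion is delicate; one must exploit the sub-exponential tail bound $\E[e^{\langle u,g(x)\rangle^2}\mid x]\le e^{\Gg^2\|u\|^2}$ to show that with high probability over $\kmax$ steps no step is anomalously large, and then control the accumulated drift error. The projection step is similarly annoying: it introduces a bias toward the boundary, and one needs the first bullet of Assumption~\ref{assumption:conditions-for-szld} to guarantee that a Gaussian step of variance $2\eta/\xi$ lands back in $K$ with constant probability, so that the chain does not get "stuck" and the effective dynamics still tracks the reflected (or, with the $\DistBound$ constraint, essentially unconstrained on the relevant scale) diffusion. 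Once these technical comparisons are in place, the Cheeger-constant argument for the continuous process is relatively clean, and the stability remark \eqref{eqn:robustness-property} is not needed here — it is used only downstream when $U$ is chosen relative to a perturbed function $F$.
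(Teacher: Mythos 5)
Your proposal follows a genuinely different route from the paper's, and the difference is exactly the one the paper highlights in its conclusion: you pass through the continuous-time Langevin diffusion, whereas the paper never does. The paper's argument is entirely discrete. It constructs a lazy, time-reversible Metropolis--Hastings chain $\pi_f$ whose proposal density matches the SGLD step and whose stationary measure is exactly $\mu_{\xi f}$; it shows that the SGLD transition kernel is sandwiched multiplicatively between $\pi_f$ and a nearby chain $\pitilde_f$ via a single one-step kernel comparison (Lemma~\ref{lemma:closeness}); it proves a Lov\'asz--Simonovits-style hitting-time bound for $\pitilde_f$ purely in terms of the restricted conductance of $\pi_f$ (Lemma~\ref{lemma:hitting-time-general-bound}); and only then does the restricted Cheeger constant enter, as a lower bound on that conductance (Lemma~\ref{lemma:cheeger-to-conductance}). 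Finally, the actual SGLD trajectory is exhibited as a sub-sequence of the $\pitilde_f$ chain. The discretization-error and gradient-noise issues you flag as ``the main obstacle'' are absorbed into a one-step transition-kernel comparison, not a long-horizon coupling.

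The gap in your sketch is precisely that obstacle, and it is not merely technical. Bounding the Wasserstein or KL distance between the \emph{marginals} of SGLD and the diffusion at each time $t$ does not transfer a hitting-time guarantee: two processes can have close time-$t$ marginals for every $t$ and still have very different first-passage times to a thin target set. What you would actually need is a path-level coupling that keeps the two trajectories within distance $\rho$ of each other simultaneously over a horizon $T\asymp \C^{-\Theta(1)}$, uniformly over a non-convex potential with possibly unbounded stochastic gradients. That kind of long-horizon trajectory comparison is exactly what is unavailable for general non-convex $f$; continuous-time comparisons of this flavor (e.g.\ Raginsky et al.) instead control distance to the diffusion's stationary law via the spectral gap, which is exponentially small here. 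A smaller issue: the escape rate implied by a Cheeger-type argument is $\Theta(\C^2)$, not $\Theta(\C)$ --- the isoperimetric ratio enters the Dirichlet form quadratically --- so the decay of the survival probability should be $e^{-c\,\C^2 t}$, consistent with the paper's hitting time scaling as $1/\Phi_\pi^2(K\backslash U)$ in Lemma~\ref{lemma:hitting-time-general-bound} and with the final $\C^4$ in~\eqref{eqn:kmax-bound} after the $\eta$-constraint is imposed. Your remark that the stability property~\eqref{eqn:robustness-property} is not needed for this theorem and only enters downstream is correct.
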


Theorem~\ref{theorem:sld-general} is a generic result that applies to all optimization problems satisfying Assumption~\ref{assumption:conditions-for-szld}. The right-hand side of the bound~\eqref{eqn:szld-optimality-gap} is determined by the choice of $U$. If we choose $U$ to be the set of (approximate) local minima, and let $\rho > 0$ be sufficiently small, then $f(\xhat)$ will roughly be bounded by the worst local minimum.
The theorem permits $\xi$ to be arbitrary provided the stepsize $\eta$ is small enough. Choosing a larger $\xi$ means adding less noise to the SLGD update, which means that the algorithm will be more efficient at finding a stationary point, but less efficient at escaping local minima. Such a trade-off is captured by the restricted Cheeger constant in inequality~\eqref{eqn:kmax-bound} and will be rigorously studied in the next subsection.

The iteration complexity bound is governed by the restricted Cheeger constant. For any function~$f$ and any target set $U$ with a positive Borel measure, the
restricted Cheeger constant is strictly positive (see Appendix~\ref{sec:positive-cheeger}), so that with a small enough $\eta$, the algorithm always converges to the global minimum asymptotically. We remark that the SGD doesn't enjoy the same asymptotic optimality guarantee, because it uses a $O(\eta)$ gradient noise in contrast to SGLD's $O(\sqrt{\eta})$ one. Since the convergence theory requires a small enough $\eta$, we often have $\eta \ll \sqrt{\eta}$. the SGD noise is too conservative to allow the algorithm to escape local minima.

\begin{proofsketch} The proof of Theorem~\ref{theorem:sld-general} is fairly technical. We defer the full proof to Appendix~\ref{sec:proof-theorem-sld-general}, only sketching the basic proof ideas here. At a high level, we establish the theorem by bounding the hitting time of the Markov chain $(x_0, x_1,x_2,\dots)$ to the set $U_\rho \defeq\{x: d(x,U)\leq \rho\}$. Indeed, if some $x_k$ hits the set, then:
\[
	f(\xhat) \leq f(x_k) \leq \sup_{x\in U_\rho} f(x),
\]
which establishes the risk bound~\eqref{eqn:szld-optimality-gap}. 

In order to bound the hitting time, we construct a time-reversible Markov chain, and prove that its hitting time to $U_\rho$ is on a par with the original hitting time. To analyze this second Markov chain, we define a notion called the \emph{restricted conductance}, which measures how easily the Markov chain can transition between states within $K\backslash U_\rho$. This quantity is related to the notion of \emph{conductance} in the analysis of time-reversible Markov processes~\citep{lovasz1993random}, but the ratio between these two quantities can be exponentially large for non-convex $f$. We prove that the hitting time of the second Markov chain depends inversely on the restricted conductance, so that the problem reduces to lower bounding the restricted conductance.

Finally, we lower bound the restricted conductance by the restricted Cheeger constant. The former quantity characterizes the Markov chain, while the later captures the geometric properties of the function $f$. Thus, we must analyze the SGLD algorithm in depth to establish a connection between them. Once we prove this lower bound, putting all pieces together completes the proof. \end{proofsketch}


\subsection{Lower bounding the restricted Cheeger constant}
\label{sec:cheeger-lower-bounds}

In this subsection, we prove lower bounds on the restricted Cheeger constant $\C_{(\xi f)}(K\backslash U)$ in order to flesh out the iteration complexity bound of Theorem~\ref{theorem:sld-general}. We start with a lower bound for the class of convex functions:

\begin{proposition}
\label{prop:cheeger-for-convex-function}
Let $K$ be a $d$-dimensional unit ball. For any convex $\Gp$-Lipschitz continuous  function $f$ and any $\epsilon > 0$, let the set of $\epsilon$-optimal solutions be defined by:
\[	
	U \defeq \{x\in K:~ f(x) \leq \inf_{y\in K} f(y) + \epsilon\}. 
\]
Then for any $\xi \geq \frac{2 d \log(4\Gp/\epsilon)}{\epsilon}$, we have $\C_{(\xi f)}(K\backslash U) \geq 1$.
\end{proposition}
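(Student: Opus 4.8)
The plan is to recast $\C_{(\xi f)}(K\backslash U)\geq 1$ as a weighted isoperimetric inequality on $K\backslash U$ and prove that inequality with the divergence theorem, applied to a vector field that points everywhere toward a minimizer of $f$. Fix $x^*\in\argmin_{x\in K}f(x)$, so $f(x^*)=\inf_K f$ and $x^*\in U$; for $x\in K\backslash U$ the Lipschitz bound gives $f(x)-f(x^*)>\epsilon$ and hence $|x-x^*|\geq\epsilon/\Gp>0$. If $K\backslash U=\emptyset$ the claim is vacuous, so assume not; note this forces $\Gp>\epsilon/2$. For $A\subset K\backslash U$ with rectifiable boundary, $A_{\SmallVar}\backslash A$ is, up to $o(\SmallVar)$, the one-sided $\SmallVar$-collar of $\partial A$ taken inside $K$, so with $Z\defeq\int_K e^{-\xi f}$,
\[
  \mu_{(\xi f)}(A_{\SmallVar})-\mu_{(\xi f)}(A)=\frac{\SmallVar}{Z}\int_{\partial A\cap\mathrm{int}(K)}e^{-\xi f}\,d\mathcal{H}^{d-1}+o(\SmallVar),
\]
the part of $\partial A$ on $\partial K$ contributing nothing since $A_{\SmallVar}\subset K$. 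Thus it suffices to show $\int_{\partial A\cap\mathrm{int}(K)}e^{-\xi f}\,d\mathcal{H}^{d-1}\geq\int_A e^{-\xi f}\,dx$ for every such $A$; a general measurable $A$ reduces to this by the usual approximation.

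Take $K$ to be the unit ball centered at the origin and put $c_0\defeq 2/(\xi\epsilon-(d-1))$; the hypothesis $\xi\geq 2d\log(4\Gp/\epsilon)/\epsilon$ (with $\Gp>\epsilon/2$) keeps $c_0\in(0,1]$. On $K\backslash\{x^*\}$ define
\[
  V(x)\defeq -\,c_0\,e^{-\xi f(x)}\,\frac{x-x^*}{|x-x^*|}.
\]
Using $\mathrm{div}\bigl(\tfrac{x-x^*}{|x-x^*|}\bigr)=\tfrac{d-1}{|x-x^*|}$ and $\nabla e^{-\xi f}=-\xi e^{-\xi f}\nabla f$,
\[
  \mathrm{div}\,V = c_0\,e^{-\xi f(x)}\,\frac{\xi\,\nabla f(x)\cdot(x-x^*)-(d-1)}{|x-x^*|}.
\]
Convexity gives the elementary bound $\nabla f(x)\cdot(x-x^*)\geq f(x)-f(x^*)>\epsilon$ on $K\backslash U$, and $|x-x^*|\leq\mathrm{diam}(K)=2$, so (since $\xi\epsilon-(d-1)>0$) we get $\mathrm{div}\,V\geq c_0\,e^{-\xi f}\,\tfrac{\xi\epsilon-(d-1)}{2}=e^{-\xi f}$ on $K\backslash U$. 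Moreover $|V(x)|=c_0\,e^{-\xi f(x)}\leq e^{-\xi f(x)}$ everywhere, and on $\partial K$ the outward unit normal at $x$ is $x$ itself, so $V(x)\cdot x=-c_0\,e^{-\xi f}\,\tfrac{1-x\cdot x^*}{|x-x^*|}\leq 0$ because $x\cdot x^*\leq|x^*|\leq 1$.

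Since $\overline{A}\subset\overline{K\backslash U}$ and $x^*\notin\overline{K\backslash U}$ (in fact $d(x^*,K\backslash U)\geq\epsilon/\Gp$), the field $V$ is $C^1$ on a neighborhood of $\overline{A}$, so the divergence theorem yields
\[
  \int_A e^{-\xi f}\,dx\ \leq\ \int_A \mathrm{div}\,V\,dx\ =\ \int_{\partial A}V\cdot n\,d\mathcal{H}^{d-1}\ \leq\ \int_{\partial A\cap\mathrm{int}(K)}e^{-\xi f}\,d\mathcal{H}^{d-1},
\]
using $\mathrm{div}\,V\geq e^{-\xi f}$ on $A$ for the first inequality, and $V\cdot n\leq|V|\leq e^{-\xi f}$ on $\partial A\cap\mathrm{int}(K)$ together with $V\cdot n\leq 0$ on $\partial A\cap\partial K$ for the last. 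Dividing by $\int_A e^{-\xi f}=Z\,\mu_{(\xi f)}(A)$ and combining with the reduction above gives $\liminf_{\SmallVar\searrow 0}\tfrac{\mu_{(\xi f)}(A_{\SmallVar})-\mu_{(\xi f)}(A)}{\SmallVar\,\mu_{(\xi f)}(A)}\geq 1$, and taking the infimum over $A\subset K\backslash U$ proves $\C_{(\xi f)}(K\backslash U)\geq 1$.

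The step I expect to be most delicate is the geometric-measure-theoretic bookkeeping behind the first paragraph: justifying the reduction to sets with rectifiable boundary, identifying $\liminf_{\SmallVar}\SmallVar^{-1}(\mu_{(\xi f)}(A_{\SmallVar})-\mu_{(\xi f)}(A))$ with the $e^{-\xi f}$-weighted perimeter of $A$ \emph{relative to} $\mathrm{int}(K)$, and verifying that the collar along $\partial K$ is genuinely negligible — this is where the absence of sharp corners in the ball is used. A secondary, minor point is that $f$ is assumed only Lipschitz, so $\mathrm{div}\,V$ should be read distributionally; the clean fix is to mollify $f$ into a smooth convex $\Gp$-Lipschitz approximant $f_\nu$ with $\norms{f-f_\nu}_\infty\leq\nu$, run the argument for $f_\nu$ with a slightly shrunk $\epsilon$, and pass to the limit via the stability property~\eqref{eqn:robustness-property} of the restricted Cheeger constant. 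Everything else is elementary, convexity entering only through $\nabla f(x)\cdot(x-x^*)\geq f(x)-f(x^*)$.
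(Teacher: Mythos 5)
Your proof takes a genuinely different route from the paper's. The paper invokes the Lov\'asz--Simonovits isoperimetric inequality for log-concave measures on a convex body as a black box, and reduces everything to showing $\mu_{(\xi f)}(K\backslash U)\leq 1/2$, which it proves by comparing the $\mu_{(\xi f)}$-mass of a small ball of radius $\epsilon/(4\Gp)$ near a minimizer (where the density is large) against the mass of $K\backslash U$ (where the density is at most $e^{-\xi\epsilon}$ times the density at the minimizer). You instead build an explicit vector field $V=-c_0\,e^{-\xi f}(x-x^*)/|x-x^*|$ with $\dvg V\geq e^{-\xi f}$ on $K\backslash U$, $|V|\leq e^{-\xi f}$, and $V\cdot n\leq 0$ on $\partial K$, and read off the isoperimetric inequality from the divergence theorem. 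This is self-contained and is essentially a hand-instantiated version of the paper's own Lemma~\ref{lemma:vector-field} (with $\phi(x)=c_0(x-x^*)/|x-x^*|$, smoothed near $x^*$), a tool the paper develops for the non-convex propositions but does not use for this one; the trade-off is that you have to re-derive the isoperimetric inequality from scratch, including the geometric-measure-theoretic bookkeeping you correctly flag as delicate.

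Two concrete issues beyond the ones you already flag. First, the claim that the hypothesis forces $c_0=2/(\xi\epsilon-(d-1))\leq 1$ fails for $d\in\{1,2\}$: non-emptiness of $K\backslash U$ only yields $\Gp>\epsilon/2$, hence $\xi\epsilon>2d\log 2\approx 1.386d$, and $1.386d\geq d+1$ holds only for $d\geq 3$. For $d\leq 2$ and $\Gp$ close to $\epsilon/2$, the step $|V|\leq e^{-\xi f}$ is not justified, and you need a separate argument or a different constant. Second, in the definition of $\C_{(\xi f)}(K\backslash U)$ the infimum over $A$ sits \emph{inside} the $\liminf$ over $\SmallVar$, so it is not enough to show, for each fixed rectifiable $A$, that $\liminf_\SmallVar\SmallVar^{-1}(\mu(A_\SmallVar)-\mu(A))/\mu(A)\geq 1$; you need the $o(\SmallVar)$ error to be controllable uniformly over $A$, or some other device, before you can conclude $\liminf_\SmallVar\inf_A\geq 1$. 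The paper avoids this entirely because the Lov\'asz--Simonovits bound is non-asymptotic and holds for every $A$ and every $\SmallVar>0$ simultaneously.
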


The proposition shows that if we choose a big enough $\xi$, then $\C_{(\xi f)}(K\backslash U)$ will be lower bounded by a universal constant. The lower bound is proved based on an isoperimetric inequality for log-concave distributions. See Appendix~\ref{sec:proof-prop-cheeger-for-convex-function} for the proof.

For non-convex functions, directly proving the lower bound is difficult, because the definition of $\C_{(\xi f)}(K\backslash U)$ involves verifying the properties of all subsets $A\subset K\backslash U$. We start with a generic lemma that reduces the problem to checking properties of all points in $K\backslash U$.

\begin{lemma}\label{lemma:vector-field}
Consider an arbitrary continuously differentiable vector field $\phi: K\to \R^d$ and a positive number $\SmallVar_0 >0$ such that:
\begin{align}\label{eqn:vector-field-conditions}
	\ltwos{\phi(x)} \leq 1 \quad \mbox{and} \quad x - \SmallVar\, \phi(x) \in K \quad \mbox{for any} \quad \SmallVar \in [0, \SmallVar_0],~ x\in K.
\end{align}
For any continuously differentiable function $f: K\to \R$ and any subset $V\subset K$, the restricted Cheeger constant $\C_{f}(V)$ is lower bounded by
\[
	\C_{f}(V) \geq \inf_{x\in V} \Big\{\inprod{\phi(x)}{\nabla f(x)}-\dvg \phi(x) \Big\} \quad \mbox{where} \quad \dvg \phi(x) \defeq \sum_{i=1}^d \frac{\partial \phi_i(x)}{\partial x_i}.
\]
\end{lemma}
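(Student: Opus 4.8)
\begin{proofsketch}
The plan is to realize, through a flow map, the divergence‑theorem identity
\[
\int_A \bigl(\dvg\phi(x) - \inprod{\phi(x)}{\nabla f(x)}\bigr)e^{-f(x)}\,dx = \int_{\partial A}\inprod{\phi(x)}{n(x)}\,e^{-f(x)}\,d\sigma(x) \le \int_{\partial A}e^{-f(x)}\,d\sigma(x),
\]
(the last step using $\ltwos{\phi}\le 1$), but in a way that needs no regularity of $\partial A$. Write $c \defeq \inf_{x\in V}\bigl\{\inprod{\phi(x)}{\nabla f(x)} - \dvg\phi(x)\bigr\}$. Unwinding the definition of $\C_f(V)$, it suffices to produce a rate $r(\SmallVar)\to 0$ as $\SmallVar\searrow 0$, with threshold independent of $A$, such that for every measurable $A\subset V$ and every sufficiently small $\SmallVar>0$ one has $\mu_f(A_\SmallVar)\ge \bigl(1+\SmallVar(c-r(\SmallVar))\bigr)\mu_f(A)$; dividing by $\SmallVar\,\mu_f(A)$ (if $\mu_f(A)=0$ the ratio is $+\infty$ and may be ignored), taking $\inf_{A\subset V}$ and then $\liminf_{\SmallVar\searrow 0}$ then gives $\C_f(V)\ge c$.

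The flow map I would use is $T_\SmallVar(x)\defeq x-\SmallVar\,\phi(x)$. Condition~\eqref{eqn:vector-field-conditions} guarantees that for $\SmallVar\le\SmallVar_0$ the entire segment $\{x-s\phi(x):s\in[0,\SmallVar]\}$ lies in $K$, so $T_\SmallVar$ maps $K$ into $K$ and $f$ may be differentiated along these segments; and since $\ltwos{T_\SmallVar(x)-x}=\SmallVar\ltwos{\phi(x)}\le\SmallVar$, we get $T_\SmallVar(A)\subset A_\SmallVar$, hence $\mu_f(A_\SmallVar)\ge\mu_f(T_\SmallVar(A))$. Because $\phi$ is $C^1$ on the compact set $K$ it is Lipschitz and $D\phi$ is bounded, so for all small $\SmallVar$ the map $T_\SmallVar$ is injective with $DT_\SmallVar=I-\SmallVar D\phi$ invertible; it is therefore a $C^1$ diffeomorphism onto its image and the change‑of‑variables formula yields
\[
\mu_f(T_\SmallVar(A)) = \frac{1}{Z}\int_A e^{-f(T_\SmallVar(x))}\,\bigl|\det DT_\SmallVar(x)\bigr|\,dx,\qquad Z\defeq\int_K e^{-f(x)}\,dx.
\]

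It then remains to bound the integrand from below. A first‑order Taylor expansion along the segment gives $f(T_\SmallVar(x))=f(x)-\SmallVar\inprod{\nabla f(x)}{\phi(x)}+o(\SmallVar)$, and the standard determinant expansion gives $\det(I-\SmallVar D\phi(x))=1-\SmallVar\,\dvg\phi(x)+O(\SmallVar^2)$; by uniform continuity of $\nabla f$ and boundedness of $D\phi$ on $K$, both remainders are $o(\SmallVar)$ uniformly in $x$. Multiplying the two expansions and using $e^{t}=1+t+o(t)$,
\[
e^{-f(T_\SmallVar(x))}\bigl|\det DT_\SmallVar(x)\bigr| = e^{-f(x)}\Bigl(1+\SmallVar\bigl(\inprod{\phi(x)}{\nabla f(x)}-\dvg\phi(x)\bigr)+o(\SmallVar)\Bigr),
\]
again uniformly in $x$; for $x\in A\subset V$ the coefficient of $\SmallVar$ is at least $c$, so the integrand is at least $e^{-f(x)}\bigl(1+\SmallVar(c-r(\SmallVar))\bigr)$ for a suitable uniform $r(\SmallVar)\to 0$. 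Integrating over $A$ delivers the displayed inequality and hence the lemma. I expect the only real obstacle to be the bookkeeping in this last step: making the two first‑order expansions and the exponential's error uniform in $x$ (this is exactly where compactness of $K$ and the $C^1$ hypotheses are used), and verifying that $T_\SmallVar$ is a diffeomorphism for all small $\SmallVar$ so that the change of variables is legitimate; everything else is routine.
\end{proofsketch}
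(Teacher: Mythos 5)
Your proof is correct and follows essentially the same strategy as the paper's: push the set $A$ forward by the flow map $x\mapsto x-\SmallVar\,\phi(x)$, observe its image lies in $A_\SmallVar$, apply the change-of-variables formula, and expand $f\circ T_\SmallVar$ and $\det DT_\SmallVar$ to first order in $\SmallVar$ uniformly over the compact set $K$. The paper packages the bookkeeping a bit differently (it partitions $K$ into pieces of diameter $\SmallVar^2$ and bounds $\mu_f(\pi(A))/\mu_f(A)$ by the worst per-piece ratio), but your direct pointwise bound on the transformed integrand reaches the same estimate by the same underlying mechanism.
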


\begin{figure}
\centering
\includegraphics[width=0.35\textwidth]{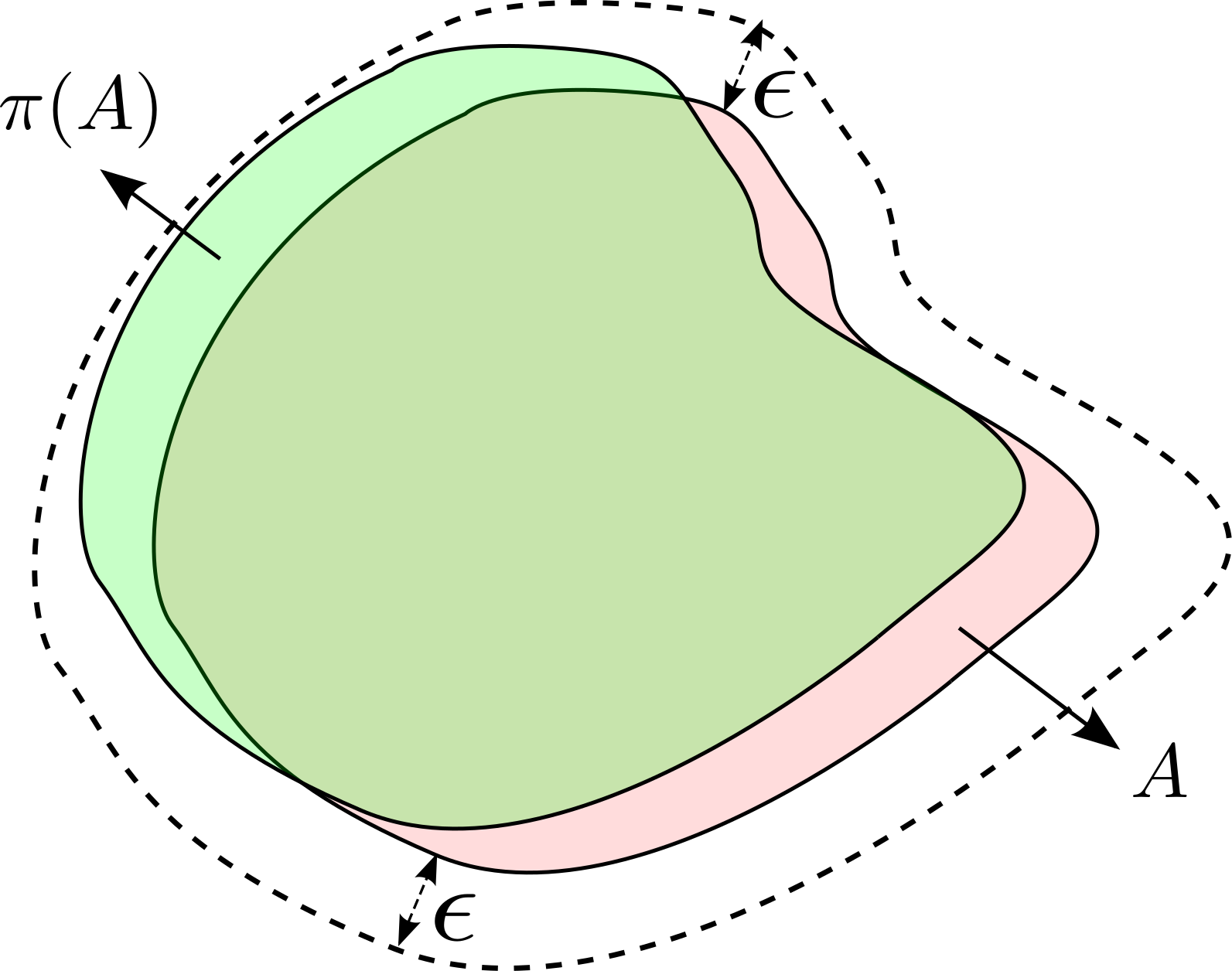}
\caption{\small Consider a mapping $\pi: x\mapsto x-\SmallVar \phi(x)$. If the conditions of Lemma~\ref{lemma:vector-field} hold, then we have $\pi(A)\subset A_\SmallVar$ and consequentely $\mu_f(\pi(A)) \leq \mu_f(A_\SmallVar)$. We use inequality~\eqref{eqn:cheeger-flow} to lower bound the restricted Cheeger constant.}\label{fig:flow}
\end{figure}

Lemma~\ref{lemma:vector-field} reduces the problem of lower bounding $\C_f(V)$ to the problem of finding a proper vector field $\phi$ and verifying its properties for all points $x\in V$. Informally, the quantity $\C_f(V)$ measures the chance of escaping the set $V$. The lemma shows that if we can construct an ``oracle'' vector field $\phi$, such that at every point $x\in V$ it gives the correct direction (i.e.~$-\phi(x)$) to escape $V$, but always stay in $K$, then we obtain a strong lower bound on $\C_f(V)$. This construction is merely for the theoretical analysis and doesn't affect the execution of the algorithm. 

\begin{proofsketch} The proof idea is illustrated in Figure~\ref{fig:flow}: by constructing a mapping $\pi: x\mapsto x-\SmallVar\phi(x)$ that satisfies the conditions of the lemma, we obtain $\pi(A)\subset A_\SmallVar$ for all $A\subset V$, and consequently $\mu_f(\pi(A)) \leq \mu_f(A_\SmallVar)$. Then we are able to lower bound the restricted Cheeger constant by:
\begin{align}\label{eqn:cheeger-flow}
	\C_f(V) \geq \liminf_{\SmallVar\searrow 0} \inf_{A\subset V}  \frac{\mu_f(\pi(A))-\mu_f(A)}{\SmallVar\, \mu_f(A)} = \liminf_{\SmallVar\searrow 0} \inf_{d A\subset V}  \frac{1}{\SmallVar} \Big(\frac{\mu_f(\pi(d A))}{\mu_f(d A)} - 1\Big),
\end{align}
where $d A$ is an infinitesimal of the set $V$. It can be shown that the right-hand side of inequality~\eqref{eqn:cheeger-flow} is equal to $\inf_{x\in V} \{\inprod{\phi(x)}{\nabla f(x)}-\dvg \phi(x)\}$, which establishes the lemma. See Appendix~\ref{sec:proof-lemma-vector-field} for a rigorous proof. 
\end{proofsketch}

\vspace{10pt}
Before demonstrating the applications of Lemma~\ref{lemma:vector-field}, we make several additional mild assumptions on the parameter space and on the function $f$.

\begin{assumption}[boundary condition and smoothness]\label{assumption:regularity-for-cheeger}
~~
\begin{itemize}
\item The parameter space $K$ is a $d$-dimensional ball of radius $r > 0$ centered at the origin. There exists $r_0 > 0$ such that for every point $x$ satisfying $\ltwos{x} \in [r-r_0, r]$, we have $\inprod{x}{\nabla f(x)} \geq \ltwos{x}$.
\item For some $\Gp, \Lp, \Hp > 0$, the function $f$ is third-order differentiable with $\ltwos{\nabla f(x)} \leq \Gp$, $\lncs{\nabla^2 f(x)} \leq \Lp$ and $\lncs{\nabla^2 f(x) - \nabla^2 f(y)} \leq \Hp \ltwos{x-y}$ for any $x,y\in K$.
\end{itemize}
\end{assumption}

The first assumption requires the parameter space to be an Euclidean ball and imposes a gradient condition on its boundary. This is made mainly for the convenience of theoretical analysis. We remark that for any function $f$, the condition on the boundary can be satisfied by adding a smooth barrier function $\rho(\ltwos{x})$ to it, where the function $\rho(t) = 0$ for any $t < r-2r_0$, but sharply increases on the interval $[r-r_0,r]$ to produce large enough gradients. The second assumption requires the function $f$ to be third-order differentiable. We shall relax the second assumption in Section~\ref{sec:erm}.

The following proposition describes a lower bound on $\C_{(\xi f)}(K\backslash U)$ when $f$ is a smooth function and the set $U$ consists of approximate stationary points. Although we shall prove a stronger result, the proof of this proposition is a good example for demonstrating the power of Lemma~\ref{lemma:vector-field}.

\begin{proposition}\label{prop:lower-bounded-gradient-norm}
Assume that Assumption~\ref{assumption:regularity-for-cheeger} holds. For any $\epsilon > 0$, define the set of $\epsilon$-approximate stationary points $U \defeq \{x\in K: \ltwos{\nabla f(x)} < \epsilon\}$. For any $\xi \geq 2 \Lp/\epsilon^2$, we have $\C_{(\xi f)}(K\backslash U) \geq \frac{\xi \epsilon^2}{2\Gp}$.
\end{proposition}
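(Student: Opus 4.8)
The plan is to invoke Lemma~\ref{lemma:vector-field} with the normalized gradient field $\phi(x) \defeq \nabla f(x)/\Gp$. Since $f$ is third-order differentiable, $\phi$ is continuously differentiable, and the bound $\ltwos{\nabla f(x)} \le \Gp$ from Assumption~\ref{assumption:regularity-for-cheeger} immediately gives $\ltwos{\phi(x)} \le 1$, which is the first hypothesis of the lemma.

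First I would verify the remaining hypothesis: that there is some $\SmallVar_0 > 0$ with $x - \SmallVar\,\phi(x) \in K$ for all $x \in K$ and $\SmallVar \in [0,\SmallVar_0]$. For an interior point with $\ltwos{x} < r - r_0$, this holds whenever $\SmallVar_0 \le r_0$, since $\ltwos{\phi(x)} \le 1$. For a point near the boundary, $\ltwos{x} \in [r - r_0, r]$, I would expand
\[
	\ltwos{x - \SmallVar\,\phi(x)}^2 = \ltwos{x}^2 - 2\SmallVar\,\inprod{x}{\phi(x)} + \SmallVar^2\ltwos{\phi(x)}^2 ,
\]
and use the boundary condition $\inprod{x}{\nabla f(x)} \ge \ltwos{x}$, which gives $\inprod{x}{\phi(x)} \ge \ltwos{x}/\Gp \ge (r-r_0)/\Gp > 0$; together with $\ltwos{\phi(x)}^2 \le 1$ this yields $\ltwos{x - \SmallVar\,\phi(x)} \le \ltwos{x} \le r$ as soon as $\SmallVar \le 2(r-r_0)/\Gp$. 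Hence $\SmallVar_0 \defeq \min\{r_0,\, 2(r-r_0)/\Gp\}$ works uniformly over $K$. This boundary check is the only step requiring any care, and the gradient condition in Assumption~\ref{assumption:regularity-for-cheeger} is tailored precisely to make it go through.

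With $\phi$ admissible, Lemma~\ref{lemma:vector-field} applied to the function $\xi f$ and the set $V = K\backslash U$ gives
\[
	\C_{(\xi f)}(K\backslash U) \;\ge\; \inf_{x \in K\backslash U}\Big\{ \inprod{\phi(x)}{\nabla(\xi f)(x)} - \dvg\phi(x) \Big\}.
\]
It then remains to bound the two terms. The inner product equals $\tfrac{\xi}{\Gp}\ltwos{\nabla f(x)}^2 \ge \tfrac{\xi\epsilon^2}{\Gp}$, using that $x \notin U$ forces $\ltwos{\nabla f(x)} \ge \epsilon$. The divergence is $\dvg\phi(x) = \tfrac{1}{\Gp}\operatorname{tr}(\nabla^2 f(x))$, and since the trace of a symmetric matrix is bounded in absolute value by its nuclear norm, $|\dvg\phi(x)| \le \lncs{\nabla^2 f(x)}/\Gp \le \Lp/\Gp$. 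Combining, $\C_{(\xi f)}(K\backslash U) \ge (\xi\epsilon^2 - \Lp)/\Gp$, and the assumption $\xi \ge 2\Lp/\epsilon^2$ gives $\Lp \le \xi\epsilon^2/2$, hence $\C_{(\xi f)}(K\backslash U) \ge \tfrac{\xi\epsilon^2}{2\Gp}$, which is the claim. The main obstacle is entirely in the first part — constructing an admissible vector field and checking the boundary condition of Lemma~\ref{lemma:vector-field}; once that is in place the estimate is a short computation.
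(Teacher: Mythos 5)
Your proof is correct and takes essentially the same approach as the paper: the same vector field $\phi(x)=\nabla f(x)/\Gp$, the same application of Lemma~\ref{lemma:vector-field}, and the same bounds on the inner-product and divergence terms. The only difference is that you spell out the boundary check (that $x-\SmallVar\phi(x)\in K$) in detail, whereas the paper dismisses it as ``easy to verify''; your check is correct.
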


\begin{proof}
Recall that $\Gp$ is the Lipschitz constant of function $f$.
Let the vector field be defined by $\phi(x) \defeq \frac{1}{\Gp}\nabla f(x)$, then we have $\ltwos{\phi(x)} \leq 1$. By Assumption~\ref{assumption:regularity-for-cheeger}, it is easy to verify that the conditions of Lemma~\ref{lemma:vector-field} hold.
For any $x\in K\backslash U$, the fact that $\ltwos{\nabla f(x)} \geq \epsilon$ implies:
\[
	\inprod{\phi(x)}{\xi \nabla f(x)} = \frac{\xi}{\Gp} \ltwos{\nabla f(x)}^2 \geq \frac{\xi \epsilon^2}{\Gp}.
\]
Recall that $\Lp$ is the smoothness parameter. By Assumption~\ref{assumption:regularity-for-cheeger}, the divergence of $\phi(x)$ is upper bounded by $\dvg \phi(x) = \frac{1}{\Gp}\tr(\nabla^2 f(x)) \leq \frac{1}{\Gp}\lncs{\nabla^2 f(x)} \leq \frac{\Lp}{\Gp}$. Consequently, if we choose $\xi \geq 2 \Lp/\epsilon^2$ as assumed, then we have:
\[
	\inprod{\phi(x)}{\xi \nabla f(x)} - \dvg \phi(x)  \geq \frac{\xi \epsilon^2}{\Gp} - \frac{\Lp}{\Gp} \geq \frac{\xi \epsilon^2}{2\Gp}.
\]
Lemma~\ref{lemma:vector-field} then establishes the claimed lower bound.
\end{proof}

Next, we consider approximate local minima~\citep{nesterov2006cubic,agarwal2016finding}, which rules out local maxima and strict saddle points. For an arbitrary $\epsilon > 0$, the set of $\epsilon$-approximate local minima is defined by:
\begin{align}\label{eqn:define-approx-local-min}
U \defeq \{x\in K: \ltwos{\nabla f(x)} < \epsilon \mbox{~~and~~} \nabla^2 f(x) \succeq -\sqrt{\epsilon} I\}.
\end{align}
We note that an approximate local minimum is not necessarily close to any local minimum of $f$. However, if we assume in addition the the function satisfies the (robust) strict-saddle property~\citep{ge2015escaping, lee2016gradient}, then any point $x\in U$ is guaranteed to be close to a local minimum. Based on definition~\eqref{eqn:define-approx-local-min}, we prove a lower bound for the set of approximate local minima.

\begin{proposition}\label{prop:lower-bounded-strict-saddle}
Assume that Assumption~\ref{assumption:regularity-for-cheeger} holds. For any $\epsilon > 0$, let $U$ be the set of $\epsilon$-approximate local minima. For any $\xi$ satisfying 
\begin{align}\label{eqn:strict-saddle-xi-constraint}
\xi \geq \widetilde \order(1)\cdot  \frac{\max\{1, \Gp^{5/2}\Lp^5, \Hp^{5/2}\}}{\epsilon^2\Gp^{1/2}},
\end{align}
we have $\C_{(\xi f)}(K\backslash U) \geq \frac{\sqrt{\epsilon}}{8(2\Gp+1)\Gp}$. The notation $\widetilde \order(1)$ hides a poly-logarithmic function of $(\Lp,1/\epsilon)$.
\end{proposition}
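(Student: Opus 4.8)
The plan is to invoke Lemma~\ref{lemma:vector-field}, which reduces the whole statement to exhibiting a single continuously differentiable vector field $\phi:K\to\R^d$ with $\|\phi(x)\|\le1$, compatible with the boundary condition of Assumption~\ref{assumption:regularity-for-cheeger}, such that $\langle\phi(x),\xi\nabla f(x)\rangle-\dvg\phi(x)\ge\frac{\sqrt{\epsilon}}{8(2\Gp+1)\Gp}$ for every $x\in K\setminus U$. Write $K\setminus U=U_1\cup U_2$ with $U_1=\{x:\|\nabla f(x)\|\ge\epsilon\}$ and $U_2=\{x:\lambda_{\min}(\nabla^2 f(x))<-\sqrt{\epsilon}\}$; the field must be built from one ingredient on each of these regions and then smoothly interpolated.

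On $U_1$ I would reuse Proposition~\ref{prop:lower-bounded-gradient-norm}: with $\phi=\nabla f/\Gp$ one has $\langle\phi,\xi\nabla f\rangle=\xi\|\nabla f\|^2/\Gp\ge\xi\epsilon^2/\Gp$, which beats $|\dvg\phi|=|\tr(\nabla^2 f)|/\Gp\le\Lp/\Gp$ once $\xi\gtrsim\Lp/\epsilon^2$. On $U_2$ this field is useless: even when one eigenvalue of $\nabla^2 f(x)$ sits below $-\sqrt{\epsilon}$, the trace — hence $\dvg(\nabla f/\Gp)$ — can be of order $+\Lp$. Instead I would use a field of the form $\phi(x)=\sigma(\langle v(x),\nabla f(x)\rangle)\,v(x)$, where $v(x)$ is a unit eigenvector for the smallest eigenvalue of $\nabla^2 f(x)$ and $\sigma$ is a smooth odd sigmoid with $\sigma(0)=0$, $\sigma'\ge 0$, $|\sigma|\le 1$. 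Two features then help: $\langle\phi,\nabla f\rangle=\sigma(\langle v,\nabla f\rangle)\langle v,\nabla f\rangle\ge 0$ automatically, so the $\xi\langle\phi,\nabla f\rangle$ term is never harmful; and a short computation gives $\dvg\phi=\sigma'(\cdot)\,(\lambda_{\min}(\nabla^2 f(x))+\mathrm{err})+\sigma(\cdot)\,\dvg v$, whose leading term $\sigma'(\cdot)\lambda_{\min}$ is negative on $U_2$ and hence contributes a positive amount of order $\sqrt{\epsilon}$ to $\langle\phi,\xi\nabla f\rangle-\dvg\phi$ — precisely the order appearing in the conclusion. Finally I would glue the two fields with a smooth partition of unity in (smoothed versions of) $\|\nabla f(x)\|$ and $\lambda_{\min}(\nabla^2 f(x))$, rescaled so that $\|\phi\|\le1$ everywhere, and take the infimum over $x\in K\setminus U$.

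Three places demand care. (i) Eigenvectors of $\nabla^2 f(x)$ are not globally $C^1$ where eigenvalues collide, so $v(x)$ must be replaced by a smooth spectral surrogate — e.g.\ a normalized image of a smooth filter that is $\approx1$ on the part of the spectrum below $-\tfrac{1}{2}\sqrt{\epsilon}$ and $\approx0$ above; bounding $\|\nabla v\|$ and $|\dvg v|$ for this surrogate (these control $\mathrm{err}$ above) is exactly where the third-order bound $\Hp$ of Assumption~\ref{assumption:regularity-for-cheeger} enters, since $\Hp$ is the Lipschitz constant of $x\mapsto\nabla^2 f(x)$. (ii) The divergence term must remain $\le-c\sqrt{\epsilon}$ throughout the negative-curvature region even where $\nabla f$ is small and $\phi\approx0$; there the estimate reduces to $-\dvg\phi\approx-\sigma'(0)\,\lambda_{\min}(\nabla^2 f)\gtrsim\sqrt{\epsilon}$ up to the surrogate errors, so one needs $\sigma'(0)$ bounded below and those errors dominated, which pins down the allowed transition widths of $\sigma$ and of the filter. (iii) In the gluing zone, where $\|\nabla f\|$ and $\lambda_{\min}(\nabla^2 f)$ are simultaneously near their thresholds, the derivatives of the partition functions scale inversely with the widths of the transition windows, so the cross terms they generate are what force both the $\widetilde\order(1)$ factor and the extra powers of $\Gp,\Lp,\Hp$ in the hypothesis~\eqref{eqn:strict-saddle-xi-constraint} on $\xi$.

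I expect the main obstacle to be the quantitative control of the negative-curvature surrogate: the error terms in $\dvg\phi$ are governed by $\|\nabla v\|$ and $|\dvg v|$, which for a true eigenvector blow up like the inverse eigengap, so the argument hinges on trading a controlled loss in the ``eigenvector quality'' of the surrogate against a uniform bound on its derivatives, and then verifying — after the partition-of-unity rescaling — that the surviving term of order $\sqrt{\epsilon}$ still dominates every other contribution on all of $K\setminus U$ under the stated lower bound on $\xi$. Once such a $\phi$ is available, Lemma~\ref{lemma:vector-field} closes the proof at once.
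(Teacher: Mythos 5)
Your strategy is correct in its essentials and captures the two mechanisms the proof has to exploit: escape via the gradient where $\|\nabla f\|$ is large, and escape via a strictly negative divergence contribution from the negative-curvature subspace where the gradient is small. You also correctly anticipate that a raw eigenvector field is not $C^1$, that a smooth spectral filter is the way around it, and that $\Hp$ is what makes the derivative of the filter controllable. That is the heart of the argument, and you found it. But the specific construction you propose diverges from the paper's in a way that hides two genuine difficulties.

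First, the paper does not glue two fields with a partition of unity; it \emph{adds} them. Concretely it sets $g(x)\defeq\ltwos{\nabla f(x)}$ and $\phi(x)\defeq\frac{1}{(2\Gp+1)\Gp}\,A(x)\nabla f(x)$ with
$A(x)=2\sqrt{\Gp\,g(x)}\,I+\Phi\bigl(\tfrac{-\sqrt{\epsilon}I-\nabla^2 f(x)}{\sigma\sqrt{\epsilon}}\bigr)$,
where $\Phi$ is the Gaussian CDF applied to a symmetric matrix through its power series, and $\sigma$ is a poly-log tuning parameter. Both summands are everywhere defined and their divergences are estimated on all of $K\backslash U$; the case split $g(x)<\epsilon$ versus $g(x)\geq\epsilon$ happens only in the \emph{estimate}, not in the definition of $\phi$. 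Your partition-of-unity plan re-introduces derivatives of the bump functions, which in turn inject $1/(\text{window width})$ factors that you would have to absorb into the constants; the paper's additive formulation simply has no such cross terms, and this is why its $\xi$ threshold can be read off from a single global bound on $\dvg(A(x)\nabla f(x))$.

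Second, and more seriously, your negative-curvature field is $\sigma(\langle v(x),\nabla f(x)\rangle)\,v(x)$ with $v(x)$ a \emph{normalized} surrogate of the bottom eigenvector. Normalization is exactly where the eigengap dependence you wanted to avoid sneaks back in: if the surrogate is $\chi(\nabla^2 f)u/\ltwos{\chi(\nabla^2 f)u}$ (for a smooth spectral filter $\chi$ and some reference vector $u$), the denominator can vanish, and its derivative is again of order one over a gap. The paper side-steps this entirely by never forming a unit direction: it uses the \emph{unnormalized} vector $\Phi(\cdot)\nabla f(x)$. Because the matrix function is applied to $\nabla^2 f$ and then multiplied by $\nabla f$, the field shrinks gracefully when either $\nabla f$ is small or $\nabla f$ has little mass in the negative subspace, and its divergence is controlled through the chain rule and the third-order bound $\nucnorm{T(x)[\nabla f(x)]}\leq \Hp\,g(x)$, with no denominators in sight. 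Relatedly, the paper's specific choice $2\sqrt{\Gp g(x)}I$ (rather than a constant multiple of $I$) is what lets it verify the boundary condition $x-\SmallVar\phi(x)\in K$ of Lemma~\ref{lemma:vector-field}: near $\ltwos{x}\in[r-r_0,r]$ one has $\langle x,\nabla f(x)\rangle\geq\ltwos{x}$, hence $g(x)\geq 1$, and the gradient-aligned term with its $\sqrt{g(x)}$ weight dominates the possibly outward-pointing filtered term. Your sketch does not address this admissibility check for the second field, and with a partition of unity it would have to be re-derived separately for the glued field.

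So: same reduction, same two mechanisms, same role for $\Hp$; but the paper's additive, unnormalized, functional-calculus construction eliminates the two places (surrogate normalization, partition-of-unity cross terms) where your plan currently has real unfinished work rather than routine bookkeeping. If you drop the normalization, replace $v$ by $\Phi\bigl(\tfrac{-\sqrt{\epsilon}I-\nabla^2 f}{\sigma\sqrt{\epsilon}}\bigr)\nabla f$, and add it to $2\sqrt{\Gp g(x)}\nabla f$ instead of gluing, you recover the paper's proof.
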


\begin{proofsketch}
Proving Proposition~\ref{prop:lower-bounded-strict-saddle} is significantly more challenging than proving Proposition~\ref{prop:lower-bounded-gradient-norm}. From a high-level point of view, we still construct a vector field $\phi$, then lower bound the expression $\inprod{\phi(x)}{\xi \nabla f(x)} - \dvg \phi(x)$ for every point $x\in K\backslash U$ in order to apply Lemma~\ref{lemma:vector-field}. However, there exist saddle points in the set $K\backslash U$, such that the inner product $\inprod{\phi(x)}{\xi \nabla f(x)}$ can be very close to zero. For these points, we need to carefully design the vector field so that the  term $\dvg \phi(x)$ is strictly negative and bounded away from zero. To this end, 
we define $\phi(x)$ to be the sum of two components. The first component aligns with the gradient $\nabla f(x)$. The second component aligns with a projected vector $\Pi_x(\nabla f(x))$, which projects $\nabla f(x)$ to the linear subspace spanned by the eigenvectors of $\nabla^2 f(x)$ with negative eigenvalues. It can be shown that the second component produces a strictly negative divergence in the neighborhood of strict saddle points. 
See Appendix~\ref{sec:proof-prop-lower-bounded-strict-saddle} for the complete proof.
\end{proofsketch}

\subsection{Polynomial-time bound for finding an approximate local minimum}

Combining Proposition~\ref{prop:lower-bounded-strict-saddle} with Theorem~\ref{theorem:sld-general}, we conclude that SGLD finds an approximate local minimum of the function $f$ in polynomial time, assuming that $f$ is smooth enough to satisfy Assumption~\ref{assumption:regularity-for-cheeger}.

\begin{corollary}\label{coro:nu-non-convex}
Assume that Assumptions~\ref{assumption:conditions-for-szld},\ref{assumption:regularity-for-cheeger} hold. For an arbitrary $\epsilon > 0$, let $U$ be the set of $\epsilon$-approximate local minima. For any $\rho, \SmallProbParam > 0$, there exists a large enough $\xi$ and hyperparameters $(\eta,\kmax,\DistBound)$ such that with probability at least $1-\SmallProbParam$, SGLD returns a solution $\xhat$ satisfying 
\[f(\xhat) \leq \sup_{x:\,d(x,U)\leq \rho} f(x).
\]
The iteration number $\kmax$ is bounded by a polynomial function of all hyperparameters in the assumptions as well as $(\epsilon^{-1}, \rho^{-1},\log(1/\SmallProbParam))$.
\end{corollary}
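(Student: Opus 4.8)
The plan is to combine Proposition~\ref{prop:lower-bounded-strict-saddle} with Theorem~\ref{theorem:sld-general} in the most direct way possible, treating the corollary as essentially a bookkeeping exercise that checks the hypotheses of the theorem are met and that the resulting iteration bound is genuinely polynomial. First I would fix $\epsilon, \rho, \SmallProbParam > 0$ and let $U$ be the set of $\epsilon$-approximate local minima as in~\eqref{eqn:define-approx-local-min}. Since Assumption~\ref{assumption:regularity-for-cheeger} is in force, Proposition~\ref{prop:lower-bounded-strict-saddle} applies: I choose $\xi$ to be any value satisfying the constraint~\eqref{eqn:strict-saddle-xi-constraint}, so that $\C_{(\xi f)}(K\backslash U) \geq \frac{\sqrt{\epsilon}}{8(2\Gp+1)\Gp} =: c_0 > 0$. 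Note $c_0$ depends only on $\epsilon$ and $\Gp$, both of which are either the target accuracy or a quantity appearing in the assumptions.

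Next I would invoke Theorem~\ref{theorem:sld-general} with this set $U$, this $\xi$, and the given $\rho, \SmallProbParam$. The theorem guarantees the existence of $\eta_0 > 0$ and $\kmax \in \mathbb{Z}^+$ such that, with stepsize $\eta \in (0,\eta_0]$ and $\DistBound = 4\sqrt{2\eta d/\xi}$, SGLD returns $\xhat$ with $f(\xhat) \leq \sup_{x:\,d(x,U)\leq\rho} f(x)$ with probability at least $1-\SmallProbParam$ — which is exactly the claimed conclusion. It remains only to argue the iteration bound is polynomial. By~\eqref{eqn:kmax-bound}, $\kmax \leq M / \min\{1, \C_{(\xi f)}(K\backslash U)\}^4 \leq M / \min\{1, c_0\}^4$. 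Since $c_0$ is a fixed (inverse-polynomial in $\epsilon$, and polynomial in $1/\Gp$) positive constant, $\min\{1,c_0\}^{-4}$ is polynomial in $\epsilon^{-1}$ and $\Gp$. And $M$ is, per the theorem statement, polynomial in $(B, \Lf, \Gg, \log(1/\SmallProbParam), d, \xi, \eta_0/\eta, \hmax^{-1}, \bmax^{-1}, \rho^{-1})$; choosing $\eta = \eta_0$ makes the ratio $\eta_0/\eta = 1$, and all the remaining arguments are either hyperparameters from Assumptions~\ref{assumption:conditions-for-szld} and~\ref{assumption:regularity-for-cheeger} or belong to the list $(\epsilon^{-1},\rho^{-1},\log(1/\SmallProbParam))$. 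The only subtlety is that $\xi$ itself appears in $M$, so I must check that the particular $\xi$ chosen to satisfy~\eqref{eqn:strict-saddle-xi-constraint} is itself polynomial in the allowed quantities — which it is, since the right-hand side of~\eqref{eqn:strict-saddle-xi-constraint} is $\widetilde\order(1)\cdot \max\{1,\Gp^{5/2}\Lp^5,\Hp^{5/2}\}/(\epsilon^2\Gp^{1/2})$, a polynomial (up to log factors) in $\Gp, \Lp, \Hp, \epsilon^{-1}$. Composing polynomials yields a polynomial, so $\kmax$ is polynomial in all the stated quantities.

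The main obstacle — and it is a mild one — is the interdependence between $\xi$ and $M$: one must take the explicit polynomial form of $M$ from Appendix~\ref{sec:subsection-for-the-main-theorem}, substitute the chosen value of $\xi$, and verify that no super-polynomial blow-up occurs. Because both the expression for $\xi$ and the dependence of $M$ on its arguments are polynomial (modulo poly-logarithmic factors absorbed by $\widetilde\order$), this substitution is routine. A secondary point worth stating carefully is that $\xi$ must be chosen \emph{before} $\eta_0$ and $\kmax$, since Theorem~\ref{theorem:sld-general} produces $\eta_0$ and $\kmax$ as functions of the inputs including $\xi$; the logical order is: fix $\epsilon,\rho,\SmallProbParam$, then pick $\xi$ via Proposition~\ref{prop:lower-bounded-strict-saddle}, then apply Theorem~\ref{theorem:sld-general} to obtain $(\eta_0, \kmax)$, then set $\eta = \eta_0$ and $\DistBound = 4\sqrt{2\eta d/\xi}$. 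With this ordering the argument closes without circularity.
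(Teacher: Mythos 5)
Your proposal is correct and follows exactly the route the paper uses: the paper's "proof" of Corollary~\ref{coro:nu-non-convex} is the single sentence "Combining Proposition~\ref{prop:lower-bounded-strict-saddle} with Theorem~\ref{theorem:sld-general}," and you have fleshed out precisely that combination, including the right logical ordering (choose $\xi$ via~\eqref{eqn:strict-saddle-xi-constraint}, then invoke Theorem~\ref{theorem:sld-general} to get $(\eta_0,\kmax)$, then set $\eta=\eta_0$) and the composition-of-polynomials check that both $\xi$ and $1/\C_{(\xi f)}(K\backslash U)$ are polynomial in the allowed quantities. No gaps.
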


Similarly, we can combine Proposition~\ref{prop:cheeger-for-convex-function} or Proposition~\ref{prop:lower-bounded-gradient-norm} with Theorem~\ref{theorem:sld-general}, to obtain complexity bounds for finding the global minimum of a convex function, or finding an approximate stationary point of a smooth function. 

Corollary~\ref{coro:nu-non-convex} doesn't specify any upper limit on the temperature parameter $\xi$. As a result, SGLD can be stuck at the worst approximate local minima. It is important to note that the algorithm's capability of escaping certain local minima relies on a more delicate choice of $\xi$. Given objective function $f$, we consider an arbitrary smooth function $F$ such that $\norms{f-F}_\infty \leq 1/\xi$. By Theorem~\ref{theorem:sld-general}, for any target subset $U$, the hitting time of SGLD can be controlled by lower bounding the restricted Cheeger constant $\C_{\xi f}(K\backslash U)$. By the stability property~\eqref{eqn:robustness-property}, it is equivalent to lower bounding $\C_{\xi F}(K\backslash U)$ because $f$ and $F$ are uniformly close. If $\xi > 0$ is chosen large enough (w.r.t.~smoothness parameters of $F$), then the lower bound established by Proposition~\ref{prop:lower-bounded-strict-saddle} guarantees a polynomial hitting time to the set $U_F$ of approximate local minima of~$F$. Thus, SGLD can efficiently escape all local minimum of $f$ that lie outside of $U_F$. Since the function $F$ is arbitrary, it can be thought as a favorable perturbation of $f$ such that the set $U_F$ eliminates as many local minima of $f$ as possible. The power of such perturbations are determined by their maximum scale, namely the quantity $1/\xi$. Therefore, it motivates choosing the smallest possible $\xi$ whenever it satisfies the lower bound in Proposition~\ref{prop:lower-bounded-strict-saddle}. 

The above analysis doesn't specify any concrete form of the function $F$. In Section~\ref{sec:erm}, we present a concrete analysis where the function $F$ is assumed to be the population risk of empirical risk minimization (ERM). We establish sufficient conditions under which SGLD efficiently finds an approximate local minima of the population risk.


\section{Applications to empirical risk minimization}
\label{sec:erm}

In this section, we apply SGLD to a specific family of functions, taking the form:
\[
f(x) \defeq \frac{1}{n} \sum_{i=1}^n \ell(x; a_i) \quad\mbox{for}\quad x\in K.
\]
These functions are generally referred as the \emph{empirical risk} in the statistical learning literature. Here, every instance $a_i\in \mathcal{A}$ is i.i.d.~sampled from a distribution $\mathcal{P}$, and the function $\ell: \R^d \times \mathcal{A}\to \R$ measures the loss on individual samples. We define \emph{population risk} to be the function $F(x) \defeq \E_{x\sim \mathcal{P}}[\ell(x,a)]$. 

We shall prove that under certain conditions, SGLD finds an approximate local minimum of the (presumably smooth) population risk in polynomial time, even if it is executed on a non-smooth empirical risk. More concretely, we run SGLD on a smoothed approximation of the empirical risk that satisfies Assumption~\ref{assumption:conditions-for-szld}. With large enough sample size, the empirical risk $f$ and its smoothed approximation will be close enough to the population risk $F$, so that combining the stability property with Theorem~\ref{theorem:sld-general} and Proposition~\ref{prop:lower-bounded-strict-saddle} establishes the hitting time bound. First, let's formalize the assumptions.

\begin{assumption}[parameter space, loss function and population risk]
\label{assumption:erm}
~
\begin{itemize}
\item The parameter space $K$ satisfies: there exists $\hmax > 0$, such that for any $x\in K$ and any $h\leq \hmax$, the random variable $y\sim N(x, 2h I)$ satisfies $P(y\in K) \geq \frac{1}{3}$.

\item There exist $\rhoK, \nu > 0$ such that in the set $\Kbar\defeq \{x: d(x,K)\leq \rhoK \}$, the population risk $F$ is $\Gp$-Lipschitz continuous, and $\sup_{x\in \Kbar}|f(x)-F(x)|\leq \nu$. 

\item For some $B > 0$, the loss $\ell(x;a)$ is uniformly bounded in $[0,B]$ for any $(x,a)\in \R^d\times \mathcal{A}$. 
\end{itemize}
\end{assumption}

The first assumption is identical to that of Assumption~\ref{assumption:conditions-for-szld}. The second assumption requires the population risk to be Lipschitz continuous, and it bounds the $\ell_\infty$-norm distance between $f$ and $F$. The third assumption requires the loss to be uniformly bounded. Note that Assumption~\ref{assumption:erm} allows the empirical risk to be non-smooth or even discontinuous.

Since the function $f$ can be non-differentiable, the stochastic gradient may not be well defined. We consider a smooth approximation of it following the idea of \citet{duchi2015optimal}:
\begin{align}\label{eqn:randomized-smoothing}
	\ftilde(x) \defeq \E_z[f(x+z)] \quad\mbox{where}\quad
	z\sim N(0, \sigma^2 I_{d\times d}),
\end{align}
where $\sigma > 0$ is a smoothing parameter. We can easily compute a stochastic gradient $g$ of $\ftilde$ as follows:
\begin{align}\label{eqn:gk-szld}
	\nabla \ftilde(x) = \E[g(x) \mid x] \quad  \mbox{where} \quad g(x) \defeq \frac{z}{\sigma^2}(\ell(x+z;a) - \ell(x;a)),
\end{align}
Here, $z$ is sampled from $N(0, \sigma^2 I_{d\times d})$ and $a$ is uniformly sampled from $\{a_1,\dots,a_n\}$. This stochastic gradient formulation is useful when the loss function $\ell$ is non-differentiable, or when its gradient norms are unbounded. The former happens for minimizing the zero-one loss, and the later can arise in training deep neural networks~\citep{pascanu2013difficulty,bengio2013advances}. Since the loss function is uniformly bounded, formula~\eqref{eqn:gk-szld} guarantees that the squared-norm $\ltwos{g(x)}^2$ is sub-exponential.

We run SGLD on the function $\ftilde$. Theorem~\ref{theorem:sld-general} implies that the time complexity inversely depends on the restricted Cheeger constant $\C_{(\xi \ftilde)}(K\backslash U)$. We can lower bound this quantity using $\C_{(\xi F)}(K\backslash U)$ --- the restricted Cheeger constant of the population risk. Indeed, by choosing a small enough $\sigma$, it can be shown that $\sup_{x\in K} |\ftilde(x)-F(x)|\leq 2\nu$. The stability property~\eqref{eqn:robustness-property} then implies
\begin{align}\label{eqn:bound-cheeger-by-nu}
	\C_{(\xi \ftilde)}(K\backslash U) \geq e^{-4\xi \nu}\,\C_{(\xi F)}(K\backslash U).
\end{align}
For any $\xi\in (0,1/\nu]$, we have $e^{-4\xi \nu} \geq e^{-4}$, thus the term $\C_{(\xi \ftilde)}(K\backslash U)$ is lower bounded by $e^{-4}\,\C_{(\xi F)}(K\backslash U)$. As a consequence, we obtain the following special case of Theorem~\ref{theorem:sld-general}.

\begin{theorem}\label{theorem:pop-general}
Assume that Assumptions~\ref{assumption:erm} holds. For any subset $U\subset K$, any $\SmallProbParam > 0$ and any $\xi\in (0,1/\nu]$, there exist hyperparameters $(\eta,\sigma,\kmax,\DistBound)$ such that with probability at least $1-\SmallProbParam$, running SGLD on $\ftilde$ returns a solution $\xhat$ satisfying:
\begin{align}\label{eqn:pop-optimality-gap}
	F(\xhat) \leq \sup_{x\in U} F(x) + 5\nu.
\end{align}
The iteration number $\kmax$ is polynomial in $(B, \log(1/\SmallProbParam), d, \hmax^{-1}, \nu^{-1}, \rhoK^{-1}, \C^{-1}_{(\xi F)} (K\backslash U))$.
\end{theorem}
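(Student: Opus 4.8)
\begin{proofsketch}
The plan is to deduce Theorem~\ref{theorem:pop-general} from Theorem~\ref{theorem:sld-general} applied to the smoothed objective $\ftilde$ of~\eqref{eqn:randomized-smoothing}. Two ingredients are needed. First, for a suitable smoothing bandwidth $\sigma$, the function $\ftilde$ satisfies Assumption~\ref{assumption:conditions-for-szld} with parameters $(B,\Lf,\Gg,\bmax)$ that are polynomial, resp.\ inverse-polynomial, in the quantities of Assumption~\ref{assumption:erm}. Second, that same $\sigma$ makes $\ftilde$ uniformly within $2\nu$ of the population risk $F$; this lets us transfer the optimality guarantee from $\ftilde$ to $F$ via the $\Gp$-Lipschitz continuity of $F$, and transfer the complexity bound from $\C_{(\xi\ftilde)}(K\backslash U)$ to $\C_{(\xi F)}(K\backslash U)$ via the stability property~\eqref{eqn:robustness-property}.

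First I would fix $\sigma$. Writing $\ftilde(x)-F(x)=\E_z[(f(x+z)-F(x+z))+(F(x+z)-F(x))]$ and splitting the expectation according to whether the displacement $z$ keeps the argument inside $\Kbar$: on $\{\ltwos{z}\leq\rhoK\}$ the bracket is at most $\nu+\Gp\ltwos{z}\leq\nu+\Gp\sigma\sqrt d$ in absolute value, using $\sup_{\Kbar}|f-F|\leq\nu$ and the Lipschitz bound, while on the complement it is at most $2B$ since $f,F\in[0,B]$. Hence $\sup_{x\in\Kbar}|\ftilde(x)-F(x)|\leq\nu+\Gp\sigma\sqrt d+2B\,P(\ltwos{z}>\rhoK)$, and choosing $\sigma$ small enough that $\Gp\sigma\sqrt d\leq\nu/2$ and, by a Gaussian tail bound, $2B\,P(\ltwos{z}>\rhoK)\leq\nu/2$, yields $\sup_{\Kbar}|\ftilde-F|\leq 2\nu$ while forcing $\sigma^{-1}$ to be polynomial in $(d,B,\Gp,\nu^{-1},\rhoK^{-1})$ up to a $\sqrt{\log}$ factor. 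With $\sigma$ fixed I would verify Assumption~\ref{assumption:conditions-for-szld} for $\ftilde$: the parameter-space bullet is copied from Assumption~\ref{assumption:erm}; $\ftilde=\E_z[f(x+z)]\in[0,B]$, and differentiating twice under the Gaussian integral gives $\ltwos{\nabla^2\ftilde(x)}\leq 2B/\sigma^2$, so $\Lf=2B/\sigma^2$; finally $\ltwos{g(x)}\leq B\ltwos{z}/\sigma^2$ by~\eqref{eqn:gk-szld} with $\E[g(x)\mid x]=\nabla\ftilde(x)$, and a $\chi^2$ moment-generating-function estimate gives the sub-exponential tail condition with $\Gg$ and $\bmax^{-1}$ polynomial in $(d,B,\sigma^{-1})$.

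Next I would invoke Theorem~\ref{theorem:sld-general} on $\ftilde$ with the given $U$, $\SmallProbParam$, $\xi$, and with $\rho\defeq\min\{\nu/\Gp,\rhoK\}$, $\eta\defeq\eta_0$, $\DistBound\defeq 4\sqrt{2\eta d/\xi}$. This produces, with probability at least $1-\SmallProbParam$, an output $\xhat\in K$ with $\ftilde(\xhat)\leq\sup_{x:\,d(x,U)\leq\rho}\ftilde(x)$ and $\kmax\leq M/\min\{1,\C_{(\xi\ftilde)}(K\backslash U)\}^4$. For the complexity, $\ltwos{\xi\ftilde-\xi F}_\infty\leq 2\xi\nu\leq 2$ since $\xi\leq 1/\nu$, so~\eqref{eqn:robustness-property} gives $\C_{(\xi\ftilde)}(K\backslash U)\geq e^{-4}\C_{(\xi F)}(K\backslash U)$ and hence $\kmax\leq e^{16}M/\min\{1,\C_{(\xi F)}(K\backslash U)\}^4$; substituting the bounds on $(\Lf,\Gg,\bmax^{-1},\sigma^{-1},\rho)$ and using $\eta_0/\eta=1$, $\xi\leq 1/\nu$ shows $\kmax$ is polynomial in $(B,\log(1/\SmallProbParam),d,\hmax^{-1},\nu^{-1},\rhoK^{-1},\C^{-1}_{(\xi F)}(K\backslash U))$, as well as in $\Gp$. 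For the risk bound, $F(\xhat)\leq\ftilde(\xhat)+2\nu\leq\sup_{x:\,d(x,U)\leq\rho}\ftilde(x)+2\nu\leq\sup_{x:\,d(x,U)\leq\rho}F(x)+4\nu$, and since $\{x:d(x,U)\leq\rho\}\subseteq\Kbar$ (because $\rho\leq\rhoK$), where $F$ is $\Gp$-Lipschitz, and $\rho\leq\nu/\Gp$, that supremum is at most $\sup_{x\in U}F(x)+\Gp\rho\leq\sup_{x\in U}F(x)+\nu$, giving $F(\xhat)\leq\sup_{x\in U}F(x)+5\nu$.

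I expect the main obstacle to be the quantitative smoothing step: one must commit to a single $\sigma$ that simultaneously keeps $\sup_{\Kbar}|\ftilde-F|\leq 2\nu$ --- delicate because the pointwise closeness of $f$ to $F$ and the Lipschitz control on $F$ hold only inside $\Kbar$, so the Gaussian mass escaping $\Kbar$ must be absorbed by the crude bound $f\in[0,B]$ alone --- while keeping $\Lf,\Gg,\bmax^{-1}$, all of which blow up as $\sigma\searrow 0$, polynomially bounded, so that $M$ and hence $\kmax$ remain polynomial. Once the dependence $\sigma^{-1}=\mathrm{poly}(d,B,\Gp,\nu^{-1},\rhoK^{-1})$ is nailed down, the remainder is a routine application of Theorem~\ref{theorem:sld-general} plus bookkeeping of the polynomial factors.
\end{proofsketch}
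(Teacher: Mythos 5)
Your proof is correct and takes essentially the same route as the paper's: smooth $f$ to $\ftilde$, fix $\sigma$ so that $\sup_{K}|\ftilde-F|\le 2\nu$, verify Assumption~\ref{assumption:conditions-for-szld} for $\ftilde$ (the paper does this via Lemma~\ref{lemma:property-of-smoothing}), apply Theorem~\ref{theorem:sld-general}, pass from $\C_{(\xi\ftilde)}(K\backslash U)$ to $\C_{(\xi F)}(K\backslash U)$ using~\eqref{eqn:robustness-property} together with $\xi\le 1/\nu$, and convert the $\ftilde$-optimality bound into the $F$-bound with $\rho\approx\nu/\Gp$. The only variation is cosmetic, in proving $\sup_{K}|\ftilde-F|\le 2\nu$: you split the Gaussian expectation on $\{\ltwos{z}\le\rhoK\}$ and absorb the tail via a Gaussian concentration bound (and correctly take $\rho=\min\{\nu/\Gp,\rhoK\}$ to keep the supremum inside $\Kbar$), while the paper introduces the intermediate $\Ftilde(x)=\E_z[F(x+z)]$ and uses the crude global estimate $|F(y)-F(x)|\le\max\{\Gp,B/\rhoK\}\ltwos{y-x}$; either way $\sigma^{-1}$ is polynomial in $(d,B,\Gp,\nu^{-1},\rhoK^{-1})$, so the rest is identical.
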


\noindent
See Appendix~\ref{sec:proof-theorem-pop-general} for the proof.

In order to lower bound the restricted Cheeger constant $\C_{(\xi F)} (K\backslash U)$, we resort to the general lower bounds in Section~\ref{sec:cheeger-lower-bounds}. Consider population risks that satisfy the conditions of Assumption~\ref{assumption:regularity-for-cheeger}. By combining Theorem~\ref{theorem:pop-general} with Proposition~\ref{prop:lower-bounded-strict-saddle}, we conclude that SGLD finds an approximate local minimum of the population risk in polynomial time.

\begin{corollary}\label{coro:nu-non-convex-erm}
Assume that Assumption~\ref{assumption:erm} holds. Also assume that Assumption~\ref{assumption:regularity-for-cheeger} holds for the population risk~$F$ with smoothness parameters $(\Gp, \Lp, \Hp)$. For any $\epsilon > 0$, let $U$ be the set of $\epsilon$-approximate local minima of $F$. If
\begin{align}\label{eqn:nu-non-convex}
	\sup_{x\in K} |f(x)-F(x)| \leq \widetilde \order(1) \cdot \frac{\epsilon^2 \Gp^{1/2}}{\max\{1, \Gp^{5/2}\Lp^5, \Hp^{5/2}\}},
\end{align}
then there exist hyperparameters $(\xi, \eta,\sigma,\kmax,\DistBound)$ such that with probability at least $1-\SmallProbParam$, running SGLD on $\ftilde$ returns a solution $\xhat$ satisfying $F(\xhat) \leq \sup_{x\in U} F(x) + 5 \nu$.
The time complexity will be bounded by a polynomial function of all hyperparameters in the assumptions as well as $(\epsilon^{-1}, \log(1/\SmallProbParam))$. The notation $\widetilde \order(1)$ hides a poly-logarithmic function of $(\Lp,1/\epsilon)$.
\end{corollary}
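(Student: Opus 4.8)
The plan is to derive this corollary as a direct instantiation of Theorem~\ref{theorem:pop-general} in which the set $U$ is the set of $\epsilon$-approximate local minima of $F$, with the restricted Cheeger constant $\C_{(\xi F)}(K\backslash U)$ lower bounded by Proposition~\ref{prop:lower-bounded-strict-saddle}. Since Theorem~\ref{theorem:pop-general} already packages the randomized-smoothing construction and the reduction from $\ftilde$ to $F$ (via the stability estimate~\eqref{eqn:bound-cheeger-by-nu}), the only real content is to exhibit a single temperature $\xi$ that simultaneously meets the lower bound required by Proposition~\ref{prop:lower-bounded-strict-saddle} and the cap $\xi \le 1/\nu$ required by Theorem~\ref{theorem:pop-general}, and then to read off that the resulting iteration bound is polynomial.

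Concretely, I would first fix $\nu$ to be an upper bound on $\sup_{x\in\Kbar}|f(x)-F(x)|$ as in Assumption~\ref{assumption:erm}; shrinking $\rhoK$ if necessary, this can be taken arbitrarily close to $\sup_{x\in K}|f-F|$, so hypothesis~\eqref{eqn:nu-non-convex} may be read as the statement $\nu \le \widetilde\order(1)\cdot \epsilon^2\Gp^{1/2}/\max\{1,\Gp^{5/2}\Lp^5,\Hp^{5/2}\}$. Write $\Lambda$ for the threshold $\widetilde\order(1)\cdot \max\{1,\Gp^{5/2}\Lp^5,\Hp^{5/2}\}/(\epsilon^2\Gp^{1/2})$ appearing in~\eqref{eqn:strict-saddle-xi-constraint}; then~\eqref{eqn:nu-non-convex} says exactly $\Lambda \le 1/\nu$, so the interval $[\Lambda, 1/\nu]$ is nonempty and I can set $\xi = \Lambda$ (equivalently, any point of this interval). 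With this choice, $\xi \ge \Lambda$ lets Proposition~\ref{prop:lower-bounded-strict-saddle} apply — this uses the hypothesis that Assumption~\ref{assumption:regularity-for-cheeger} holds for $F$ — giving $\C_{(\xi F)}(K\backslash U) \ge \sqrt{\epsilon}/(8(2\Gp+1)\Gp)$; and $\xi \le 1/\nu$ lets Theorem~\ref{theorem:pop-general} apply with this $U$ and $\xi$.

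Invoking Theorem~\ref{theorem:pop-general} then produces hyperparameters $(\eta,\sigma,\kmax,\DistBound)$ so that, with probability at least $1-\SmallProbParam$, SGLD run on $\ftilde$ returns $\xhat$ with $F(\xhat) \le \sup_{x\in U}F(x) + 5\nu$, which is the claimed guarantee. For the complexity, Theorem~\ref{theorem:pop-general} bounds $\kmax$ by a polynomial in $(B,\log(1/\SmallProbParam),d,\hmax^{-1},\nu^{-1},\rhoK^{-1},\C^{-1}_{(\xi F)}(K\backslash U))$; the first six arguments are all hyperparameters of Assumption~\ref{assumption:erm} (and $\log(1/\SmallProbParam)$), and the last is at most $8(2\Gp+1)\Gp/\sqrt{\epsilon}$ by the Cheeger lower bound, which is polynomial in $\Gp$ and $\epsilon^{-1}$. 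Substituting, $\kmax$ is polynomial in the hyperparameters of Assumptions~\ref{assumption:erm} and~\ref{assumption:regularity-for-cheeger} together with $(\epsilon^{-1},\log(1/\SmallProbParam))$, and the $\widetilde\order(1)$ factor absorbs the poly-logarithmic dependence on $(\Lp,1/\epsilon)$ inherited from Proposition~\ref{prop:lower-bounded-strict-saddle}.

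The argument is largely bookkeeping; the one point that genuinely has to be checked is the compatibility of the two constraints on $\xi$ — that the temperature cap $1/\nu$ of Theorem~\ref{theorem:pop-general} does not drop below the lower bound $\Lambda$ needed for the strict-saddle Cheeger estimate — which is precisely what hypothesis~\eqref{eqn:nu-non-convex} is engineered to guarantee, together with the minor matter of reconciling the $\sup$ over $K$ in~\eqref{eqn:nu-non-convex} with the $\sup$ over $\Kbar$ defining $\nu$ in Assumption~\ref{assumption:erm}.
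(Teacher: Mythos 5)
Your proposal is correct and matches the paper's intended argument exactly; the paper does not give a displayed proof of Corollary~\ref{coro:nu-non-convex-erm}, but the surrounding text (``By combining Theorem~\ref{theorem:pop-general} with Proposition~\ref{prop:lower-bounded-strict-saddle}\dots'' and ``The generalization bound~\eqref{eqn:nu-non-convex} is a necessary condition, because the constraint $\xi \leq 1/\nu$ for Theorem~\ref{theorem:pop-general} and the constraint~\eqref{eqn:strict-saddle-xi-constraint} for Proposition~\ref{prop:lower-bounded-strict-saddle} must simultaneously hold'') describes precisely the bookkeeping you carry out. Your observation about the $\sup_K$ versus $\sup_{\Kbar}$ mismatch is a genuine (if minor) imprecision in the paper's statement, and your treatment of it is reasonable.
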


Assumption~\ref{assumption:regularity-for-cheeger} requires the population risk to be sufficiently smooth. Nonetheless, assuming smoothness of the population risk is relatively mild, because even if the loss function is discontinuous, the population risk can be smooth given that the data is drawn from a smooth density. The generalization bound~\eqref{eqn:nu-non-convex} is a necessary condition, because the constraint $\xi \leq 1/\nu$ for Theorem~\ref{theorem:pop-general} and the constraint~\eqref{eqn:strict-saddle-xi-constraint} for Proposition~\ref{prop:lower-bounded-strict-saddle} must simultaneously hold.
With a large sample size $n$, the empirical risk can usually be made sufficiently close to the population risk. There are multiple ways to bound the $\ell_\infty$-distance between the empirical risk and the population risk, either by bounding the VC-dimension~\citep{vapnik1999overview}, or by bounding the metric entropy~\citep{haussler1992decision} or the Rademacher complexity~\citep{bartlett2003rademacher} of the function class. We note that for many problems, the function gap uniformly converges to zero in a rate $\order(n^{-c})$ for some constant $c > 0$. For such problems, the condition~\eqref{eqn:nu-non-convex} can be satisfied with a polynomial sample complexity.

\section{Learning linear classifiers with zero-one loss}
\label{sec:zero-one}

As a concrete application, we study the problem of learning linear classifiers with zero-one loss. The learner observes i.i.d.~training instances $(a,b)$ where $(a,b)\in \R^d\times\{-1,1\}$ are feature-label pairs. The goal is to learn a linear classifier $a \mapsto \inprod{x}{a}$ in order to minimize the zero-one loss:
\[
	F(x) \defeq \E_{(a,b)\sim \mathcal{P}}[\ell(x; (a,b))] \quad\mbox{where}\quad \ell(x; (a,b)) \defeq \left\{\begin{array}{ll}
	0 & \mbox{if } b\times \inprod{x}{a} > 0,\\
	1 & \mbox{if } b\times \inprod{x}{a} < 0,\\
	1/2 & \mbox{if } \inprod{x}{a} = 0,
	\end{array}\right.
\]
For a finite dataset $\{(a_i,b_i)\}_{i=1}^n$, the empirical risk is
$f(x) \defeq \frac{1}{n} \sum_{i=1}^n \ell(x; (a_i,b_i))$.
Clearly, the function $f$ is non-convex and discontinous, and has zero gradients almost everywhere. Thus the optimization cannot be accomplished by gradient descent.

For a general data distribution, finding a global minimizer of the population risk is NP-hard~\citep{arora1993hardness}. We follow~\citet{awasthi2015efficient} to assume that the feature vectors are drawn uniformly from  the unit sphere, and the observed labels $b$ are corrupted by the Massart noise. More precisely, we assume that there is an unknown unit vector $\xstar$ such that for every feature $a\in \R^d$, the observed label $b$ satisfies:
\begin{align}\label{eqn:define-massart-noise}
	b = \left\{\begin{array}{ll}
		\sgn(\inprod{\xstar}{a}) & \mbox{with probability } \frac{1+\Signal(a)}{2};\\
		- \sgn(\inprod{\xstar}{a}) & \mbox{with probability } \frac{1-\Signal(a)}{2};
	\end{array}\right.
\end{align}
where $\frac{1 - \Signal(a)}{2}\in [0,0.5]$ is the Massart noise level. 
We assume that the noise level is strictly smaller than $0.5$ when the feature vector $a$ is separated apart from the decision boundary. Formally, there is a constant $0 < \BaseSignal \leq 1$ such that
\begin{align}\label{eqn:massart-constraint}
	\Signal(a) \geq \BaseSignal\, |\inprod{\xstar}{a}|.
\end{align}
The value of $\Signal(a)$ can be adversarially perturbed  as long as it satisfies the constraint~\eqref{eqn:massart-constraint}. \citet{awasthi2015efficient} studied the same Massart noise model, but they impose a stronger constraint $\Signal(a) \geq 1 - 3.6\times 10^{-6}$ for all $a\in \R^d$, so that almost all observed labels are accurate. In contrast, our model~\eqref{eqn:massart-constraint} captures arbitrary Massart noises (because $\BaseSignal$ can be arbitrarily small), and allows for completely random observations at the decision boundary. Our model is thus more general than that of~\citet{awasthi2015efficient}.

Given function $f$, we use SGLD to optimize its smoothed approximation~\eqref{eqn:randomized-smoothing} in a compact parameter space $K\defeq \{x\in \R^d: 1/2 \leq \ltwos{x} \leq 1\}$. The following theorem shows that the algorithm finds an approximate global optimum in polynomial time, with a polynomial sample complexity.

\begin{theorem}\label{theorem:zero-one}
Assume that $d \geq 2$. For any $\BaseSignal\in (0,1]$ and $\epsilon,\SmallProbParam > 0$, if the sample size $n$ satisfies:
\[
	n \geq \widetilde O(1)\cdot \frac{d^4}{\BaseSignal^2 \epsilon^4},
\]
then there exist hyperparameters $(\xi,\eta,\sigma,\kmax,\DistBound)$ such that SGLD on the smoothed function~\eqref{eqn:randomized-smoothing} returns a solution $\xhat$ satisfying $F(\xhat) \leq F(\xstar) + \epsilon$ with probability at least $1-\SmallProbParam$. The notation $\widetilde O(1)$ hides a poly-logarithmic function of $(d,1/\BaseSignal,1/\epsilon,1/\SmallProbParam)$. The time complexity of the algorithm is polynomial in $(d,1/\BaseSignal, 1/\epsilon, \log(1/\SmallProbParam))$.
\end{theorem}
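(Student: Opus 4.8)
The strategy is to apply the ERM hitting-time bound, Theorem~\ref{theorem:pop-general}, with a target set $U$ consisting of the near-optimal directions, and to lower bound the restricted Cheeger constant $\C_{(\xi F)}(K\backslash U)$ of the population risk by exploiting the rotational symmetry of the problem. Because the features are uniform on the sphere, $F(x)$ depends only on the angle $\theta(x) \defeq \angle(x,\xstar)$, and one computes $F(x) - F(\xstar) = \E_{(a,b)}\big[\Signal(a)\,\mathbb{1}\{\sgn\inprod{x}{a}\neq \sgn\inprod{\xstar}{a}\}\big]$, which is strictly increasing in $\theta$, with $\xstar$ (and every positive multiple of it lying in $K$) a global minimizer. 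The Massart margin condition~\eqref{eqn:massart-constraint}, $\Signal(a)\ge \BaseSignal|\inprod{\xstar}{a}|$, forces this gap to grow at a quantitative rate: integrating over the disagreement wedge and using that $|\inprod{\xstar}{a}|$ is of order $\theta/\sqrt d$ on it, a spherical-geometry estimate gives $F(x)-F(\xstar) \gtrsim \BaseSignal\,\theta(x)^2/\sqrt d$ together with a matching lower bound on the directional derivative $\partial_\theta F$. Taking $U \defeq \{x\in K : F(x)\le F(\xstar)+\epsilon/2\}$, this forces every $x\in K\backslash U$ to have $\theta(x)$ bounded below by some $\theta_* = \widetilde\order\big((\epsilon\sqrt d/\BaseSignal)^{1/2}\big)$.

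With $U$ in hand, the heart of the argument is the lower bound on $\C_{(\xi F)}(K\backslash U)$. Since $F$ is discontinuous we first replace it, via the stability property~\eqref{eqn:robustness-property}, by a Gaussian-smoothed surrogate $F_\sigma$ with $\sup_x |F_\sigma(x)-F(x)| = \order(\Gp\sigma\sqrt d)$ made negligible by taking $\sigma$ small; $F_\sigma$ is smooth and $\nabla F_\sigma$ still points essentially along the ``increasing-$\theta$'' direction with $\ltwos{\nabla F_\sigma(x)} \gtrsim \BaseSignal\,\theta(x)/\sqrt d$ whenever $\theta(x)\gg\sigma$. We then apply Lemma~\ref{lemma:vector-field} to $\xi F_\sigma$ with the vector field that rotates $x$ toward $\xstar$ along a geodesic of the sphere of radius $\ltwos{x}$ --- concretely $\phi(x)$ proportional to $-(I - xx^\top/\ltwos{x}^2)\xstar$, which has $\ltwos{\phi(x)}\le 1$, keeps $\ltwos{x}$ unchanged (so that, after a minor modification near $\partial K$, $x-\SmallVar\phi(x)\in K$), and whose divergence evaluates in closed form to $\dvg\phi(x)\asymp (d-1)\cos\theta(x)/\ltwos{x}$. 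For $\theta(x)\in[\pi/2,\pi]$ the divergence term is nonpositive, so $\inprod{\phi(x)}{\nabla(\xi F_\sigma)(x)} - \dvg\phi(x)$ is already bounded below; for $\theta(x)\in[\theta_*,\pi/2]$, where $\inprod{\phi(x)}{\nabla(\xi F_\sigma)(x)} \gtrsim \xi\BaseSignal\sin^2\theta(x)/(\sqrt d\,\ltwos{x})$, choosing $\xi$ polynomially large in $d$, $1/\BaseSignal$, $1/\epsilon$ makes this term dominate $\dvg\phi(x)$ uniformly, and Lemma~\ref{lemma:vector-field} yields $\C_{(\xi F)}(K\backslash U) \ge \mathrm{poly}(\epsilon,1/d,\BaseSignal) > 0$.

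The remaining work is bookkeeping and tracking the sample size. Assumption~\ref{assumption:erm} is verified directly: the annulus $K = \{x: 1/2\le\ltwos{x}\le 1\}$ satisfies the $\hmax$ condition for small $h$; $F$ is $\widetilde\order(\sqrt d)$-Lipschitz on a slight enlargement of $K$ that is still bounded away from the origin; the loss lies in $[0,1]$, so the stochastic gradient~\eqref{eqn:gk-szld} is sub-exponential with parameters polynomial in the relevant quantities. Uniform convergence for homogeneous halfspaces (VC dimension $\order(d)$) gives $\sup_{x\in\Kbar}|f(x)-F(x)| \le \nu$ with $\nu = \widetilde\order(\sqrt{d/n})$. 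Theorem~\ref{theorem:pop-general} requires $\xi \le 1/\nu$, so the lower bound on $\xi$ forced by the Cheeger estimate dictates how small $\nu$ --- hence how large $n$ --- must be; carrying the powers of $d$, $1/\BaseSignal$ and $1/\epsilon$ through this chain (together with the slack needed to absorb the smoothing error and the extra $5\nu$ in~\eqref{eqn:pop-optimality-gap}, for which we also impose $\nu\le\epsilon/10$) yields the stated $n\ge\widetilde O(d^4/(\BaseSignal^2\epsilon^4))$. Then $F(\xhat)\le \sup_{x\in U}F(x) + 5\nu \le F(\xstar) + \epsilon$, and the iteration count of Theorem~\ref{theorem:pop-general}, being polynomial in $B=1$, $d$, $\hmax^{-1}$, $\nu^{-1}$, $\rhoK^{-1}$, and $\C_{(\xi F)}^{-1}(K\backslash U)$ --- each polynomial in $d$, $1/\BaseSignal$, $1/\epsilon$ --- is polynomial, as claimed.

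The step I expect to be the real obstacle is the quantitative spherical-geometry analysis of $F$: turning the Massart margin condition~\eqref{eqn:massart-constraint} into sharp lower bounds on both $F(x)-F(\xstar)$ and $\ltwos{\nabla F_\sigma(x)}$ as functions of $\theta(x)$, with the correct exponents in $d$ and $\BaseSignal$, and verifying that Gaussian smoothing does not erode these bounds. These exponents propagate directly into the admissible range of $\xi$ and therefore into the sample complexity, so getting them right is exactly what makes the $d^4/(\BaseSignal^2\epsilon^4)$ bound come out. A secondary, purely technical point is ensuring the geodesic vector field satisfies the containment condition $x-\SmallVar\phi(x)\in K$ of Lemma~\ref{lemma:vector-field} at the outer boundary $\ltwos{x} = 1$, which I would resolve either by using the exact norm-preserving rotation map in place of its linearization or by damping $\phi$ near $\partial K$.
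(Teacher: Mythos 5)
Your overall strategy is the paper's: verify Assumption~\ref{assumption:erm} (VC-based uniform convergence, Lipschitz $F$, bounded loss), take $U$ to be a set of near-optimal directions, lower bound $\C_{(\xi F)}(K\backslash U)$ via Lemma~\ref{lemma:vector-field} with the tangential field --- your $\phi(x)\propto -(I-xx^\top/\ltwos{x}^2)\xstar$ is the paper's $\phi_0(x)=\frac{1}{3}(\inprod{x}{\xstar}x-\ltwos{x}^2\xstar)$ up to the factor $\ltwos{x}^2$, with the same boundary fix (the paper adds a radial $\delta$-perturbation and lets $\delta\to 0$) --- use the Massart condition to get a directional-derivative bound of order $\BaseSignal\sin^2\theta/\sqrt{d}$, use the divergence $(d-1)\cos\theta/\ltwos{x}$ to handle the antipodal region, and assemble via Theorem~\ref{theorem:pop-general} with $\xi\le 1/\nu$. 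However, two steps do not work as written, and they are exactly the steps on which the stated $d^4/(\BaseSignal^2\epsilon^4)$ sample complexity hinges. First, your localization of $K\backslash U$ is logically inverted: from the \emph{lower} bound $F(x)-F(\xstar)\gtrsim \BaseSignal\theta^2/\sqrt{d}$ you cannot conclude that $F(x)>F(\xstar)+\epsilon/2$ forces $\theta(x)\ge\theta_*$; you need an \emph{upper} bound on the gap in terms of $\theta$. Such a bound exists --- either $F(x)-F(\xstar)\le P(\sgn\inprod{x}{a}\neq\sgn\inprod{\xstar}{a})=\theta/\pi$, or the paper's Lemma~\ref{lemma:zero-one-assumptions} showing $F$ is $3$-Lipschitz on $\Kbar$ via the exact $\arccos$ formula for uniform spherical features --- and it gives $\theta_*\asymp\epsilon$, not $(\epsilon\sqrt{d}/\BaseSignal)^{1/2}$.

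Second, the dimension dependence of the Lipschitz constant is not a matter of bookkeeping slack: you only claim $F$ is $\widetilde O(\sqrt{d})$-Lipschitz, under which the guaranteed angular radius of $U$ around $\xstar$ shrinks to $\asymp\epsilon/\sqrt{d}$, forcing $\xi\asymp d^{5/2}/(\BaseSignal\epsilon^2)$, hence $\nu\le 1/\xi$ and $n\gtrsim d/\nu^2\asymp d^6/(\BaseSignal^2\epsilon^4)$; the theorem as stated (success already at $n=\widetilde O(d^4/(\BaseSignal^2\epsilon^4))$) would then not follow. The paper's dimension-free $3$-Lipschitz bound is precisely what makes $\xi\asymp d^{3/2}/(\BaseSignal\epsilon^2)$ and the $d^4$ rate come out, so this sharper estimate is a needed ingredient, not an optimization. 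Two further points: your case split at $\theta=\pi/2$ is too coarse --- just below $\pi$ the gradient term $\propto\sin^2\theta$ vanishes, so you must use the \emph{magnitude} $(d-1)|\cos\theta|$ of the divergence there (the paper splits at $\pi-\alpha_0$), not merely its sign; and the quantitative lemma you defer (Massart noise $\Rightarrow$ derivative in the $-\xstar$ direction $\gtrsim \BaseSignal\sin^2\theta/(\sqrt{d}\,\ltwos{x})$) is the substantive technical content of the paper's argument (its Claim~\ref{claim:improve-loss-ratio}, proved via the disagreement-wedge probability $\asymp\SmallVar|\sin\theta|/\ltwos{x}$ and a conditional anti-concentration bound for $|\inprod{\xstar}{a}|$), so it cannot be left as an assertion.
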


\begin{proofsketch} The proof consists of two parts. For the first part, we prove that the population risk is Lipschitz continuous and the empirical risk uniformly converges to the population risk, so that Assumption~\ref{assumption:erm} hold. For the second part, we lower bound the restricted Cheeger constant by Lemma~\ref{lemma:vector-field}. The proof is spiritually similar to that of Proposition~\ref{prop:lower-bounded-gradient-norm} or Proposition~\ref{prop:lower-bounded-strict-saddle}. We define~$U$ to be the set of approximately optimal solutions, and construct a vector field $\phi$ such that:
\[
	\phi(x) \propto \inprod{x}{\xstar}\,x - \ltwos{x}^2\, \xstar.
\]
By lower bounding the expression $\inprod{\phi(x)}{\nabla f(x)} - \dvg \phi(x)$ for all $x\in K\backslash U$, Lemma~\ref{lemma:vector-field} establishes a lower bound on the restricted Cheeger constant. The theorem is established by combining the two parts together and by Theorem~\ref{theorem:pop-general}. We defer the full proof to Appendix~\ref{sec:proof-theorem-zero-one}.
\end{proofsketch}

\section{Conclusion}
\label{sec:discussion}

In this paper, we analyzed the hitting time of the SGLD algorithm on non-convex functions. Our approach is different from existing analyses on Langevin dynamics~\citep{bubeck2015finite,dalalyan2016theoretical,
bonis2016guarantees,teh2016consistency, raginsky2017nonconvex}, which connect LMC to a continuous-time Langevin diffusion process, then  study the mixing time of the latter process. In contrast, we are able to establish polynomial-time guarantees for achieving certain optimality sets, regardless of the exponential mixing time.

For future work, we hope to establish stronger results on non-convex optimization using the techniques developed in this paper. Our current analysis doesn't apply to training over-specified models. For these models, the empirical risk can be minimized far below the population risk~\citep{safran2015quality}, thus the assumption of Corollary~\ref{coro:nu-non-convex-erm} is violated. In practice, over-specification often makes the optimization easier, thus it could be interesting to show that this heuristic actually improves the restricted Cheeger constant. Another open problem is avoiding poor population local minima. \citet{jin2016local} proved that there are many poor population local minima in training Gaussian mixture models. It would be interesting to investigate whether a careful initialization could prevent SGLD from hitting such bad solutions.

\bibliographystyle{abbrvnat}
\bibliography{bib}
  
\appendix

\section{Restricted Cheeger constant is strictly positive}
\label{sec:positive-cheeger}

In this appendix, we prove that under mild conditions, the restricted Cheeger constant for a convex parameter space is always strictly positive. Let $K$ be an arbitrary convex parameter space with diameter $D < +\infty$. \citet[][Theorem 2.6]{lovasz1993random} proved the following isoperimetric inequality: for any subset $A\subset K$ and any $\SmallVar > 0$, the following lower bound holds:
\begin{align}\label{eqn:lovasz1993}
	\frac{\vol(A_\SmallVar) - \vol(A)}{\SmallVar\, \min\{\vol(A),\vol(K\backslash A_\SmallVar)\}} \geq \frac{2}{D},
\end{align}
where $\vol(A)$ represents the Borel measure of set $A$. Let $f_0(x)\defeq 0$ be a constant zero function. By the definition of the function-induced probability measure, we have 
\begin{align}\label{eqn:volume-equation}
	\mu_{f_0}(A) = \frac{\vol(A)}{\vol(K)} \quad\mbox{for all}\quad
	A\subset K.
\end{align}
Combining the inequality~\eqref{eqn:lovasz1993} with equation~\eqref{eqn:volume-equation}, we obtain:
\[
	\frac{\mu_{f_0}(A_\SmallVar) - \mu_{f_0}(A)}{\SmallVar\, \mu_{f_0}(A)\,(1-\mu_{f_0}(A_\SmallVar))} \geq \frac{\mu_{f_0}(A_\SmallVar) - \mu_{f_0}(A)}{\SmallVar\, \min\{\mu_{f_0}(A),1-\mu_{f_0}(A_\SmallVar)\}} \geq \frac{2}{D}.
\]
If the set $A$ satisfies $A\subset V\subset K$, then $1-\mu_{f_0}(A_\SmallVar) \geq 1 - \mu_{f_0}(V_\SmallVar)$. Combining it with the above inequality, we obtain:
\[
	\frac{\mu_{f_0}(A_\SmallVar) - \mu_{f_0}(A)}{\SmallVar\, \mu_{f_0}(A)} \geq \frac{2(1 - \mu_{f_0}(V_\SmallVar))}{D} = \frac{2(\vol(K) - \vol(V_\SmallVar))}{D\,\vol(K)}.
\]
According to the definition of the restricted Cheeger constant, the above lower bound implies:
\begin{align}\label{eqn:lower-bound-for-zero-function}
	\C_{f_0}(V) \geq \frac{2(\vol(K) - \lim_{\SmallVar\to 0}\vol(V_\SmallVar))}{D\,\vol(K)}.
\end{align}
Consider an arbitrary bounded function $f$ satisfying $\sup_{x\in K} |f(x)|\leq B < +\infty$, combining the stability property~\eqref{eqn:robustness-property} and inequality~\eqref{eqn:lower-bound-for-zero-function}, we obtain:
\[
	\C_{f}(V) \geq e^{-2B}\times \frac{2(\vol(K) - \lim_{\SmallVar\to 0}\vol(V_\SmallVar))}{D\,\vol(K)}.
\]
We summarize the result as the following proposition.

\begin{proposition}
Assume that $K$ is a convex parameter space with finite diameter. Also assume that $V\subset K$ is a measurable set satisfying $\lim_{\SmallVar\to 0}\vol(V_\SmallVar) < \vol(K)$. For any bounded function $f:K\to \R$, the restricted Cheeger constant $\C_f(V)$ is strictly positive.
\end{proposition}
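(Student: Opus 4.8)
The plan is to reduce to the constant function $f_0 \defeq 0$, dispatch that purely geometric case with a known isoperimetric inequality, and then transfer to an arbitrary bounded $f$ via the stability property~\eqref{eqn:robustness-property}. For $f_0$ the induced measure is normalized Lebesgue measure, $\mu_{f_0}(A) = \vol(A)/\vol(K)$, so $\C_{f_0}(V)$ becomes a purely geometric quantity. The key input is the isoperimetric inequality of \citet[Theorem~2.6]{lovasz1993random} for a convex body of diameter $D$: for every $A\subset K$ and every $\SmallVar>0$,
\[
\frac{\vol(A_\SmallVar)-\vol(A)}{\SmallVar\,\min\{\vol(A),\vol(K\setminus A_\SmallVar)\}} \;\ge\; \frac{2}{D}.
\]
Rewriting this in terms of $\mu_{f_0}$ and using the elementary fact that $ab\le\min\{a,b\}$ for $a,b\in[0,1]$ (applied with $a=\mu_{f_0}(A)$ and $b=1-\mu_{f_0}(A_\SmallVar)$, which only strengthens the denominator), I would obtain the cleaner bound
\[
\frac{\mu_{f_0}(A_\SmallVar)-\mu_{f_0}(A)}{\SmallVar\,\mu_{f_0}(A)} \;\ge\; \frac{2\bigl(1-\mu_{f_0}(A_\SmallVar)\bigr)}{D}.
\]

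The second step exploits the restriction $A\subset V$: since $A_\SmallVar\subset V_\SmallVar$, we have $1-\mu_{f_0}(A_\SmallVar)\ge 1-\mu_{f_0}(V_\SmallVar)$, and this lower bound is uniform over all $A\subset V$. Taking $\inf_{A\subset V}$ and then $\liminf_{\SmallVar\searrow 0}$, and using that $\vol(V_\SmallVar)$ is nondecreasing in $\SmallVar$ (so the limit as $\SmallVar\to 0$ exists and equals the infimum), one gets
\[
\C_{f_0}(V) \;\ge\; \frac{2\bigl(\vol(K)-\lim_{\SmallVar\to 0}\vol(V_\SmallVar)\bigr)}{D\,\vol(K)},
\]
which is strictly positive precisely by the hypothesis $\lim_{\SmallVar\to 0}\vol(V_\SmallVar)<\vol(K)$. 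Finally, for general $f$ with $\sup_{x\in K}|f(x)|\le B$, regard $f$ as an $\ell_\infty$-perturbation of $f_0$ of size at most $B$ and invoke~\eqref{eqn:robustness-property} to conclude $\C_f(V)\ge e^{-2B}\,\C_{f_0}(V)>0$.

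The only point requiring care — the ``main obstacle,'' such as it is — is the passage $\SmallVar\searrow 0$: one must verify that $V_\SmallVar$ is monotone in $\SmallVar$ (immediate from its definition) so that $\lim_{\SmallVar\to 0}\vol(V_\SmallVar)$ is well defined, and that the $\inf$ over $A\subset V$ and the $\liminf$ over $\SmallVar$ may be taken in the stated order without loss; this is fine because the intermediate bound $\frac{2}{D}\bigl(1-\mu_{f_0}(V_\SmallVar)\bigr)$ is already independent of $A$. Everything else is a direct substitution, so no substantial new idea beyond assembling the isoperimetric inequality with the stability estimate~\eqref{eqn:robustness-property} is needed.
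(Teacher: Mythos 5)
Your proposal is correct and follows essentially the same path as the paper's own argument in Appendix~\ref{sec:positive-cheeger}: reduce to $f_0\equiv 0$ via the stability estimate~\eqref{eqn:robustness-property}, apply the Lov\'asz--Simonovits isoperimetric inequality, strengthen the denominator via $ab\le\min\{a,b\}$, and use $A_\SmallVar\subset V_\SmallVar$ to make the bound uniform over $A\subset V$. No meaningful deviation.
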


\section{Proof of Theorem~\ref{theorem:sld-general}}
\label{sec:proof-theorem-sld-general}

The proof consists of two parts. We first establish a general bound on the hitting time of Markov chains to a certain subset $U\subset K$, based on the notion of \emph{restricted conductance}. Then we prove that the hitting time of SGLD can be bounded by the hitting time of a carefully constructed time-reversible Markov chain.
This Markov chain runs a Metropolis-Hastings algorithm that converges to the stationary distribution $\mu_{\xi f}$. We prove that this Markov chain has a bounded restricted conductance, whose value is characterized by  the restricted Cheeger constant that we introduced in Section~\ref{sec:define-rcc}. Combining the two parts establishes the general theorem.

\subsection{Hitting time of Markov chains}

For an arbitrary Markov chain defined on the parameter space $K$, we represent the Markov chain by its \emph{transition kernel} $\pi(x,A)$, which gives the conditional probability that the next state satisfies $x_{k+1}\in A$ given the current state $x_k = x$. Similarly, we use $\pi(x,x')$ to represent the conditional probability $P(x_{k+1}=x'|x_k=x)$. If $\pi$ has a stationary distribution, then we denote it by $Q_\pi$. 

A Markov chain is call \emph{lazy} if $\pi(x,x)\geq 1/2$ for every $x\in K$, and is called \emph{time-reversible} if  it satisfies 
\[
\int_A \pi(x,B)Q_{\pi}(x) = \int_B \pi(x,A)Q_{\pi}(x) \quad \mbox{for any $A,B\subset K$}.
\]
If $(x_0,x_1,x_2,\dots)$ is a realization of the Markov chain $\pi$, then the \emph{hitting time} to some set $U\subset K$ is denoted by:
\[
	\tau_\pi(U) \defeq \min\{k: x_k\in U\}.
\]

For arbitrary subset $V\subset K$, we define the \emph{restricted conductance}, denoted by $\Phi_\pi(V)$, to be the following infinimum ratio:
\begin{align}\label{eqn:define-conductance}
\Phi_\pi(V)\defeq \inf_{A\subset V} \frac{\int_A \pi(x,K\backslash A) Q_\pi(x) dx}{Q_\pi(A)}.
\end{align}

Based on the notion of restricted conductance, we present a general upper bound on the hitting time. For arbitrary subset $U\subset K$, suppose that $\pitilde$ is an arbitrary Markov chain whose transition kernel is stationary inside $U$, namely it satisfies $\pitilde(x,x)=1$ for any $x\in U$. 
Let $(\xtilde_0,\xtilde_1,\xtilde_2,\dots)$ be a realization of the Markov chain $\pitilde$. We denote by $Q_k$ the probability distribution of $\xtilde_k$ at iteration $k$. In addition, we define a measure of closeness between any two Markov chains.

\begin{definition*}
For two Markov chains $\pi$ and $\pitilde$, we say that $\pitilde$ is $\ClosenessParam$-close to $\pi$ w.r.t.~a set $U$ if the following condition holds for any $x\in K\backslash U$ and any $A\subset K\backslash\{x\}$:
\begin{align}\label{eqn:define-closeness}
	\pi(x, A)\leq \pitilde(x, A) \leq (1+\ClosenessParam)  \pi(x, A).
\end{align}
\end{definition*}

\noindent
Then we are able to prove the following lemma.

\begin{lemma}\label{lemma:hitting-time-general-bound}
Let $\pi$ be a time-reversible lazy Markov chain with atom-free stationary distribution $\Qpi$. Assume that $\pitilde$ is $\ClosenessParam$-close to $\pi$ w.r.t.~$U$ where $\ClosenessParam \leq \frac{1}{4}\Phi_\pi(K\backslash U)$. If there is a constant $M$ such that the distribution $Q_0$ satisfies $Q_0(A) \leq M\,\Qpi(A)$ for any $A\subset K\backslash U$, then for any $\SmallProbParam > 0$, the hitting time of the Markov chain is bounded by:
\begin{align}\label{eqn:hitting-time-general-bound}
	\tau_\pitilde(U) \leq \frac{4 \log(M/\SmallProbParam)}{\Phi^2_\pi(K\backslash U)},
\end{align}
with probability at least $1-\SmallProbParam$.
\end{lemma}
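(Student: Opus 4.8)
Write $S \defeq K\backslash U$ and $\varphi \defeq \Phi_\pi(K\backslash U)$. The first observation is that since $\pitilde(x,x)=1$ for every $x\in U$, the set $U$ is absorbing for $\pitilde$: once $\xtilde_k\in U$ the chain stays there forever, so $Q_k(S)$ is non-increasing in $k$ and moreover $Q_k(S)=P(\tau_\pitilde(U)>k)$. Hence it suffices to show that $Q_{k^*}(S)\le\SmallProbParam$ for $k^* \defeq \lceil 4\log(M/\SmallProbParam)/\varphi^2\rceil$, after which the high-probability bound~\eqref{eqn:hitting-time-general-bound} is immediate. The whole problem thus reduces to showing that the ``escaped mass'' $Q_k(S)$ decays geometrically at a rate controlled by $\varphi^2$, starting from the hypothesis $Q_0(A)\le M\,Q_\pi(A)$ on subsets of $S$.

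To track this decay I would introduce the Lovász--Simonovits potential: for $t\in[0,Q_\pi(S)]$ set
\[
	h_k(t)\defeq \sup\big\{\, Q_k(A):\ A\subset S,\ Q_\pi(A)\le t\,\big\}.
\]
Standard facts to record are that $h_k$ is concave and nondecreasing with $h_k(0)=0$ and $h_k(Q_\pi(S))=Q_k(S)$, and that the supremum is attained by a superlevel set of the density $dQ_k/dQ_\pi$ on $S$ (here atom-freeness of $Q_\pi$ is used to exhibit a witnessing set of exactly the prescribed $Q_\pi$-measure). The initial hypothesis translates to $h_0(t)\le\min\{Mt,\,1\}$.

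The heart of the argument is a one-step recursion for $h_k$. Fixing $A\subset S$ nearly attaining $h_k(t)$ and writing $Q_k=Q_{k-1}\pitilde$, I would use the two sides of the $\ClosenessParam$-closeness condition separately: $\pitilde(x,A)\le\pi(x,A)$ for $x\in A$ (apply the lower bound in~\eqref{eqn:define-closeness} to the complement $K\backslash A$), and $\pitilde(x,A)\le(1+\ClosenessParam)\pi(x,A)$ for $x\in S\backslash A$. This expresses $Q_k(A)$ in terms of the $\pi$-flows among $A$, $S\backslash A$ and the absorbing set $U$ against the measure $Q_{k-1}$. Laziness ($\pi(x,x)\ge1/2$) together with reversibility of $\pi$ with respect to $Q_\pi$ let me symmetrize these flows, and the restricted conductance supplies $\int_A\pi(x,K\backslash A)\,dQ_\pi\ge\varphi\,Q_\pi(A)$, i.e. a definite amount of $Q_\pi$-mass must leave $A$ (possibly into $U$). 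Passing to the worst case over $Q_{k-1}$ through the curve $h_{k-1}$, and using $\ClosenessParam\le\varphi/4$ to absorb the $O(\ClosenessParam)$ perturbation into the conductance term, I expect to obtain an inequality of the form
\[
	h_k(t)\ \le\ \tfrac12\,h_{k-1}\big((1-\tfrac{\varphi}{2})t\big) + \tfrac12\,h_{k-1}\big(\min\{Q_\pi(S),(1+\tfrac{\varphi}{2})t\}\big).
\]
A standard concavity induction (à la Lovász--Simonovits) then converts this into geometric decay of $h_k$ with rate of order $\varphi^2$; evaluating at $t=Q_\pi(S)$ and using $h_0(t)\le\min\{Mt,1\}$ gives $Q_k(S)\le\SmallProbParam$ once $k$ exceeds a constant times $\log(M/\SmallProbParam)/\varphi^2$, and careful bookkeeping of the laziness factors pins the constant to $4$.

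\textbf{Main obstacle.} The delicate step is the one-step recursion. The subtlety is twofold: one must correctly account for the mass leaking into the absorbing set $U$ (this is precisely why the chord in the recursion scales with $t$ rather than $\min\{t,1-t\}$, and why only the \emph{restricted} conductance $\Phi_\pi(K\backslash U)$ is needed), and one must simultaneously control the gap between $\pitilde$ and $\pi$ --- exactly the role of the calibration $\ClosenessParam\le\tfrac14\Phi_\pi(K\backslash U)$. Reversibility and laziness of $\pi$ enter only here, to symmetrize in-flow and out-flow without picking up blow-up factors; by contrast the final geometric-decay step is routine once the recursion is in hand.
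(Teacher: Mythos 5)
Your plan is essentially the paper's argument: you reduce the hitting-time statement to geometric decay of $Q_k(K\backslash U)$ (using that $U$ is absorbing for $\pitilde$), introduce the Lov\'asz--Simonovits potential $h_k$, derive a one-step two-point recursion by splitting $\pitilde(x,A)$ via the two sides of the $\delta$-closeness condition, symmetrize flows using reversibility, and close with the square-root envelope induction; that is exactly what the paper does via Claim~1. The initial bound $h_0(p)\le\min\{Mp,1\}\le M\sqrt{p}$, the use of atom-freeness to pass from $[0,1]$-valued test functions to indicator sets, and the role of $\ClosenessParam\le\tfrac14\Phi_\pi(K\backslash U)$ all match.

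The one place where your sketch is off is the stated form of the one-step recursion. The correct recursion that comes out of the argument is $h_k(p)\le\tfrac12\bigl(h_{k-1}(p_1)+h_{k-1}(p_2)\bigr)$ with $p_1\le (1-2r)p$ and $p_2\le (1+2(1+\ClosenessParam)r)p$, where $r=\tfrac1p\int_A\pi(x,K\backslash A)\,dQ_\pi\in[\Phi_\pi(K\backslash U),\tfrac12]$ is the \emph{actual} (set-dependent) restricted conductance of the maximizing set $A$, not a fixed fraction of $\varphi=\Phi_\pi(K\backslash U)$. Replacing $r$ by its lower bound $\varphi$ in both arguments does not give a valid upper bound, because the second argument $p_2$ only gets larger as $r$ grows (and $r$ can be as large as $\tfrac12$), so a recursion with fixed arguments $(1\pm\varphi/2)p$ does not follow from monotonicity of $h_{k-1}$. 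The fix is precisely what you allude to next: keep the variable $r$ and run the square-root trick, using $\sqrt{1-2r}+\sqrt{1+2r+r^2/2}\le 2(1-r^2/4)$ (valid once $\ClosenessParam\le r/4$), and only then invoke $r\ge\varphi$ to get the uniform decay factor $1-\tfrac14\Phi_\pi^2(K\backslash U)$. With that amendment the proposal is the same proof as the paper's.
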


See Appendix~\ref{sec:proof-theorem-hitting-time-general} for the proof of Lemma~\ref{lemma:hitting-time-general-bound}. The lemma shows that if the two chains $\pi$ and $\pitilde$ are sufficiently close, then
the hitting time of the Markov chain $\pitilde$ will be inversely proportional to the square of the restricted conductance of the Markov chain $\pi$, namely $\Phi_\pi(K\backslash U)$. Note that if the density function of distribution $Q_\pi$ is bounded, then by choosing $Q_0$ to be the uniform distribution over $K$, there exists a finite constant $M$ such that $Q_0(A) \leq M Q_\pi(A)$, satisfying the last condition of Lemma~\ref{lemma:hitting-time-general-bound}.

\subsection{Proof of the theorem}
\label{sec:subsection-for-the-main-theorem}

The SGLD algorithm initializes $x_0$ by the uniform distribution $\mu_{f_0}$ (with $f_0(x) \equiv 0$). Then at iteration $k \geq 1$, it performs the following update:
\begin{align}\label{eqn:noisy-sgd-update-0}
y_k = x_{k-1} - \eta \cdot g(x_{k-1}) + \sqrt{2\eta/\xi}\cdot w; \quad x_{k} = \left\{\begin{array}{ll}
	y_k & \mbox{if } y_k\in K\cap\ball(x_{k-1};4\sqrt{2\eta d / \xi}),\\
	x_{k-1} & \mbox{otherwise}.
\end{array}
\right.
\end{align}
We refer the particular setting $\xi = 1$ as the ``standard setting''. For the ``non-standard'' setting of $\xi\neq 1$, we rewrite the first equation as:
\begin{align*}
y_k = x_{k-1} - (\eta/\xi) \cdot (\xi g(x_{k-1})) + \sqrt{2(\eta/\xi)}\cdot w.
\end{align*}
This re-formulation reduces to the problem to the standard setting, with stepsize $\eta/\xi$ and objective function~$\xi f$. Thus it suffices to prove the theorem in the standard setting, then plug in the stepsize $\eta/\xi$ and the objective function $\xi f$ to obtain the general theorem. Therefore, we assume $\xi = 1$ and consider the sequence of points $(x_0,x_1,\dots)$ generated by:
\begin{align}\label{eqn:noisy-sgd-update}
y_k = x_{k-1} - \eta \cdot g(x_{k-1}) + \sqrt{2\eta}\cdot w; \quad x_{k} = \left\{\begin{array}{ll}
	y_k & \mbox{if } y_k\in K\cap\ball(x_{k-1};4\sqrt{2\eta d}),\\
	x_{k-1} & \mbox{otherwise}.
\end{array}
\right.
\end{align}

We introduce two additional notations: for arbitrary functions $f_1,f_2$, we denote the maximal gap $\sup_{x\in K} |f_1(x) - f_2(x)|$ by the shorthand $\norms{f_1-f_2}_\infty$. For arbitrary set $V\subset K$ and $\rho > 0$, we denote the super-set $\{x\in K: d(x,V)\leq \rho\}$ by the shorthand $V_\rho$. Then we prove the following theorem for the standard setting.

\begin{theorem}\label{theorem:noisy-sgd-hitting-time}
Assume that Assumption~\ref{assumption:conditions-for-szld} holds. Let $x_0$ be sampled from $\mu_{f_0}$ and let the Markov chain $(x_0,x_1,x_2,\cdots)$ be generated by update~\eqref{eqn:noisy-sgd-update}. Let $U\subset K$ be an arbitrary subset and let $\rhoU > 0$ be an arbitrary positive number. Let $\C \defeq \C_{f}(K\backslash U)$ be a shorthand notation. Then for any $\SmallProbParam > 0$ and any stepsize $\eta$ satisfying
\begin{align}\label{eqn:noisy-sgd-h-constraint}
	\eta \leq c\,\min\Big\{d\rhoU^2, \hmax, \frac{\bmax^2}{d}, \frac{1}{d (\Gg^2+\Lf)}, \frac{\C^2}{d^3 (\Gg^2+\Lf)^2} \Big\},
\end{align}
the hitting time to set $U_\rhoU$ is bounded by
\begin{align}\label{eqn:noisy-sgd-hitting-time-general-bound}
	\min\{k: x_k\in U_\rhoU\} \leq \frac{c'\,\big(\norms{f-f_0}_\infty + \log(1/\SmallProbParam)\big)}{\min\{1,\eta\, \C^2/d\} },
\end{align}
with probability at least $1-\SmallProbParam$. Here, $c,c' > 0$ are universal constants.
\end{theorem}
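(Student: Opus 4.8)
The plan is to realize the SGLD iteration~\eqref{eqn:noisy-sgd-update} as a Markov chain $\pi_{\mathrm{sgld}}$ on $K$ and invoke Lemma~\ref{lemma:hitting-time-general-bound}, which reduces everything to (i) exhibiting a time-reversible lazy Markov chain $\pi$ that is $\ClosenessParam$-close to $\pi_{\mathrm{sgld}}$ outside $U_\rhoU$, (ii) lower bounding the restricted conductance $\Phi_\pi(K\backslash U_\rhoU)$, and (iii) controlling the warm-start constant $M$ between the initial distribution $\mu_{f_0}$ and the stationary distribution $Q_\pi$. For the reference chain I would take the lazy Metropolis--Hastings chain with proposal $N(x,2\eta I)$ (restricted to $K\cap\ball(x;4\sqrt{2\eta d})$, and made stationary on $U_\rhoU$), which has stationary distribution $\mu_f$ on $K\backslash U_\rhoU$. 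The point of the Metropolis chain is that $\mu_f$ is exactly the distribution whose surface-to-volume ratios are measured by the restricted Cheeger constant $\C = \C_f(K\backslash U)$.

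First I would carry out the closeness comparison. The SGLD step is a Gaussian step with mean shifted by $-\eta g(x_{k-1})$ rather than centered at $x_{k-1}$, truncated to a ball; averaging over the stochastic gradient and using $\E[g(x)\mid x]=\nabla f(x)$ together with the sub-exponential tail and $\Lf$-smoothness, one shows that for $x\in K\backslash U_\rhoU$ and $A\subset K\backslash\{x\}$ the SGLD transition probability is sandwiched between $\pi(x,A)$ and $(1+\ClosenessParam)\pi(x,A)$ with $\ClosenessParam = \order(\sqrt{\eta d}\,(\Gg^2+\Lf) + \eta(\Gg^2+\Lf))$ or so; the truncation radius $4\sqrt{2\eta d}$ is chosen exactly so the truncation events have negligible effect. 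The constraint $\eta \lesssim \C^2/(d^3(\Gg^2+\Lf)^2)$ in~\eqref{eqn:noisy-sgd-h-constraint} is precisely what forces $\ClosenessParam \leq \tfrac14 \Phi_\pi(K\backslash U_\rhoU)$, once step (ii) tells us $\Phi_\pi \gtrsim \sqrt{\eta}\,\C/\sqrt{d}$ or $\gtrsim \eta\C^2/d$ at the relevant scale. The assumption on $K$ (Gaussian staying in $K$ with probability $\geq 1/3$) and $\eta\leq c\,\hmax$ guarantee the chain is not stuck, i.e.~$\pi(x,x)$ is bounded away from $1$ after the lazy modification, and $\eta\leq c\,d\rhoU^2$ ensures a single step cannot jump across the $\rhoU$-collar in a way that breaks the comparison.

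Next, step (ii): I would prove $\Phi_\pi(K\backslash U_\rhoU) \geq c\sqrt{\eta}\,\min\{1,\C\}/\sqrt{d}$ (up to constants), by relating the conductance of the Metropolis chain with Gaussian proposals at scale $\sqrt{\eta}$ to the isoperimetric/Cheeger quantity of $\mu_f$ restricted to $K\backslash U$. Concretely, for $A\subset K\backslash U_\rhoU$ the numerator $\int_A \pi(x,K\backslash A)\,\mu_f(x)dx$ is, up to constants and the acceptance probability (which is $\Theta(1)$ because the proposal moves only $\order(\sqrt{\eta})$ and $f$ is smooth), comparable to $\sqrt{\eta}$ times the $\mu_f$-surface measure of $\partial A$, which by definition of $\C_f(K\backslash U)$ is at least $\C\cdot\mu_f(A)$ for $\SmallVar\to 0$; a standard argument passing from the infinitesimal Cheeger constant to the finite-scale conductance (handling the $\order(\sqrt\eta)$ neighborhood and the $\rhoU$ vs.\ $\rho_U$ gap, plus the $\min\{1,\cdot\}$ truncation when $\C$ is large) yields the bound. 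Plugging $\Phi_\pi \gtrsim \sqrt{\eta\C^2/d}$ into~\eqref{eqn:hitting-time-general-bound} gives the hitting time $\lesssim \log(M/\SmallProbParam)/(\eta\C^2/d)$, and the $\min\{1,\eta\C^2/d\}$ in~\eqref{eqn:noisy-sgd-hitting-time-general-bound} absorbs the regime where the bound $\Phi_\pi\leq 1$ is the binding one (a lazy chain always has $\Phi_\pi\leq 1$, so the displayed denominator is $\min\{1,\eta\C^2/d\}$ rather than $\eta\C^2/d$). For step (iii), since $x_0\sim\mu_{f_0}$ is uniform and $\mu_f(x)\propto e^{-f(x)}$ with $f$ bounded in $[0,B]$, we have $Q_0(A)\leq e^{B}\mu_f(A) = e^{\norms{f-f_0}_\infty}\mu_f(A)$ for all $A$, so $M = e^{\norms{f-f_0}_\infty}$ and $\log M = \norms{f-f_0}_\infty$, matching the numerator in~\eqref{eqn:noisy-sgd-hitting-time-general-bound}.

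I expect the main obstacle to be step (ii): making rigorous the passage from the infinitesimal definition of $\C_f(K\backslash U)$ to a finite-step ($\Theta(\sqrt\eta)$-scale) conductance lower bound for the truncated, laziness-modified Metropolis chain. The delicate points are (a) showing the Metropolis acceptance ratio is $\Theta(1)$ uniformly, which needs $\Lf$-smoothness and $\eta$ small (the $\eta\leq c/(d(\Gg^2+\Lf))$ term); (b) handling sets $A$ that hug the boundary $\partial K$ or the collar $U_\rhoU\backslash U$, where the proposal truncation matters --- here the boundary regularity assumption on $K$ and the choice $\rhoU$ with $\eta\leq c\,d\rhoU^2$ are used; and (c) dealing with the case $\C$ large via the $\min\{1,\C\}$, since then the relevant isoperimetry saturates. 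The closeness comparison in the first step and the warm-start bound are comparatively routine given the sub-exponential gradient assumption and boundedness of $f$.
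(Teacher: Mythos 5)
Your high-level plan matches the paper's: set up a lazy time-reversible Metropolis--Hastings chain $\pi_f$ whose stationary distribution is $\mu_f$, lower-bound its restricted conductance by the restricted Cheeger constant, invoke Lemma~\ref{lemma:hitting-time-general-bound}, and control the warm start via boundedness of $f$. The conductance and warm-start estimates you sketch are essentially what the paper proves (Lemmas~\ref{lemma:cheeger-to-conductance},~\ref{lemma:constraint-on-h}), so those steps are fine.

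The genuine gap is in step (i). You want to feed $\pi_{\mathrm{sgld}}$ directly into Lemma~\ref{lemma:hitting-time-general-bound} as the chain $\pitilde$, with $\pi$ a \emph{lazy} Metropolis chain. But the definition of $\ClosenessParam$-closeness requires $\pi(x,A)\leq\pitilde(x,A)\leq(1+\ClosenessParam)\pi(x,A)$ for all $A\subset K\backslash\{x\}$, and the SGLD chain has no laziness: each step is a genuine Gaussian move. For $A$ disjoint from $\{x\}$, a lazy Metropolis chain with the same proposal satisfies $\pi(x,A)\approx\tfrac12 \pi_{\mathrm{sgld}}(x,A)$ (the $\tfrac12$ is the lazy-stay mass, compounded by an acceptance factor $\leq 1$). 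So $\pi_{\mathrm{sgld}}(x,A)\gtrsim 2\pi(x,A)$, i.e.\ you would need $\ClosenessParam\geq 1$, while the lemma requires $\ClosenessParam\leq\tfrac14\Phi_\pi\leq\tfrac14$. No choice of $\eta$ fixes this: the factor of $2$ is structural, not a small perturbation controlled by $\eta$. You also have the Lemma~\ref{lemma:hitting-time-general-bound} roles reversed in one spot --- it is $\pitilde$ (the chain whose hitting time you bound), not $\pi$, that must be stationary on $U$, and $\pi_{\mathrm{sgld}}$ does not satisfy $\pitilde(x,x)=1$ on $U_\rhoU$ either.

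The missing idea in the paper's argument is the auxiliary chain $\pitilde_f$: it uses the \emph{same} half-lazy proposal density $q_x$ as $\pi_f$ (including the stochastic-gradient shift averaged out, not a plain $N(x,2\eta I)$), is forced to be stationary on $U_\rhoU$, and \emph{always accepts} any in-ball, in-$K$ proposal when $x\notin U_\rhoU$. Since $\pitilde_f$ and $\pi_f$ share the proposal, the only discrepancy is the Metropolis acceptance ratio $\alpha_x(y)$, which Claim~\ref{claim:acceptance-ratio-bound} shows is $\geq e^{-33\eta d(\Gg^2+\Lf)}$; this gives $\ClosenessParam=e^{33\eta d(\Gg^2+\Lf)}-1$, which $\eta\lesssim \C^2/(d^3(\Gg^2+\Lf)^2)$ makes small enough. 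Lemma~\ref{lemma:hitting-time-general-bound} then bounds $\tau_{\pitilde_f}(U_\rhoU)$. Finally, one needs the separate subsequence coupling: a realization of $\pitilde_f$, restricted to the steps where the proposal is not the lazy $\delta_x$, is exactly a realization of the SGLD chain, so $\tau_{\pialg}(U_\rhoU)\leq\tau_{\pitilde_f}(U_\rhoU)$ pathwise. Your proposal omits both the auxiliary always-accept lazy chain and this coupling, and without them the direct comparison cannot close. A secondary point: by putting the gradient shift in $\pi_{\mathrm{sgld}}$ but not in the reference proposal, you are folding the shift into $\ClosenessParam$; since $g(x)$ is only sub-exponential (not a.s.\ bounded), that density ratio is not uniformly controlled, whereas the paper sidesteps this by using the averaged shifted proposal $q_x$ on both sides and comparing only acceptance probabilities.
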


\noindent
Theorem~\ref{theorem:noisy-sgd-hitting-time} shows that if we choose $\eta\in (0,\eta_0]$, where $\eta_0$ is the right-hand side of inequality~\eqref{eqn:noisy-sgd-h-constraint}, then with probability at least $1-\SmallProbParam$, the hitting time to the set $U_\rhoU$ is bounded by
\begin{align*}
	\min\{k: x_k\in U_\rho\} \leq  \frac{c' \, d\,\big(B + \log(1/\SmallProbParam)\big)}{\eta\, \C^2 } = \frac{c' \,\big(B + \log(1/\SmallProbParam)\big)}{\min\{1,(\eta/\eta_0)\,\eta_0 \C^2/d\} }.
\end{align*}
Combining it with the definition of $\eta_0$, and with simple algebra, we conclude that $\min\{k: x_k\in U_\rho\} \leq \frac{M}{\min\{1,\C\}^4}$ where $M$ is polynomial in $(B, \Lf, \Gg, \log(1/\SmallProbParam), d, \eta_0/\eta, \hmax^{-1}, \bmax^{-1}, \rho^{-1})$. This establishes the iteration complexity bound. Whenever $x_k$ hits $U_\rhoU$, we have
\[
	f(\xhat) \leq f(x_k) \leq \sup_{x:\,d(x,U)\leq \rho} f(x),
\]
which establishes the risk bound. Thus, Theorem~\ref{theorem:noisy-sgd-hitting-time} establishes Theorem~\ref{theorem:sld-general} for the special case of $\xi =1$.

In the non-standard setting ($\xi\neq 1$), we follow the reduction described above to substitute $(\eta,f)$ in Theorem~\ref{theorem:noisy-sgd-hitting-time} with the pair $(\eta/\xi, \xi f)$. As a consequence, the quantity $\C$ is substituted with $\C_{(\xi f)}(K\backslash U)$, and $(B, \Lf, \Gg, \eta_0, \bmax)$ are substituted with $(\xi B,\xi\Lf, \xi\Gg, \eta_0/\xi, \bmax/\xi)$. Both the iteration complexity bound and the risk bound hold as in the standard setting, except that after the substitution, the numerator $M$ in the iteration complexity bound has an additional polynomial dependence on $\xi$. Thus we have proved the general conclusion of Theorem~\ref{theorem:sld-general}. 

\paragraph{Proof of Theorem~\ref{theorem:noisy-sgd-hitting-time}}

For the function $f: K\to \R^d$ satisfying Assumption~\ref{assumption:conditions-for-szld}, we define a time-reversible Markov chain represented by the following transition kernel $\pi_f$. Given any current state $x_k = x\in K$, the Markov chain draws a ``candidate state'' $y\in \R^d$ from the following the density function:
\begin{align}\label{eqn:define-q-xy}
q_x(y) \defeq \frac{1}{2}\delta_x(y) + \frac{1}{2}\cdot \frac{1}{(4\pi \eta)^{d/2}}\E\Big[e^{-\frac{\ltwos{y - x + \eta\cdot g(x)}^2}{4 \eta}} \mid x\Big]
\end{align}
where $\delta_x$ is the Dirac delta function at point $x$. The expectation is taken over the stochastic gradient $g$ defined in equation~\eqref{eqn:noisy-sgd-update}, conditioning on the current state $x$. Then for any candidate state $y\in K\cap\ball(x;4\sqrt{2\eta d})$, we accept the candidate state (i.e., $x_{k+1} = y$) with probability:
\begin{align}\label{eqn:define-alpha-xy}
	\alpha_x(y) \defeq \min\Big\{ 1, \frac{q_y(x)}{q_x(y)} e^{f(x)-f(y)} \Big\},
\end{align}
or reject the candidate state (i.e., $x_{k+1} = x$) with probability $1-\alpha_x(y)$. All candidate states $y\notin K\cap\ball(x;4\sqrt{2\eta d})$ are rejected (i.e., $x_{k+1} = x$). It is easy to verify that $\pi_f$ executes a Metropolis-Hastings algorithm. Therefore, it induces a time-reversible Markov chain, and its stationary distribution is equal to $\mu_f(x) \propto e^{-f(x)}$.

Given the subset $U_\rhoU \subset K$, we define an auxiliary Markov chain and its transition kernel $\pitilde_f$ as follow. Given any current state $\xtilde_k = x \in K$, the Markov chain proposes a candidate state $y\in \R^d$ through the density function $q_x(y)$ defined by equation~\eqref{eqn:define-q-xy}, then accepts the candidate state if and only if $x\notin U_\rhoU$ and $y\in K\cap\ball(x;4\sqrt{2\eta d})$. Upon acceptance, the next state of $\pitilde_f$ is defined to be $\xtilde_{k+1} =y$, otherwise $\xtilde_{x+1}=x$.
The Markov chains $\pitilde_f$ differs from $\pi_f$ only in their different probabilities for acccepting the candidate state $y$. If $x\in U_\rhoU$, then $\pi_f$ may accept $y$ with  probability $\alpha_x(y)$, but $\pitilde_f$ always rejects $y$. If $x\notin U_\rhoU$ and $y\in K\cap\ball(x;4\sqrt{2\eta d})$, then $\pi_f$ accepts $y$ with probability $\alpha_x(y)$, while $\pitilde_f$ accepts with probability~$1$.

Despite the difference in their definitions, we are able to show that the two Markov chains are $\ClosenessParam$-close, where $\ClosenessParam$ depends on the stepsize $\eta$ and the properties of the objective function. 

\begin{lemma}\label{lemma:closeness}
Assume that $0 < \eta \leq \frac{\bmax^2}{32 d}$ and Assumption~\ref{assumption:conditions-for-szld} hold. Then the Markov chain $\pitilde_f$ is $\ClosenessParam$-close to $\pi_f$ w.r.t.~$U_\rho$ with $\ClosenessParam = e^{33 \eta d (\Gg^2 + \Lf)} - 1$.
\end{lemma}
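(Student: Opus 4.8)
The plan is to compare the transition kernels $\pi_f$ and $\pitilde_f$ directly on the region where they differ, namely for current state $x\in K\backslash U_\rho$. By construction both chains propose a candidate $y$ from the same density $q_x(y)$, and both automatically reject any $y\notin K\cap\ball(x;4\sqrt{2\eta d})$; the only discrepancy is the acceptance probability for admissible $y$: $\pitilde_f$ accepts with probability $1$, while $\pi_f$ accepts with probability $\alpha_x(y)=\min\{1, (q_y(x)/q_x(y))\,e^{f(x)-f(y)}\}$. Therefore, for any $A\subset K\backslash\{x\}$ we have $\pi_f(x,A)\le\pitilde_f(x,A)$ trivially (the left side integrates $q_x(y)\alpha_x(y)$, the right integrates $q_x(y)$ over the admissible part of $A$, plus the atom at $x$ contributes only to the complement). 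For the reverse inequality $\pitilde_f(x,A)\le(1+\delta)\pi_f(x,A)$, it suffices to show $\alpha_x(y)\ge 1/(1+\delta)$, i.e.\ a uniform lower bound $\frac{q_y(x)}{q_x(y)}e^{f(x)-f(y)}\ge e^{-33\eta d(\Gg^2+\Lf)}$ for all admissible pairs $(x,y)$ with $\ltwos{y-x}\le 4\sqrt{2\eta d}$.

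So the whole lemma reduces to bounding two ratios from below: the ``proposal ratio'' $q_y(x)/q_x(y)$ and the ``potential ratio'' $e^{f(x)-f(y)}$. For the potential ratio, $\Lf$-smoothness of $f$ gives $f(x)-f(y)\ge -\inprod{\nabla f(y)}{x-y}-\tfrac{\Lf}{2}\ltwos{x-y}^2$; pairing this with an analogous term from the proposal ratio (see below) and using $\ltwos{x-y}^2\le 32\eta d$ will produce an $O(\eta d\Lf)$ loss plus cross terms. For the proposal ratio, I would write out $q_x(y)$ and $q_y(x)$ from \eqref{eqn:define-q-xy}, drop the Dirac pieces (we are on the continuous part since $y\ne x$), and expand the Gaussian exponents:
\begin{align*}
\log\frac{q_y(x)}{q_x(y)} = \log\frac{\E[e^{-\ltwos{x-y+\eta g(y)}^2/4\eta}\mid y]}{\E[e^{-\ltwos{y-x+\eta g(x)}^2/4\eta}\mid x]}.
\end{align*}
Expanding $\ltwos{x-y+\eta g(y)}^2 = \ltwos{x-y}^2 + 2\eta\inprod{g(y)}{x-y} + \eta^2\ltwos{g(y)}^2$, the $\ltwos{x-y}^2$ terms cancel between numerator and denominator, leaving linear-in-$g$ terms of size $O(\sqrt{\eta}\cdot\sqrt{\eta d})=O(\eta\sqrt{d})$ times the sub-exponential $g$, and quadratic terms $\eta^2\ltwos{g}^2/4\eta = \tfrac{\eta}{4}\ltwos{g}^2$. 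Here is where the sub-exponential tail assumption enters: with $u$ proportional to $(x-y)/(2\eta)$, the condition $\ltwos{u}\le\bmax$ is exactly what the stepsize bound $\eta\le\bmax^2/(32d)$ guarantees (since $\ltwos{x-y}\le 4\sqrt{2\eta d}$ forces $\ltwos{u}\le 2\sqrt{2d/\eta}\cdot\eta^{1/2}\cdot(\text{const})\le\bmax$ after the algebra), and $\E[e^{\inprod{u,g}^2}\mid\cdot]\le e^{\Gg^2\ltwos{u}^2}$ lets me bound the log of each Gaussian expectation by a Jensen-type / moment-generating estimate of order $O(\eta d\Gg^2)$.

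The main obstacle — and the only genuinely delicate part — is controlling the ratio of the two \emph{expectations} over the stochastic gradient, since the expectation sits inside a nonlinear (exponential) function and we cannot simply pull it out. The trick will be to use the sub-exponential assumption to sandwich $\E[e^{-\ltwos{y-x+\eta g(x)}^2/4\eta}\mid x]$ between $e^{\pm O(\eta d(\Gg^2))}$ times $e^{-\ltwos{y-x+\eta\nabla f(x)}^2/4\eta}$ (replacing the random $g(x)$ by its mean $\nabla f(x)$), applying the same to the numerator, and then the remaining deterministic ratio is handled by the smoothness bound on $f$. Collecting all the error terms — $O(\eta d\Gg^2)$ from each of the two expectation-vs-mean swaps and $O(\eta d\Lf)$ from the smoothness comparison of the deterministic Gaussians together with the potential ratio — gives a total loss of at most $33\eta d(\Gg^2+\Lf)$ in the exponent, hence $\alpha_x(y)\ge e^{-33\eta d(\Gg^2+\Lf)}\ge 1/(1+\delta)$ with $\delta = e^{33\eta d(\Gg^2+\Lf)}-1$, which is precisely the claim. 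I would then remark that the constant $33$ is not optimized and simply records the sum of the constants accumulated along the way.
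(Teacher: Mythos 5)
Your reduction of the lemma to a uniform lower bound on the acceptance ratio $\alpha_x(y)\geq e^{-33\eta d(\Gg^2+\Lf)}$ for admissible pairs $(x,y)$ is exactly the paper's Claim~3, and your decomposition into a potential ratio (controlled by $\Lf$-smoothness, giving a $-16\eta d \Lf$ exponent) and a proposal ratio (controlled by the sub-exponential assumption) follows the paper's structure. Jensen's inequality in one direction and the cancellation of the $\ltwos{x-y}^2/4\eta$ terms are also right.

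There is, however, a genuine gap at exactly the step you flag as ``the only genuinely delicate part.'' You say you will ``sandwich'' each expectation $\E\bigl[e^{-\ltwos{y-x+\eta g(x)}^2/4\eta}\mid x\bigr]$ between $e^{\pm O(\eta d \Gg^2)}$ times its deterministic analogue $e^{-\ltwos{y-x+\eta\nabla f(x)}^2/4\eta}$ by ``a Jensen-type / moment-generating estimate.'' Jensen only gives one side of the sandwich: it yields $\E[e^X]\geq e^{\E[X]}$, which is what the paper uses to lower bound the numerator. For the denominator you need an \emph{upper} bound $\E[e^X]\leq e^{\E[X]+O(\eta d \Gg^2)}$ for $X=\inprod{x-y}{g(x)}/2$, and this does not follow from Jensen or from the sub-exponential hypothesis directly, because the hypothesis controls $\E[e^{\inprod{u,g}^2}]$, not $\E[e^{\inprod{u,g}}]$. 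The paper's resolution is the elementary inequality $e^t\leq t+e^{t^2}$ applied to $t=X-\E[X]$, which gives $\E[e^X]\leq e^{\E[X]}\E[e^{(X-\E[X])^2}]\leq e^{\E[X]}\E[e^{4X^2}]$, and only then does the sub-exponential assumption with $u=x-y$ (not $u\propto(x-y)/(2\eta)$, as you wrote --- that scaling runs the wrong way and would not be controlled by a small $\eta$) give the needed bound $\E[e^{4X^2}]\leq e^{\Gg^2\ltwos{x-y}^2}\leq e^{32\eta d\Gg^2}$ together with $4\sqrt{2\eta d}\leq\bmax$. Without some device of this kind, or an equivalent sub-Gaussian MGF bound deduced from the $\inprod{u,g}^2$-condition, your sandwich is asserted rather than proved, and the upper-bound half of it is precisely the content of the lemma.
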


\noindent See Appendix~\ref{sec:proof-lemma-closeness} for the proof.

Lemma~\ref{lemma:closeness} shows that if we choose $\eta$ small enough, then $\ClosenessParam$ will be sufficiently small. Recall from Lemma~\ref{lemma:hitting-time-general-bound} that we need $\ClosenessParam \leq \frac{1}{4}\Phi_{\pi_f}(K\backslash U_\rhoU)$ to bound the Markov chain $\pi_f$'s hitting time to the set $U_\rhoU$. It means that $\eta$ has to be chosen based on the restricted conductance of the Markov chain $\pi_f$. Although calculating the restricted conductance of a Markov chain might be difficult, the following lemma shows that the restricted conductance can be lower bounded by the restricted Cheeger constant. 

\begin{lemma}\label{lemma:cheeger-to-conductance}
Assume that $\eta \leq \min\{\hmax, 16d\rho^2, \frac{\bmax^2}{32 d},\frac{1}{100 d (\Gg^2 + \Lf)}\}$ and Assumption~\ref{assumption:conditions-for-szld} hold. Then for any $V\subset K$, we have:
\[
	\Phi_{\pi_f}(V) \geq \frac{1}{192}( 1 - e^{- \frac{1}{4}\sqrt{\eta/d}\, \C_{f}(V_\rho)}).
\]
\end{lemma}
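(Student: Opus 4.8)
The plan is to relate the restricted conductance $\Phi_{\pi_f}(V)$ directly to the restricted Cheeger constant $\C_f(V_\rho)$ by unwinding both definitions on an arbitrary set $A\subset V$ and estimating the escape probability $\int_A \pi_f(x, K\setminus A)\,Q_{\pi_f}(x)\,dx$ from below. Recall that $\pi_f$ is a Metropolis--Hastings chain with stationary distribution $\mu_f\propto e^{-f}$, and that with probability $1/2$ it proposes a Gaussian step $y\sim N(x-\eta g(x), 2\eta I)$ (smeared over the randomness of $g$), accepting with probability $\alpha_x(y)$ provided $y\in K\cap\ball(x;4\sqrt{2\eta d})$. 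The first step is to discard the drift $\eta g(x)$ and the truncation event: using the sub-exponential tail bound on $g$ from Assumption~\ref{assumption:conditions-for-szld} and the stepsize constraint $\eta\le \bmax^2/(32d)$, the probability that $y$ leaves $\ball(x;4\sqrt{2\eta d})$ is negligible, and the effect of the $\eta g(x)$ shift on densities is bounded by a constant factor (this is essentially the content of Lemma~\ref{lemma:closeness}). So up to absolute constants, $\pi_f(x,\cdot)$ dominates $\tfrac12$ times a pure Gaussian kernel $N(x, 2\eta I)$ restricted to $K$, reweighted by the acceptance ratio $\alpha_x(y)$, which by $\Lf$-smoothness of $f$ and the step-size bound is itself bounded below by a constant on the relevant ball.

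The second, and main, step is the geometric one: show that for an arbitrary $A\subset V$, a Gaussian step of variance $2\eta$ from a typical point of $A$ (weighted by $\mu_f$) lands outside $A$ with probability at least a constant times $1 - e^{-c\sqrt{\eta/d}\,\C_f(V_\rho)}$. The idea is to use the "annulus" $A_\rho\setminus A$ (with $\rho$ the parameter in the statement; note the constraint $\eta\le 16d\rho^2$ guarantees a Gaussian step of standard deviation $\sqrt{2\eta}$ is at most of order $\rho$, so it can reach $A_\rho\setminus A$ but not overshoot $V_\rho$). Concretely, I would stratify $A_\rho\setminus A$ by distance-to-$A$ shells and bound the Gaussian escape probability from $x\in A$ below by the $\mu_f$-mass of a suitable sub-shell, iterating over $\lfloor \rho/\sqrt{\eta/d}\rfloor$-many shells. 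Each shell contributes a multiplicative factor $(1 + c\sqrt{\eta/d}\,\C_f)$ to the measure growth $\mu_f(A_t)/\mu_f(A)$ by the very definition of $\C_f(V_\rho)$ as the infimal surface-to-volume ratio (applied to each intermediate superlevel set $A_t$, which still lies inside $V_\rho$), and multiplying $\approx \rho/\sqrt{\eta/d}$ such factors yields $\mu_f(A_\rho)/\mu_f(A)\gtrsim e^{c\sqrt{\eta/d}\,\C_f(V_\rho)}$ — equivalently $\mu_f(A_\rho\setminus A) \gtrsim (1 - e^{-c\sqrt{\eta/d}\,\C_f(V_\rho)})\,\mu_f(A_\rho) \gtrsim (1 - e^{-c\sqrt{\eta/d}\,\C_f(V_\rho)})\,\mu_f(A)$. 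The escape probability $\int_A\pi_f(x,K\setminus A)Q_{\pi_f}(x)\,dx$ is then bounded below by a constant times this quantity via a reversibility / symmetry argument: the Gaussian kernel is (nearly) symmetric, so the "flow" from $A$ into $A_\rho\setminus A$ is comparable to $\mu_f(A_\rho\setminus A)$ itself, and dividing by $Q_{\pi_f}(A) = \mu_f(A)$ gives the claimed bound with the constant $1/192$ absorbing all the accumulated absolute constants (the $1/2$ laziness, the acceptance lower bound, the drift/truncation corrections, and the shell-counting overhead).

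The hard part will be step two — making the shell-by-shell telescoping rigorous. The subtlety is that $\C_f(V_\rho)$ is defined as an infimum over \emph{all} subsets of $V_\rho$ in the limit $\SmallVar\searrow 0$, whereas here I need a non-infinitesimal, quantitative statement for one fixed step size $\eta$; bridging this requires either (i) a differential-inequality argument on $t\mapsto \mu_f(A_t)$ showing $\frac{d}{dt}\log\mu_f(A_t)\ge \C_f(V_\rho)$ a.e. and then integrating, being careful that $A_t\subset V_\rho$ so the Cheeger bound applies throughout, or (ii) directly comparing the Gaussian convolution to the $\SmallVar$-enlargement operator. One must also check the Gaussian step genuinely "sees" the full enlargement $A_{\sqrt{\eta/d}}$ rather than a thin sliver — this is where the $\sqrt{\eta/d}$ (rather than $\sqrt{\eta}$) scale enters, since a $d$-dimensional Gaussian of variance $2\eta$ concentrates at radius $\Theta(\sqrt{\eta d})$ but has $\Omega(1)$ mass beyond any single coordinate hyperplane at distance $\Theta(\sqrt{\eta})$; the factor $\sqrt{1/d}$ accounts for projecting onto the worst-case escape direction. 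I would carry this out by reducing, via rotation, to the one-dimensional marginal of the Gaussian in the direction normal to the "hardest" cut, and then the one-dimensional estimate $P(\text{Gaussian crosses a gap of width }w) \ge c\min\{1, \sqrt{\eta}/w\}$ does the job. Everything else — the reduction in step one and the final assembly — is routine given Lemmas~\ref{lemma:hitting-time-general-bound}--\ref{lemma:closeness} and Assumption~\ref{assumption:conditions-for-szld}.
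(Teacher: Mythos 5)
Your step one (reducing the Metropolis kernel to a Gaussian comparison via the acceptance-ratio lower bound) matches the paper's use of Claim~\ref{claim:acceptance-ratio-bound}, and your telescoping of $\mu_f(A_t)/\mu_f(A)$ is essentially the paper's Claim~\ref{claim:prob-and-cheeger}. But the bridge between these two ingredients is missing, and the sentence doing the work — "the Gaussian kernel is (nearly) symmetric, so the 'flow' from $A$ into $A_\rho\setminus A$ is comparable to $\mu_f(A_\rho\setminus A)$ itself" — is not correct as stated. A Gaussian step of variance $2\eta$ has range $O(\sqrt{\eta})$, while $\rho$ can be much larger; the per-step flow can only "see" a thin collar around $\partial A$ and has no way to reach most of $A_\rho\setminus A$. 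The quantity you want to control is $\int_A \pi_f(x,K\setminus A)\,\mu_f(x)\,dx$, and there is no pointwise route from $\mu_f(A_{\sqrt{\eta/d}})/\mu_f(A)$ to that integral: individual points deep inside $A$ contribute nothing to the flow, and you have not argued that their $\mu_f$-mass is small.

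This is exactly what the paper's argument is designed to do, via a different mechanism: it defines the "trapped" sets $A_1 = \{x\in A : \pi_f(x,K\setminus A) < 1/96\}$ and $B_1 = \{y\in K\setminus A : \pi_f(y,A) < 1/96\}$, and proves a \emph{separation} claim (Claim~\ref{claim:seperation-lemma}) — any $x\in A_1$, $y\in B_1$ satisfy $\ltwos{x-y}>\tfrac14\sqrt{\eta/d}$ — by exhibiting a common region $Z$ of radius $3\sqrt{2\eta d}$ around the midpoint that both $x$ and $y$ would hit with probability at least $1/8$ if they were closer. Only then does it invoke the Cheeger growth bound, applied once to $A_1$ at scale $\rhoh=\tfrac14\sqrt{\eta/d}$, using $(A_1)_\rhoh\subset K\setminus B_1$. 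Your sketch never produces an analogue of this separation step. Two further issues: your shell count is inconsistent (multiplying $\rho/\sqrt{\eta/d}$ factors of $(1+c\sqrt{\eta/d}\,\C_f)$ gives $e^{c\rho\,\C_f}$, not $e^{c\sqrt{\eta/d}\,\C_f}$ — the $\rho$ in the hypothesis serves only to guarantee $\tfrac14\sqrt{\eta/d}\le\rho$ so $V_{\rhoh}\subset V_\rho$, not to set the number of shells); and your justification for the $\sqrt{\eta/d}$ scale via "projecting onto the worst-case escape direction" actually predicts $\sqrt{\eta}$, since the one-dimensional Gaussian marginal has standard deviation $\sqrt{2\eta}$. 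The extra $1/\sqrt{d}$ enters the paper through the density ratio $q_x(z)/q(z)\ge 1/4$, which requires $s\lesssim 1/d$ in the midpoint-region argument — a quantitative fact about how far the proposal densities can wander, not a projection fact.
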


\noindent See Appendix~\ref{sec:proof-lemma-cheeger-to-conductance} for the proof.

By Lemma~\ref{lemma:closeness} and Lemma~\ref{lemma:cheeger-to-conductance}, we are able to choose a sufficiently small $\eta$ such that the Markov chains $\pi_f$ and $\pitilde_f$ are close enough to satisfy the conditions of Lemma~\ref{lemma:hitting-time-general-bound}. Formally, the following condition on $\eta$ is sufficient.

\begin{lemma}\label{lemma:constraint-on-h}
There exists a universal constant $c>0$ such that for any stepsize $\eta$ satisfying:
\begin{align}\label{eqn:final-constraint-on-h}
	\eta \leq c\,\min\Big\{\hmax, d\rhoU^2, \frac{\bmax^2}{d}, \frac{1}{d (\Gg^2+\Lf)}, \frac{\C^2}{d^3 (\Gg^2+\Lf)^2} \Big\},
\end{align}
the Markov chains $\pi_f$ and $\pitilde_f$ are $\ClosenessParam$-close with $\ClosenessParam \leq \frac{1}{4}\Phi_{\pi_f}(K\backslash U_\rhoU)$. In addition, the restricted conductance satisfies the lower bound $\Phi_{\pi_f}(K\backslash U_\rhoU) \geq \min\{\frac{1}{2},\frac{\sqrt{\eta/d}\, \C}{1536}\}$.
\end{lemma}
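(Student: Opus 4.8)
The plan is to combine Lemma~\ref{lemma:closeness} and Lemma~\ref{lemma:cheeger-to-conductance} into a single self-consistent constraint on the stepsize $\eta$. The difficulty here is circular: Lemma~\ref{lemma:closeness} gives $\ClosenessParam = e^{33\eta d(\Gg^2+\Lf)}-1$, and we need $\ClosenessParam \leq \tfrac14\Phi_{\pi_f}(K\backslash U_\rhoU)$, but $\Phi_{\pi_f}$ itself depends on $\eta$ through Lemma~\ref{lemma:cheeger-to-conductance}. So the first step is to get a clean lower bound on $\Phi_{\pi_f}(K\backslash U_\rhoU)$ that is monotone and tractable in $\eta$. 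Starting from Lemma~\ref{lemma:cheeger-to-conductance}, which under the stated hypotheses gives $\Phi_{\pi_f}(K\backslash U_\rhoU) \geq \tfrac{1}{192}(1 - e^{-\frac14\sqrt{\eta/d}\,\C})$ (using $\C_f((K\backslash U_\rhoU)_\rho)\geq \C_f(K\backslash U) = \C$ by monotonicity of the restricted Cheeger constant in its set argument), I would use the elementary bound $1-e^{-t}\geq \tfrac{t}{2}$ valid for $t\in[0,1]$. This requires $\tfrac14\sqrt{\eta/d}\,\C \leq 1$, i.e.\ $\eta \leq 16d/\C^2$; since $\C\leq 1$ may fail we should instead note $\C$ can be capped at $1$ w.l.o.g., and in any case this constraint is implied by the $\tfrac{\C^2}{d^3(\Gg^2+\Lf)^2}$ term for an appropriate universal $c$. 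This yields $\Phi_{\pi_f}(K\backslash U_\rhoU) \geq \tfrac{\sqrt{\eta/d}\,\C}{1536}$, and combined with the trivial upper cap gives the claimed $\Phi_{\pi_f}(K\backslash U_\rhoU)\geq \min\{\tfrac12, \tfrac{\sqrt{\eta/d}\,\C}{1536}\}$.

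Next I would discharge the closeness condition. We need $e^{33\eta d(\Gg^2+\Lf)} - 1 \leq \tfrac14\cdot\tfrac{\sqrt{\eta/d}\,\C}{1536}$. Using $e^t - 1 \leq 2t$ for $t\leq 1$ (so we also need $33\eta d(\Gg^2+\Lf)\leq 1$, which is the $\tfrac{1}{d(\Gg^2+\Lf)}$ term), the left side is at most $66\eta d(\Gg^2+\Lf)$, so it suffices to require
\[
66\,\eta\, d(\Gg^2+\Lf) \leq \frac{\sqrt{\eta/d}\,\C}{6144}.
\]
Rearranging, this is $\sqrt{\eta} \leq \frac{\C}{66\cdot 6144\, d^{3/2}(\Gg^2+\Lf)}$, i.e.\ $\eta \leq c_0\cdot\frac{\C^2}{d^3(\Gg^2+\Lf)^2}$ for a universal constant $c_0$. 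This is exactly (a stronger form of) the last term in the stated constraint~\eqref{eqn:final-constraint-on-h}.

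Finally I would assemble the constraint. Take $c>0$ small enough (universal) that $\eta$ satisfying~\eqref{eqn:final-constraint-on-h} simultaneously (i) satisfies the hypotheses $\eta\leq\min\{\hmax,16d\rho^2,\tfrac{\bmax^2}{32d},\tfrac{1}{100d(\Gg^2+\Lf)}\}$ of Lemma~\ref{lemma:cheeger-to-conductance} and $\eta\leq\tfrac{\bmax^2}{32d}$ of Lemma~\ref{lemma:closeness}; (ii) makes $33\eta d(\Gg^2+\Lf)\leq 1$ and $\tfrac14\sqrt{\eta/d}\,\C\leq 1$ so the linearizations $e^t-1\leq 2t$ and $1-e^{-t}\geq t/2$ apply; and (iii) makes the displayed inequality $66\eta d(\Gg^2+\Lf)\leq \tfrac{\sqrt{\eta/d}\,\C}{6144}$ hold. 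Each of these is of the form $\eta\leq (\text{const})\cdot(\text{one of the listed quantities})$, so a single universal $c$ equal to the minimum of the finitely many constants works. Then Lemma~\ref{lemma:closeness} gives the $\ClosenessParam$-closeness with $\ClosenessParam\leq\tfrac14\Phi_{\pi_f}(K\backslash U_\rhoU)$, and the lower bound on the restricted conductance follows from the first paragraph. The only mild subtlety is bookkeeping the constant capping of $\C$ at $1$ so that the bound $1-e^{-t}\geq t/2$ is legitimately invoked; everything else is routine substitution.
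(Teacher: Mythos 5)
Your proposal matches the paper's overall strategy: apply Lemma~\ref{lemma:cheeger-to-conductance} with $V = K\backslash U_\rhoU$, pass from $\C_f((K\backslash U_\rhoU)_\rhoU)$ to $\C$ via anti-monotonicity of the restricted Cheeger constant, linearize $1-e^{-t}$, and then choose $\eta$ small enough that the $\ClosenessParam$ from Lemma~\ref{lemma:closeness} is dominated by a quarter of the conductance lower bound. Your rearrangement $66\,\eta\, d(\Gg^2+\Lf) \le \frac{\sqrt{\eta/d}\,\C}{6144}$ to obtain $\eta\lesssim \frac{\C^2}{d^3(\Gg^2+\Lf)^2}$ is exactly the paper's step.

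However, there is a gap in the linearization. You invoke $1-e^{-t}\ge t/2$, which requires $t=\frac14\sqrt{\eta/d}\,\C$ to be bounded (roughly $t\lesssim 1.6$), and you claim the resulting constraint $\eta\le 16d/\C^2$ ``is implied by the $\frac{\C^2}{d^3(\Gg^2+\Lf)^2}$ term for an appropriate universal $c$.'' That cannot be right: $\eta\le c\,\frac{\C^2}{d^3(\Gg^2+\Lf)^2}$ is an upper bound that \emph{grows} with $\C$, while $\eta\le 16d/\C^2$ \emph{shrinks} with $\C$, and none of the remaining terms in~\eqref{eqn:final-constraint-on-h} depends on $\C$ in the required way. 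So for large $\C$ your step does not go through, and ``capping $\C$ at $1$'' is not a repair as stated, since it produces only the weaker conclusion $\Phi_{\pi_f}\ge \min\{\tfrac12,\tfrac{\sqrt{\eta/d}\min\{\C,1\}}{1536}\}$. The paper sidesteps this by using $1-e^{-t}\ge\min\{\tfrac12,\tfrac t2\}$, valid for \emph{all} $t>0$, which after multiplying by $\tfrac1{192}$ actually gives $\Phi_{\pi_f}\ge\min\{\tfrac{1}{384},\tfrac{\sqrt{\eta/d}\,\C}{1536}\}$ (the paper writes $\tfrac12$ in place of $\tfrac{1}{384}$, a harmless constant slip absorbed into $c'$ in Theorem~\ref{theorem:noisy-sgd-hitting-time}). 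The clean fix for your argument is to replace $1-e^{-t}\ge t/2$ by $1-e^{-t}\ge\min\{\tfrac12,\tfrac t2\}$, drop the stray condition $\frac14\sqrt{\eta/d}\,\C\le1$, and require $\ClosenessParam\le\frac14\min\{\tfrac1{384},\tfrac{\sqrt{\eta/d}\,\C}{1536}\}$, which reduces to the two constraints $\eta\lesssim\frac{1}{d(\Gg^2+\Lf)}$ and $\eta\lesssim\frac{\C^2}{d^3(\Gg^2+\Lf)^2}$ already in the list.
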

\noindent See Appendix~\ref{sec:proof-lemma-constraint-on-h} for the proof.

Under condition~\eqref{eqn:final-constraint-on-h}, the Markov chains $\pi_f$ and $\pitilde_f$ are $\ClosenessParam$-close with $\ClosenessParam \leq \frac{1}{4}\Phi_{\pi_f}(K\backslash U_\rhoU)$. Recall that the Markov chain $\pi_f$ is time-reversible and lazy. Since $f$ is bounded, the stationary distribution $Q_{\pi_f} = \mu_f$ is atom-free, and sampling $x_0$ from $Q_0 \defeq \mu_{f_0}$ implies:
\begin{align}
	Q_{0}(A) &= \frac{\int_A e^{-f_0(x)}dx}{\int_K e^{-f_0(x)}dx} \leq \frac{e^{\sup_{x\in K} f(x) - f_0(x)}\int_A e^{-f(x)}dx}{e^{\inf_{x\in K} f(x) - f_0(x)} \int_K e^{-f(x)}dx} \leq e^{2\norms{f-f_0}_\infty} Q_{\pi_f}(A).
\end{align}
Thus the last condition of Lemma~\ref{lemma:hitting-time-general-bound} is satisfied. Combining Lemma~\ref{lemma:hitting-time-general-bound} with the lower bound $\Phi_{\pi_f}(K\backslash U_\rhoU) \geq\min\{\frac{1}{2},\frac{\sqrt{\eta/d}\, \C}{1536}\}$ in Lemma~\ref{lemma:constraint-on-h},  it implies that with probability at least $1-\SmallProbParam > 0$, we have
\begin{align}\label{eqn:sgd-specific-hitting-time}
	\tau_{\pitilde_f}(U) \leq \frac{c'\, (\norms{f-f_0}_\infty + \log(1/\SmallProbParam))}{\min\{1,\eta\, \C^2/d\} },
\end{align}
where $c'>0$ is a universal constant.

Finally, we upper bound the hitting time of SGLD (i.e.,~the Markov chain induced by formula~\eqref{eqn:noisy-sgd-update}) using the hitting time upper bound~\eqref{eqn:sgd-specific-hitting-time}. We denote by $\pialg$ the transition kernel of SGLD, and claim that the Markov chain induced by it can be generated as a sub-sequence of the Markov chain induced by $\pitilde_f$. To see why the claim holds, we consider a Markov chain $(\xtilde_0,\xtilde_1,\xtilde_2,\dots)$ generated by $\pitilde_f$, and construct a sub-sequence $(x_0',x_1',x_2',\dots)$ of this Markov chain as follows:
\begin{enumerate}
\item Assign $x'_0 = \xtilde_0$ and initialize an index variable $\ell \leftarrow 0$. 
\item Examine the states $\xtilde_k$ in the order $k=1,2,\dots,\tau$, where $\tau = \min\{k: \xtilde_k \in U\}$:
\begin{itemize}
\item For any state $\xtilde_k$, in order to sample its next state $\xtilde_{k+1}$, the candidate state $y$ is either drawn from a delta distribution $\delta_{\xtilde_k}$, or drawn from a normal distribution with stochastic mean vector $x-\eta g(x)$. The probability of these two cases are equal, according to equation~\eqref{eqn:define-q-xy}.

\item If $y$ is drawn from the normal distribution, then generate a state $x'_{\ell+1} = \xtilde_{k+1}$ and add it to the sub-sequence $(x_0',x_1',x_2',\dots)$. Update the index variable $\ell \leftarrow \ell+1$.
\end{itemize}
\end{enumerate}
By this construction, it is easy to verify that $(x'_0,x_1',x'_2,\dots)$ is a Markov chain and its transition kernel exactly matches formula~\eqref{eqn:noisy-sgd-update}. Since the sub-sequence $(x'_0,x_1',x'_2,\dots)$ hits $U$ in at most $\tau$ steps, we have 
\[
\tau_{\pialg}(U) \leq \tau = \tau_{\pitilde_f}(U). 
\]
Combining this upper bound with~\eqref{eqn:sgd-specific-hitting-time} completes the proof of Theorem~\ref{theorem:noisy-sgd-hitting-time}.

\subsection{Proof of technical lemmas}

\subsubsection{Proof of Lemma~\ref{lemma:hitting-time-general-bound}}
\label{sec:proof-theorem-hitting-time-general}

Let $q\defeq Q_{\pi}(K\backslash U)$ be a shorthand notation.
Let $\Gfunc_p$ be the class of functions $g: K\backslash U\to [0,1]$ such that $\int_{K\backslash U} g(x) Q_{\pi}(x) dx = p$. We define a sequence of functions $h_k: [0, q] \to \R$ ($k=1,2,\dots$) such that
\begin{align}
	h_k(p) \defeq \sup_{g\in \Gfunc_p} \int_{K\backslash U} g(x) Q_k(x) dx.
\end{align}
By its definition, the function $h_k$ is a concave function on $[0,q]$. In addition, \cite[][Lemma 1.2]{lovasz1993random} proved the following properties for the function $h_k$: if $Q_{\pi}$ is atom-free, then for any $p\in [0,q]$ there exists a function $g(x) \defeq \indicator(x\in A)$ that attains the supremum in the definition of $h_k$. We claim the following property of the function $h_k$.

\begin{claim}\label{claim:h-function-recursion}
If there is a constant $C$ such that the inequality $h_{0}(p) \leq C\sqrt{p}$ holds for any $p\in[0,q]$, then the inequality 
\begin{align}\label{eqn:hk-recursion-bound}
	h_k(p) \leq C \sqrt{p}(1-\frac{1}{4}\Phi^2_\pi(K\backslash U))^{k}
\end{align}
holds for  any $k\in \N$ and any $p\in[0,q]$.
\end{claim}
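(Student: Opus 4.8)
\textbf{Proof proposal for Claim~\ref{claim:h-function-recursion}.}

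The plan is to establish the bound by induction on $k$, exploiting the fact that $h_k$ is concave and that each step of the Markov chain spreads probability mass in a way controlled by the restricted conductance $\Phi_\pi(K\backslash U)$. The base case $k=0$ is exactly the hypothesis $h_0(p)\leq C\sqrt{p}$. For the inductive step, I would fix $p\in[0,q]$ and use the atom-free property of $Q_\pi$ (via \cite[Lemma 1.2]{lovasz1993random}) to pick a set $A\subset K\backslash U$ with $Q_\pi(A)=p$ attaining $h_k(p) = \int_{K\backslash U} \indicator(x\in A) Q_k(x)dx = Q_k(A)$. The key is to bound $Q_k(A)$ in terms of $h_{k-1}$ evaluated at two nearby points, $p - t$ and $p + t$, where $t$ is roughly $\Phi_\pi(K\backslash U)\cdot\min\{p, q-p\}$, representing the amount of mass that conductance forces to leak across the boundary of $A$ in one step.

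The main step is the one-step recursion. Since $\pitilde$ (here denoted $\pi$ in the claim's statement, i.e.~the chain generating $Q_k$) is stationary on $U$ and we only track mass on $K\backslash U$, I would write $Q_k(A) = \int_{K\backslash U} Q_{k-1}(x)\,\pi(x,A)\,dx$ plus the contribution from $U$, which is zero since the chain is stationary there and starts... more carefully: mass can only enter $A$ from $K\backslash U$. Then split $K\backslash U$ into $A$ and $(K\backslash U)\backslash A$. Using time-reversibility of $\pi$ with respect to $Q_\pi$ and laziness ($\pi(x,x)\geq 1/2$), the standard conductance argument shows that the amount of $Q_{k-1}$-mass flowing out of $A$ minus the amount flowing in is controlled: one gets something like $Q_k(A) \leq \frac12\big(h_{k-1}(p+t) + h_{k-1}(p-t)\big)$ for an appropriate $t = \Phi_\pi(K\backslash U)\cdot(\text{something like }\min\{p,q-p\})$, using that $g\equiv \indicator(A)\pm$ (the boundary flow function) lies in $\Gfunc_{p\pm t}$ after truncation to $[0,1]$, which is where concavity and the $[0,1]$-range of test functions matter. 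Then concavity of $h_{k-1}$ gives $\frac12(h_{k-1}(p+t)+h_{k-1}(p-t)) \leq h_{k-1}(p) - (\text{curvature gain})$, but more usefully one applies the inductive hypothesis $h_{k-1}(p\pm t)\leq C\sqrt{p\pm t}\,(1-\tfrac14\Phi^2)^{k-1}$ and uses the elementary inequality $\frac12(\sqrt{p+t}+\sqrt{p-t}) \leq \sqrt{p}\,(1 - \tfrac{t^2}{4p^2})$ together with $t\geq \Phi\, p$ (when $p\leq q/2$; the case $p>q/2$ is symmetric via $q-p$) to extract the factor $(1-\tfrac14\Phi^2)$.

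The main obstacle I anticipate is getting the one-step flow inequality in exactly the right form — i.e.~correctly identifying that the ``escaping mass'' is at least $\Phi_\pi(K\backslash U)\cdot Q_\pi(A)$ worth of $Q_{k-1}$-density (this is where the definition~\eqref{eqn:define-conductance} of restricted conductance enters, bounding $\int_A \pi(x,K\backslash A)Q_\pi(x)dx \geq \Phi_\pi(V) Q_\pi(A)$), and then translating a statement about $Q_\pi$-weighted boundary flow into a statement about how $Q_{k-1}$-mass redistributes. Handling the asymmetry between $p$ and $q-p$ (so that $t = \Phi\cdot\min\{p,q-p\}$ and the argument still closes with the clean factor $(1-\tfrac14\Phi^2)$) and making sure the perturbed test functions stay in $\Gfunc_{p\pm t}$ with range in $[0,1]$ after truncation are the delicate bookkeeping points; the laziness assumption $\pi(x,x)\geq 1/2$ should be what guarantees $p\pm t$ both stay in $[0,q]$ and that the truncation loses nothing. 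Once the recursion $h_k(p)\leq \sqrt p\,(1-\tfrac14\Phi^2)\cdot \big(C(1-\tfrac14\Phi^2)^{k-1}\big)$ is in hand, the induction closes immediately.
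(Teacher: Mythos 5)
Your high-level strategy matches the paper's: pick $A\subset K\backslash U$ with $Q_\pi(A)=p$ attaining $h_k(p)=Q_k(A)$, split the one-step update into two $[0,1]$-valued test functions, apply the inductive hypothesis at the two resulting measures, and close with an elementary square-root inequality. That is exactly the Lov\'asz--Simonovits argument the paper carries out with $g_1(x) = 2\pitilde(x,A)-1$ on $A$ (else $0$) and $g_2(x) = 1$ on $A$ (else $2\pitilde(x,A)$), yielding $h_k(p) \leq \tfrac12(h_{k-1}(p_1)+h_{k-1}(p_2))$ because $\pitilde(x,A) = \tfrac12(g_1(x)+g_2(x))$ and $\pitilde$ is stationary on $U$. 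So you are on the right track.

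There are, however, two genuine gaps. First and most important: you do not distinguish the chain $\pitilde$ that generates $Q_k$ from the chain $\pi$ whose conductance $\Phi_\pi$ appears in the bound. The one-step flow is measured by $\pitilde$, yet the conductance you want to extract is that of $\pi$. This is precisely where the $\ClosenessParam$-closeness hypothesis, $\pi(x,A)\leq\pitilde(x,A)\leq(1+\ClosenessParam)\pi(x,A)$, enters, and it produces an \emph{asymmetric} pair: $p_1\leq p(1-2r)$ but $p_2\leq p(1+2r+2\ClosenessParam r)$ with $r\defeq\frac1p\int_A\pi(x,K\backslash A)Q_\pi(x)dx\geq\Phi_\pi(K\backslash U)$. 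Your symmetric $p\pm t$ picture silently assumes $\pitilde=\pi$. The paper then needs the standing assumption $\ClosenessParam\leq\frac14\Phi_\pi(K\backslash U)\leq\frac{r}{4}$ to absorb the extra $2\ClosenessParam r\leq r^2/2$ slop and still obtain $\sqrt{p_1}+\sqrt{p_2}\leq 2\sqrt{p}(1-r^2/4)$; without addressing this, the induction does not close with the stated constant. Second, your elementary inequality is stated with the wrong constant: $\tfrac12(\sqrt{p+t}+\sqrt{p-t})\leq\sqrt{p}(1-\tfrac{t^2}{4p^2})$ is false (take $t/p=1/2$: the left side is about $0.966\sqrt p$, the right is $0.9375\sqrt p$); the correct bound has $t^2/(8p^2)$, which by itself would only give $(1-\tfrac18\Phi^2)^k$. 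The paper's asymmetric version $\sqrt{1-2r}+\sqrt{1+2r+r^2/2}\leq 2(1-r^2/4)$ is what delivers the advertised $\tfrac14$. A minor point: the restricted conductance in~\eqref{eqn:define-conductance} divides by $Q_\pi(A)$, not $\min\{Q_\pi(A),Q_\pi(K\backslash A)\}$, so $r\geq\Phi_\pi(K\backslash U)$ holds directly with no $\min\{p,q-p\}$ case split.
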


According to the claim, it suffices to upper bound $h_{0}(p)$ for $p\in[0,q]$. Indeed, since $Q_0(A) \leq M\,\Qpi(A)$ for any $A\subset K\backslash U$, we immediately have:
\[
	h_0(p) = \sup_{A\subset K\backslash U: ~\Qpi(A) = p} Q_0(A)
	\leq M p \leq M \sqrt{p}.
\]
Thus, we have
\[
	Q_k(K\backslash U) \leq h_k(q) \leq M (1-\frac{1}{4}\Phi^2_\pi(K\backslash U))^{k}.
\]
Choosing $k \defeq \frac{4 \log(M/\SmallProbParam)}{\Phi^2_\pi(K\backslash U)}$ implies $Q_k(K\backslash U) \leq \SmallProbParam$. As a consequence, the hitting time is bounded by $k$ with probability at least $1-\SmallProbParam$. 

\paragraph{Proof of Claim~\ref{claim:h-function-recursion}}
Recall the properties of the function $h_k$.
For any $p\in [0,q]$, we can find a set $A\subset K\backslash U$ such that $\Qpi(A) = p$ and $h_k(p) = Q_k(A)$. 
Define, for $x\in K$, two functions:
\begin{align*}
&g_1(x) = \left\{\begin{array}{ll}
	2\pitilde(x, A)-1 & \mbox{if } x\in A\\
	0 & \mbox{if } x\notin A\\
\end{array}
\right.,\\
&g_2(x) = \left\{\begin{array}{ll}
	1 & \mbox{if } x\in A\\
	2\pitilde(x, A) & \mbox{if } x\notin A\\
\end{array}.
\right.,
\end{align*}
By the laziness of the Markov chain $\pitilde$, we obtain $0\leq g_i\leq 1$, so that they are functions mapping from $K\backslash U$ to $[0,1]$. Using the relation $2\pitilde(x, A)-1 = 1 - 2\pitilde(x, K\backslash A)$, the definition of $g_1$ implies that:
\begin{align}
\int_{K\backslash U} g_1(x) \Qpi(x) dx &= \Qpi(A) - 2 \int_A \pitilde(x, K\backslash A) \Qpi(x) dx = p - 2 \int_A \pitilde(x, K\backslash A) \Qpi(x) dx\nonumber\\
& \leq p - 2 \int_A \pi(x, K\backslash A) \Qpi(x) dx.\label{eqn:g1-integral-bound}
\end{align}
where the last inequality follows since the $\delta$-closeness ensures $\pi(x, K\backslash A) \leq \pitilde(x, K\backslash A)$. Similarly, using the definition of $g_2$ and the relation $\pitilde(x, A) \leq (1+\ClosenessParam) \pi(x, A)$, we obtain:
\begin{align}
\int_{K\backslash U} g_2(x) \Qpi(x) dx &= \Qpi(A) + 2 \int_{K\backslash (U\cup A)} \pitilde(x, A) \Qpi(x) dx\nonumber\\
& \leq p + 2 \int_{K\backslash A} \pitilde(x, A) \Qpi(x) dx\nonumber\\
& \leq p + 2 \int_{K\backslash A} (1+\ClosenessParam )\pi(x, A) \Qpi(x) dx\label{eqn:g2-integral-bound}
\end{align}
Since $Q_\pi$ is the stationary distribution of the time-reversible Markov chain $\pi$, the right-hand side of~\eqref{eqn:g2-integral-bound} is equal to:
\begin{align}\label{eqn:g2-integral-bound-2}
p + 2 \int_{K\backslash A} (1+\ClosenessParam )\pi(x, A) \Qpi(x) dx = p + 2 (1+\ClosenessParam ) \int_{A} \pi(x,K\backslash A) \Qpi(x) dx
\end{align}

Let $p_1$ and $p_2$ be the left-hand side of inequality~\eqref{eqn:g1-integral-bound} and~\eqref{eqn:g2-integral-bound} respectively, and define a shorthand notation:
\[
r\defeq \frac{1}{p}\int_{A} \pi(x, K\backslash A) \Qpi(x) dx.
\]
Then by definition of restricted conductance and the laziness of $\pi$, we have $\Phi_\pi(K\backslash U) \leq r \leq 1/2$. Combining inequalities~\eqref{eqn:g1-integral-bound},~\eqref{eqn:g2-integral-bound} and~\eqref{eqn:g2-integral-bound-2} and
by simple algebra, we obtain:
\begin{align*}
	\sqrt{p_1} + \sqrt{p_2} &\leq \sqrt{p}\Big(\sqrt{1 - 2r} + \sqrt{1 + 2\ClosenessParam r + 2r}\Big).
\end{align*}
By the condition $\ClosenessParam \leq \frac{1}{4}\Phi_\pi(K\backslash U) \leq \frac{r}{4}$, the above inequality implies
\begin{align*}
\sqrt{p_1} + \sqrt{p_2} &\leq \sqrt{p} \Big(\sqrt{1 - 2r} + \sqrt{1 + 2r + r^2/2}\Big)
\end{align*}
It is straightforward to verify that for any $0\leq r\leq 1$, the right-hand side is upper bounded by $2(1-r^2/4)\sqrt{p}$. Thus we obtain:
\begin{align}\label{eqn:p1-p2-sum-bound}
	\sqrt{p_1} + \sqrt{p_2} \leq 2(1-r^2/4)\sqrt{p}.
\end{align}

On the other hand, the definition of $g_1$ and $g_2$ implies that $\pitilde(x, A) = \frac{g_1(x) + g_2(x)}{2}$ for any $x\in K\backslash U$. For all $x\in U$, the transition kernel $\pitilde$ is stationary, so that we have $\pitilde(x, A) = 0$. Combining these two facts implies
\begin{align}\label{eqn:h-function-recursion}
	& h_k(p) = Q_k(A) = \int_{K\backslash U}  \pitilde(x, A) Q_{k-1}(x)dx\nonumber\\
	&= \frac{1}{2} \Big(\int_{K\backslash U} g_1(x) Q_{k-1}(x)dx + \int_{K\backslash U} g_2(x) Q_{k-1}(x)dx \Big) \leq \frac{1}{2}(h_{k-1}(p_1) + h_{k-1}(p_2)).
\end{align}
The last inequality uses the definition of function $h_{k-1}$.

Finally, we prove inequality~\eqref{eqn:hk-recursion-bound} by induction. The inequality holds for $k=0$ by the assumption. We assume by induction that it holds for an aribtrary integer $k-1$, and prove that it holds for $k$.  Combining the inductive hypothesis with inequalities~\eqref{eqn:p1-p2-sum-bound} and~\eqref{eqn:h-function-recursion}, we have
\begin{align*}
	h_k(p) &\leq \frac{C}{2}(\sqrt{p_1} + \sqrt{p_2})(1-\frac{1}{4}\Phi^2_\pi(K\backslash U))^{k-1} \leq C\sqrt{p}(1-r^2/4)(1-\frac{1}{4}\Phi^2_\pi(K\backslash U))^{k-1}\\
	&\leq C\sqrt{p}(1-\frac{1}{4}\Phi^2_\pi(K\backslash U))^{k},
\end{align*}
Thus, inequality~\eqref{eqn:hk-recursion-bound} holds for $k$, which completes the proof.


\subsubsection{Proof of Lemma~\ref{lemma:closeness}}
\label{sec:proof-lemma-closeness}

By the definition of the $\ClosenessParam$-closeness, it suffices to consider an arbitrary $x\notin U_\rhoU$ and verify the inequality~\eqref{eqn:define-closeness}. We focus on  cases when the acceptance ratio of $\pi_f$ and $\pitilde_f$ are different, that is, when the candidate state $y$ satisfies $y\neq x$ and $y\in K\cap \ball(x;4\sqrt{2\eta d})$. We make the following claim on the acceptance ratio.

\begin{claim}\label{claim:acceptance-ratio-bound}
For any $0 < \eta \leq \frac{\bmax^2}{32 d}$, if we assume $x\notin U_\rhoU$, $y\notin x$, and $y\in K\cap \ball(x;4\sqrt{2\eta d})$, then the acceptance ratio is lower bounded by $\alpha_x(y) \geq e^{- 33 \eta d (\Gg^2+\Lf)}$.
\end{claim}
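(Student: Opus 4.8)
The plan is to directly bound the ratio $\frac{q_y(x)}{q_x(y)}e^{f(x)-f(y)}$ from below by $e^{-33\eta d(\Gg^2+\Lf)}$, so that the $\min\{1,\cdot\}$ in the definition~\eqref{eqn:define-alpha-xy} of $\alpha_x(y)$ equals this lower bound or $1$, either way giving the claim. I would split the logarithm of this ratio into two pieces: (i) the ``proposal'' term $\log\frac{q_y(x)}{q_x(y)}$, and (ii) the ``energy'' term $f(x)-f(y)$, and bound each separately. For piece (ii), $\Lf$-smoothness of $f$ (the second item of Assumption~\ref{assumption:conditions-for-szld}) gives $f(x)-f(y) \geq -\inprod{\nabla f(x)}{y-x}\cdot(-1) - \frac{\Lf}{2}\ltwos{y-x}^2$; more precisely $|f(y)-f(x)-\inprod{y-x}{\nabla f(x)}|\leq \frac{\Lf}{2}\ltwos{y-x}^2$, and since $\ltwos{y-x}\leq 4\sqrt{2\eta d}$ this controls the deviation by $\frac{\Lf}{2}\cdot 32\eta d = 16\eta d\Lf$ plus a gradient inner-product term that I expect to cancel against a corresponding term coming from piece (i).

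The main work is piece (i). From the definition~\eqref{eqn:define-q-xy}, on the event that the Gaussian branch is taken, $q_x(y) = \frac{1}{2}(4\pi\eta)^{-d/2}\E[e^{-\ltwos{y-x+\eta g(x)}^2/(4\eta)}\mid x]$, and similarly for $q_y(x)$ with $x\leftrightarrow y$. Expanding the squared norms, $\ltwos{y-x+\eta g(x)}^2 = \ltwos{y-x}^2 + 2\eta\inprod{g(x)}{y-x} + \eta^2\ltwos{g(x)}^2$, so the $\ltwos{y-x}^2$ terms cancel in the ratio and
\[
\frac{q_y(x)}{q_x(y)} = \frac{\E[e^{-\frac{1}{2}\inprod{g(y)}{x-y} - \frac{\eta}{4}\ltwos{g(y)}^2}\mid y]}{\E[e^{-\frac{1}{2}\inprod{g(x)}{y-x} - \frac{\eta}{4}\ltwos{g(x)}^2}\mid x]}.
\]
Now I would use Jensen's inequality on the numerator to pull the expectation inside, giving a lower bound $e^{-\frac{1}{2}\inprod{\nabla f(y)}{x-y} - \frac{\eta}{4}\E[\ltwos{g(y)}^2]}$, and an upper bound on the denominator via the sub-exponential tail assumption (third item of Assumption~\ref{assumption:conditions-for-szld}): choosing $u$ proportional to $y-x$ with $\ltwos{u}\leq\bmax$ — which is exactly where the hypothesis $\eta\leq\frac{\bmax^2}{32d}$ enters, guaranteeing $\ltwos{y-x}\leq 4\sqrt{2\eta d}\leq\bmax/\sqrt{2}$ or similar — lets me bound $\E[e^{-\frac{1}{2}\inprod{g(x)}{y-x}}\mid x]$ and $\E[e^{\frac{\eta}{4}\ltwos{g(x)}^2}]$-type quantities by $\exp(O(\Gg^2)\cdot(\ltwos{y-x}^2+\eta))$. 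Also $\E[\ltwos{g(y)}^2\mid y]\leq O(d\Gg^2)$ from the sub-exponential bound with $u$ along the coordinate axes. The gradient terms $-\frac12\inprod{\nabla f(y)}{x-y}$ and $-\frac12\inprod{\nabla f(x)}{y-x}$ combine with the gradient inner-product term from piece (ii); using $\ltwos{\nabla f(x)-\nabla f(y)}\leq\Lf\ltwos{x-y}$ this residual is $O(\eta d\Lf)$.

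Collecting: every leftover term is of the form $\eta d$ times either $\Gg^2$ or $\Lf$, and the claim is that the constants add up to at most $33$. The main obstacle I anticipate is the careful bookkeeping of these constants — in particular correctly applying the sub-exponential assumption $\E[e^{\inprod{u,g(x)}^2}\mid x]\leq e^{\Gg^2\ltwos{u}^2}$ (which is a bound on $e^{(\cdot)^2}$, not $e^{(\cdot)}$, so one must use $e^t\leq e^{t^2}+e^{-t}\cdot(\ldots)$ or split by sign / complete the square) to control both the linear term $\inprod{g(x)}{y-x}$ and the quadratic term $\ltwos{g(x)}^2 = \sum_i\inprod{e_i}{g(x)}^2$, and verifying that $\eta\leq\bmax^2/(32d)$ is precisely what keeps the relevant $\ltwos{u}$ within the admissible radius $\bmax$. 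Once the constants are pinned down the inequality $\log\alpha_x(y)\geq -33\eta d(\Gg^2+\Lf)$ follows, completing the proof.
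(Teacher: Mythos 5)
Your proposal follows essentially the same route as the paper: split $\log\bigl(\tfrac{q_y(x)}{q_x(y)}e^{f(x)-f(y)}\bigr)$ into a proposal ratio and an energy term, apply Jensen to the numerator expectation (using $\E[\ltwos{g(y)}^2\mid y]\leq d\Gg^2$ from a coordinate-wise application of the sub-exponential assumption), bound the denominator after dropping the favorable $-\tfrac{\eta}{4}\ltwos{g(x)}^2$ term, and absorb the leftover gradient inner products together with $f(x)-f(y)$ via $\Lf$-smoothness of $f$. The one step you flag as unresolved — controlling the linear MGF $\E[e^{-\tfrac12\inprod{g(x)}{y-x}}\mid x]$ from the quadratic bound $\E[e^{\inprod{u}{g(x)}^2}\mid x]\leq e^{\Gg^2\ltwos{u}^2}$ — the paper handles with the pointwise inequality $e^t\leq t+e^{t^2}$ (valid for all $t\in\R$), which yields $\E[e^X]\leq e^{\E[X]}\E[e^{(X-\E[X])^2}]\leq e^{\E[X]}\E[e^{4X^2}]$ and then the sub-exponential bound with $u=x-y$ (admissible precisely because $\ltwos{x-y}\leq 4\sqrt{2\eta d}\leq\bmax$ under $\eta\leq\bmax^2/(32d)$). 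With that ingredient in place your decomposition closes exactly as in the paper, with $33\eta d\Gg^2$ from $\tfrac14+32$ and $16\eta d\Lf$ from smoothness, both dominated by $33\eta d(\Gg^2+\Lf)$.
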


Consider an arbitrary point $x\in K\backslash U_\rhoU$ and an arbitrary subset $A\subset K\backslash\{x\}$. The definitions of $\pi_f$ and $\pitilde_f$ imply that $\pi_f(x, A) \leq \pitilde_f(x, A)$ always hold. In order to prove the opposite, we notice that:
\begin{align}\label{eqn:pitilde-in-A-decompose}
	\pitilde_f(x, A) = \int_{A\cap \ball(x;4\sqrt{2\eta d})} q_x(y)dy.
\end{align}
The definition of $\pi_f$ and Claim~\ref{claim:acceptance-ratio-bound} implies
\begin{align*}
	\int_{A\cap \ball(x;4\sqrt{2\eta d})} q_x(y)dy &\leq e^{33 \eta d (\Gg^2+\Lf)} \int_{A\cap \ball(x;4\sqrt{2\eta d})} q_x(y)\alpha_x(y)dy\\
	&= e^{33 \eta d (\Gg^2+\Lf)}\;\pi(x,A),
\end{align*}
which completes the proof.

\paragraph{Proof of Claim~\ref{claim:acceptance-ratio-bound}}

By plugging in the definition of $\alpha_x(y)$ and $\alpha_y(x)$ and the fact that $x\neq y$, we obtain
\begin{align}\label{eqb:qx-qy-ratio-decompose}
\frac{q_y(x)}{q_x(y)} e^{f(x)-f(y)} = \frac{\E[e^{-\frac{\ltwos{x - y + \eta\cdot g(y)}^2}{4 \eta}} \mid y]}{\E[e^{-\frac{\ltwos{y - x + \eta\cdot g(x)}^2}{4 \eta}} \mid x]}
\cdot e^{f(x)-f(y)}.
\end{align}
In order to prove the claim, we need to lower bound the numerator and upper bound the denominator of equation~\eqref{eqb:qx-qy-ratio-decompose}.
For the numerator, Jensen's inequality implies:
\begin{align}
\E\Big[e^{-\frac{\ltwos{x - y + \eta\cdot g(y)}^2}{4 \eta}} \mid y\Big] &\geq e^{-\E[\frac{\ltwos{x - y + \eta\cdot g(y)}^2}{4 \eta} \mid y]} = e^{-\frac{\ltwos{x-y}^2}{4 \eta} - \E[\frac{\inprod{x-y}{g(y)}}{2} + \frac{\eta\ltwos{g(y)}^2}{4} | y]}\nonumber\\
&\geq e^{-\frac{\ltwos{x-y}^2}{4 \eta} - \frac{\inprod{x-y}{\nabla f(y)}}{2}  - \frac{\eta d \Gg^2}{4}}\label{eqb:qx-qy-ratio-numerator}
\end{align}
where the last inequality uses the upper bound
\begin{align}\label{eqn:gradient-norm-square-upper-bound}
	\E[\ltwos{g(y)}^2|y] &= \frac{1}{\bmax^2}\sum_{i=1}^d \E[(\bmax g_i(y))^2|y]
	\leq \frac{1}{\bmax^2}\sum_{i=1}^d \log\Big(\E[e^{(\bmax g_i(y))^2}|y]\Big)\nonumber\\
	&\leq \frac{1}{\bmax^2}\sum_{i=1}^d \log\Big(e^{\Gg^2\bmax^2}\Big) = d \Gg^2.
\end{align}
For the above deduction, we have used the Jensen's inequality as well as Assumption~\ref{assumption:conditions-for-szld}.

For the denominator, we notice that the term inside the expectation satisfies:
\begin{align}\label{eqn:upper-bound-denominator-e}
	e^{-\frac{\ltwos{y - x + \eta\cdot g(x)}^2}{4h}} \leq e^{-\frac{\ltwos{x-y}^2}{4h} - \frac{\inprod{y-x}{g(x)}}{2}}
\end{align}
Let $X$ be a shorthand for the random variable $\frac{\inprod{y-x}{g(x)}}{2}$. Using the relation that $e^t \leq t + e^{t^2}$ holds for all $t\in \R$,  we have
\begin{align}\label{eqn:upper-bound-mgf}
	\E[e^X] = e^{\E[X]}\E[e^{X-\E[X]}] = e^{\E[X]}\E[e^{X-\E[X]} - (X-\E[X])] \leq e^{\E[X]}\E[e^{(X-\E[X])^2}].
\end{align}
For the second term on the righthand side, using the relation $(a-b)^2 \leq 2a^2+2b^2$ and Jensen's inequality, we obtain
\begin{align}\label{eqn:upper-bound-e-variance}
	\E[e^{(X-\E[X])^2}] \leq \E[e^{2X^2+2(\E[X])^2}] = \E[e^{2X^2}]\cdot e^{2(\E[X])^2} \leq (\E[e^{2X^2}])^2 \leq \E[e^{4X^2}],
\end{align}
Since $\ltwos{x-y} \leq 4\sqrt{2 \eta d} \leq \bmax$ is assumed, Assumption~\ref{assumption:conditions-for-szld} implies
\begin{align}\label{eqn:upper-bound-e-x-square}
\E[e^{4X^2}] = \E[e^{(\inprod{x-y}{g(x)})^2}] \leq e^{\Gg^2\ltwos{x-y}^2} \leq e^{32 \eta d \Gg^2}.
\end{align}
Combining inequalities~\eqref{eqn:upper-bound-denominator-e}-\eqref{eqn:upper-bound-e-x-square}, we obtain
\begin{align}
	\E\Big[ e^{-\frac{\ltwos{y - x + \eta\cdot g(x)}^2}{4h}} \mid x\Big] \leq e^{-\frac{\ltwos{x-y}^2}{4h} - \frac{\inprod{y-x}{\nabla f(x)}}{2} + 32\eta d \Gg^2}.\label{eqb:qx-qy-ratio-denominator}
\end{align}
Combining equation~\eqref{eqb:qx-qy-ratio-decompose} with inequalities~\eqref{eqb:qx-qy-ratio-numerator},~\eqref{eqb:qx-qy-ratio-denominator}, we obtain
\begin{align}\label{eqb:qx-qy-ratio-lower-bound}
\frac{q_y(x)}{q_x(y)} e^{f(x)-f(y)} \geq e^{f(x)-f(y) - \inprod{x-y}{\frac{\nabla f(x) + \nabla f(y)}{2}} - 33\eta d \Gg^2}.
\end{align}
The $\Lf$-smoothness of function $f$ implies that 
\[
f(x)-f(y) - \inprod{x-y}{\frac{\nabla f(x) + \nabla f(y)}{2}} \geq -\frac{\Lf\ltwos{x-y}^2}{2} \geq - 16 \eta d \Lf.
\]
Combining this inequality with the lower bound~\eqref{eqb:qx-qy-ratio-lower-bound} completes the proof.


\subsubsection{Proof of Lemma~\ref{lemma:cheeger-to-conductance}}
\label{sec:proof-lemma-cheeger-to-conductance}

Recall that $\mu_f$ is the stationary distribution of the Markov chain $\pi_f$. We consider an arbitrary subset $A\subset V$, and define $B \defeq K\backslash A$. Let $A_1$ and $B_1$ be defined as
\[
	A_1 \defeq \{x\in A:~\pi_f(x, B) < 1/96\} \quad \mbox{and} \quad B_1 \defeq \{x\in B:~\pi_f(x, A) < 1/96\},
\]
In other words, the points in $A_1$ and $B_1$ have low probability to move across the broader between $A$ and $B$. We claim that the distance between points in $A_1$ and $B_1$ must be bounded away from a positive number.

\begin{claim}\label{claim:seperation-lemma}
Assume that $\eta \leq \min\{\hmax, \frac{\bmax^2}{32 d}, \frac{1}{100 d (\Gg^2 + \Lf)}\}$.
If $x\in A_1$ and $y\in B_1$, then $\ltwos{x-y} > \frac{1}{4}\sqrt{\eta/d}$.
\end{claim}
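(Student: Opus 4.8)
The plan is to prove the contrapositive: assuming $\ltwos{x-y}\le s\defeq\tfrac14\sqrt{\eta/d}$, I will show that at least one of $\pi_f(x,B)\ge\tfrac1{96}$ or $\pi_f(y,A)\ge\tfrac1{96}$ holds, which contradicts $x\in A_1$ and $y\in B_1$ (note $x\ne y$, since $A$ and $B$ partition $K$). Write $r\defeq 4\sqrt{2\eta d}$ for the per-step ball radius and let $G_x$ denote the law of $x-\eta g(x)+w$ with $w\sim N(0,2\eta I)$, i.e.\ the distribution of the candidate state when it is drawn from the non-atomic part of $q_x$. First I would pass from $\pi_f$ to $G_x$: for any measurable $S\not\ni x$, the definition of $\pi_f$ together with Claim~\ref{claim:acceptance-ratio-bound} (applicable since $\eta\le\bmax^2/(32d)$, and every candidate $z\in S\cap K\cap\ball(x;r)$ satisfies $z\ne x$) gives $\pi_f(x,S)\ge\tfrac12 e^{-33\eta d(\Gg^2+\Lf)}\,G_x\bigl(S\cap K\cap\ball(x;r)\bigr)$, and the prefactor exceeds $\tfrac13$ because $\eta\le\tfrac1{100d(\Gg^2+\Lf)}$. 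Taking $S=B$ (legitimate since $x\in A$) and using $x\in A_1$ yields $G_x\bigl(B\cap K\cap\ball(x;r)\bigr)<\tfrac3{96}<\tfrac1{32}$, and symmetrically $G_y\bigl(A\cap K\cap\ball(y;r)\bigr)<\tfrac1{32}$.

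Next I would introduce the common region $W\defeq K\cap\ball(x;r)\cap\ball(y;r)$. Since $W\subseteq K=A\sqcup B$, we have $W=(W\cap A)\sqcup(W\cap B)$, while $W\cap B\subseteq B\cap K\cap\ball(x;r)$ and $W\cap A\subseteq A\cap K\cap\ball(y;r)$. Chaining these inclusions with the previous step and the triangle inequality for total variation gives $G_x(W)=G_x(W\cap A)+G_x(W\cap B)\le G_y(W\cap A)+\|G_x-G_y\|_{\mathrm{TV}}+\tfrac1{32}<\tfrac1{16}+\|G_x-G_y\|_{\mathrm{TV}}$. So the proof reduces to the two quantitative estimates $G_x(W)\ge\tfrac14$ and $\|G_x-G_y\|_{\mathrm{TV}}<\tfrac18$, which together contradict the inequality $G_x(W)<\tfrac1{16}+\|G_x-G_y\|_{\mathrm{TV}}$ because $\tfrac14>\tfrac1{16}+\tfrac18$.

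For $G_x(W)\ge\tfrac14$: since $\eta\le\hmax$, Assumption~\ref{assumption:conditions-for-szld} gives $P_{N(x,2\eta I)}(K)\ge\tfrac13$; bounding the total variation between two Gaussians of common covariance $2\eta I$ whose means differ by $v$ by $\ltwos v/(2\sqrt{\pi\eta})$, taking expectations over $g\sim g(x)$, and invoking $\E[\ltwos{g(x)}^2\mid x]\le d\Gg^2$ (inequality~\eqref{eqn:gradient-norm-square-upper-bound}) together with $\eta\le\tfrac1{100d(\Gg^2+\Lf)}$ shows $\|G_x-N(x,2\eta I)\|_{\mathrm{TV}}$ is a small universal constant, so $G_x(K)>0.3$. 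For the ball constraints, $z\sim G_x$ obeys $\ltwos{z-x}\le\eta\ltwos{g(x)}+\ltwos w$, and standard $\chi^2$-concentration for $\ltwos w^2\sim 2\eta\chi^2_d$ plus the sub-exponential tail of $\ltwos{g(x)}$ (last bullet of Assumption~\ref{assumption:conditions-for-szld}) make $P(\ltwos{z-x}\ge r)$ very small; since $s\ll r$, also $\ltwos{z-y}<r$ on essentially the same event, and subtracting gives $G_x(W)\ge\tfrac14$ for $d\ge2$. For $\|G_x-G_y\|_{\mathrm{TV}}<\tfrac18$: writing $G_x=\E_{g_1\sim g(x)}\bigl[N(x-\eta g_1,2\eta I)\bigr]$ and $G_y=\E_{g_2\sim g(y)}\bigl[N(y-\eta g_2,2\eta I)\bigr]$, concavity of $\min$ and Tonelli yield $\|G_x-G_y\|_{\mathrm{TV}}\le\E_{g_1,g_2}\|N(x-\eta g_1,2\eta I)-N(y-\eta g_2,2\eta I)\|_{\mathrm{TV}}\le\E_{g_1,g_2}\bigl[(s+\eta\ltwos{g_1}+\eta\ltwos{g_2})/(2\sqrt{\pi\eta})\bigr]$; using $s=\tfrac14\sqrt{\eta/d}$ (so $s/(2\sqrt{\pi\eta})=1/(8\sqrt{\pi d})\le\tfrac1{20}$ for $d\ge2$) and $\E\ltwos{g(\cdot)}\le\sqrt d\,\Gg$ with $\eta\le\tfrac1{100d(\Gg^2+\Lf)}$ (so $\eta\,\E\ltwos g/\sqrt{\pi\eta}\le\sqrt{\eta d}\,\Gg/\sqrt\pi\le\tfrac1{10\sqrt\pi}$) bounds the sum below $\tfrac18$.

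The hard part is controlling the stochastic-gradient drift $-\eta g$: it is random with only sub-exponential tails rather than bounded, so both the mean-gap bound and the ``stays inside $\ball(\cdot;r)$'' bound require showing this drift is negligible at the Gaussian scale $\sqrt\eta$. This is exactly what $\eta\le\tfrac1{100d(\Gg^2+\Lf)}$ buys once one passes to the second moment via $\E[\ltwos{g(x)}^2\mid x]\le d\Gg^2$, and it is also why the algorithm's ball radius $4\sqrt{2\eta d}$ is taken a constant factor larger than the typical step length $\sqrt{2\eta d}$. Everything else is routine bookkeeping of Gaussian and $\chi^2$ tail constants.
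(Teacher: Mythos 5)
Your proposal is correct in spirit but takes a genuinely different route from the paper, and it leaves a small quantitative gap at $d=1$ that the paper's argument avoids.

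The paper does not argue via $\|G_x-G_y\|_{\mathrm{TV}}$. Instead it introduces the \emph{midpoint} Gaussian $q\sim N\big(\tfrac{x+y}{2},\,2\eta I\big)$ and the ball $Z=\ball\big(\tfrac{x+y}{2};\,3\sqrt{2\eta d}\big)$, shows $\int_{K\cap Z}q\,dz>1/6$ (from the $P(\cdot\in K)\ge 1/3$ assumption plus $\chi^2_d$-concentration), shows $Z\subset\ball(x;4\sqrt{2\eta d})\cap\ball(y;4\sqrt{2\eta d})$, and then proves a pointwise density-ratio bound $q_x(z)/q(z)>1/4$, $q_y(z)/q(z)>1/4$ on $Z$ whenever $s\defeq\ltwos{x-y}/(2\sqrt{2\eta d})\le\tfrac{1}{10d}$. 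Combined with the acceptance-ratio lower bound $\alpha_x(z),\alpha_y(z)\ge 1/2$ from Claim~\ref{claim:acceptance-ratio-bound}, this gives $\pi_f(x,B)+\pi_f(y,A)\ge\int_{K\cap Z}\min\{q_x\alpha_x,q_y\alpha_y\}\,dz\ge\tfrac18\cdot\tfrac16=\tfrac1{48}$, so one of the two is $\ge\tfrac1{96}$. Your proof instead passes to the candidate law $G_x$, partitions the common region $W=K\cap\ball(x;r)\cap\ball(y;r)$ into its $A$- and $B$-parts, and uses $|G_x(E)-G_y(E)|\le\|G_x-G_y\|_{\mathrm{TV}}$ to transfer mass estimates between $x$ and $y$; you then need both $G_x(W)\ge\tfrac14$ and $\|G_x-G_y\|_{\mathrm{TV}}<\tfrac18$. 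This is a clean coupling-style reformulation of the Lov\'asz--Simonovits overlap argument, and conceptually it buys you a more transparent decomposition of ``why nearby points must communicate.'' The pointwise density-ratio route in the paper is slightly harder to set up but requires no minimum mass of $G_x$ on the intersection ball, only on $K$.

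The caveat is your estimate $G_x(W)\ge\tfrac14$. Your own accounting, $G_x(K)\gtrsim\tfrac13-\|G_x-N(x,2\eta I)\|_{\mathrm{TV}}\approx 0.30$ minus the failure probability of $\ltwos{z-x}\le r-s$, works out to about $0.27$ only when $d\ge2$, because the chi-square tail $P(\chi^2_d>9d)\le e^{-(9/5)d}$ is roughly $0.17$ at $d=1$ and only drops below a few percent at $d\ge2$. You flag this with ``for $d\ge2$,'' but Claim~\ref{claim:seperation-lemma} and Theorem~\ref{theorem:sld-general} carry no such restriction; the paper's midpoint argument survives $d=1$ because it only needs $\int_{K\cap Z}q\,dz>1/6$, which leaves the $1/6$ slack for the chi-square tail instead of demanding $G_x(W)\ge1/4$. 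To close the gap in your version you would either need to rebalance the $\tfrac14$ / $\tfrac18$ / $\tfrac1{16}$ budget (e.g.\ shrink the threshold $s$ a bit, which your argument can afford since $s/(2\sqrt{\pi\eta})$ is the dominant term in the TV bound at $d=1$), or argue the $d=1$ case separately. Everything else in the sketch — the reduction from $\pi_f$ to $G_x$ via Claim~\ref{claim:acceptance-ratio-bound}, the use of $\E[\ltwos{g(x)}^2\mid x]\le d\Gg^2$, the Gaussian mean-shift TV bound, the decomposition of $W$ — matches the mechanism of the paper's proof and is sound.
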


For any point $x\in K\backslash(A_1\cup B_1)$, we either have $x\in A$ and $\pi_f(x, B) \geq 1/96$, or we have $x\in B$ and $\pi_f(x, A) \geq 1/96$. It implies:
\begin{align}
	\mu_f(K\backslash(A_1\cup B_1)) &= \int_{A\backslash A_1} \mu_f(x)dx + \int_{B\backslash B_1} \mu_f(x)dx \nonumber \\
	&\leq \int_{A\backslash A_1} 96\pi_f(x, B) \mu_f(x) dx + \int_{B\backslash B_1} 96\pi_f(x, A) \mu_f(x)dx\nonumber\\
	&\leq \int_A 96\pi_f(x, B) \mu_f(x) dx + \int_{B} 96\pi_f(x, A) \mu_f(x)dx \label{eqn:pre-middle-region}
\end{align}
Since $\mu_f$ is the stationary distribution of the time-reversible Markov chain $\pi_f$, inequality~\eqref{eqn:pre-middle-region} implies:
\begin{align}\label{eqn:integrate-middle-region}
\int_A \pi_f(x, B)\mu_f(x)dx & = \frac{1}{2}\int_A \pi_f(x, B)\mu_f(x)dx + \frac{1}{2}\int_B \pi_f(x, A) \mu_f(x)dx \nonumber\\
	&\geq \frac{1}{192} \mu_f(K\backslash(A_1\cup B_1))  = \frac{1}{192} \Big(\mu_f(K\backslash B_1) - \mu_f(A_1) \Big).
\end{align}
Notice that $A\subset K\backslash B_1$, so that $\mu_f(K\backslash B_1) \geq \mu_f(A)$. According to Claim~\ref{claim:seperation-lemma}, by defining an auxiliary quantity:
\[
\rhoh \defeq \frac{1}{4}\sqrt{\eta/d},
\]
we find that the set $(A_1)_\rhoh$ belongs to $K\backslash B_1$, so that
$\mu_f(K\backslash B_1) \geq \mu_f((A_1)_\rhoh)$. The following property is a direct consequence of the definition of restricted Cheeger constant.

\begin{claim}\label{claim:prob-and-cheeger}
For any $A\subset V$ and any $\nu > 0$, we have $\mu_f(A_\nu) \geq e^{\nu\cdot \C_{f}(V_\nu)} \mu_f(A)$. 
\end{claim}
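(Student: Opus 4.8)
\textbf{Proof proposal for Claim~\ref{claim:prob-and-cheeger}.} The plan is to reduce the estimate to a telescoping product of the infinitesimal-expansion bounds that are built into the definition of $\C_f(V_\nu)$. If $\mu_f(A) = 0$ the inequality is trivial, so assume $\mu_f(A) > 0$; note also that $\C_f(V_\nu) \geq 0$ since $\mu_f(A_\SmallVar) \geq \mu_f(A)$ always, and the case $\C_f(V_\nu) = 0$ is immediate, so we may assume $\C_f(V_\nu) > 0$. First I would record the elementary identity $(A_s)_t = A_{s+t}$ for Euclidean neighborhoods: one checks $d(x, A_s) = \max\{0, d(x,A) - s\}$, where the bound $d(x, A_s) \geq d(x,A) - s$ is the triangle inequality and the reverse bound follows by sliding $x$ toward a near-minimizer of $d(\cdot, A)$ until it is within distance $s$ of $A$; enlarging $A_s$ by $t$ then produces exactly $A_{s+t}$. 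Second, since $A \subset V$ we have $A_t \subset V_t \subset V_\nu$ for every $t \in [0,\nu]$, so each intermediate neighborhood $A_t$ is an admissible set in the infimum defining $\C_f(V_\nu)$.

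The substantive step is to upgrade the $\liminf$ in the definition of $\C_f(V_\nu)$ to a bound uniform over both the choice of subset and the number of expansion steps. By the definition of $\liminf$ (the infimum over $A' \subset V_\nu$ is already inside), for every $\epsilon \in (0, \C_f(V_\nu))$ there is an $\SmallVar_0 > 0$ such that for all $\SmallVar \in (0, \SmallVar_0)$ and all $A' \subset V_\nu$,
\[
	\mu_f(A'_\SmallVar) \;\geq\; \big(1 + \SmallVar\,(\C_f(V_\nu) - \epsilon)\big)\,\mu_f(A').
\]
Now fix an integer $N$ large enough that $\SmallVar \defeq \nu/N < \SmallVar_0$, and apply this inequality successively to $A' = A_{k\SmallVar}$ for $k = 0, 1, \dots, N-1$ (each such set lies in $V_\nu$ and has $\mu_f(A_{k\SmallVar}) \geq \mu_f(A) > 0$), using the identity $(A_{k\SmallVar})_\SmallVar = A_{(k+1)\SmallVar}$ at each step. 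This yields
\[
	\mu_f(A_\nu) = \mu_f(A_{N\SmallVar}) \;\geq\; \big(1 + (\nu/N)(\C_f(V_\nu) - \epsilon)\big)^{N}\,\mu_f(A).
\]
Letting $N \to \infty$, the right-hand side converges to $e^{\nu(\C_f(V_\nu) - \epsilon)}\,\mu_f(A)$, and letting $\epsilon \searrow 0$ gives $\mu_f(A_\nu) \geq e^{\nu\,\C_f(V_\nu)}\,\mu_f(A)$, as claimed.

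I expect the only points needing any care are the extraction of the uniform constant from the $\liminf$ and the neighborhood identity $(A_s)_t = A_{s+t}$; both are routine, and everything else reduces to the standard limit $(1 + c/N)^N \to e^{c}$.
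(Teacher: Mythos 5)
Your proposal is correct and follows essentially the same route as the paper's proof: both discretize the dilation from $A$ to $A_\nu$ into $n$ equal steps, apply the defining infimum of $\C_f(V_\nu)$ to each intermediate neighborhood $A_{k\nu/n}\subset V_\nu$, telescope (you multiplicatively, the paper by summing logarithms), and pass to the limit $n\to\infty$. One small remark: the reverse inclusion $A_{s+t}\subset (A_s)_t$ implicit in your ``sliding'' argument can fail when $K$ is not convex, since the neighborhoods $A_\SmallVar$ are by definition intersected with $K$ and the slid point need not stay in $K$; but your telescoping step only actually uses the forward inclusion $(A_s)_t\subset A_{s+t}$ (which is the triangle inequality and always holds), so the argument is unaffected.
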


\noindent
Letting $A\defeq A_1$ and $\nu\defeq \rhoh$ in Claim~\ref{claim:prob-and-cheeger}, we have $\mu_f((A_1)_\rhoh) \geq e^{\rhoh \cdot \C_{f}(V_\rhoh)} \mu_f(A_1)$. Combining these inequalities, we obtain
\begin{align*}
	\mu_f(K\backslash B_1) - \mu_f(A_1) &\geq \max\Big\{\mu_f(A) - \mu_f(A_1), \Big(e^{\rhoh \cdot \C_{f}(V_\rhoh)} -1 \Big)\mu_f(A_1)\Big\}\\
	&\geq \Big( 1-e^{-\rhoh \cdot \C_{f}(V_\rhoh)}\Big) \mu_f(A),
\end{align*}
where the last inequality uses the relation $\max\{a-b,(\alpha-1)b\} \geq \frac{\alpha-1}{\alpha}(a-b) + \frac{1}{\alpha}(\alpha-1)b = \frac{\alpha-1}{\alpha}a$ with $\alpha \defeq e^{\rhoh\cdot\C_f(V_\rhoh)}$.
Combining it with inequality~\eqref{eqn:integrate-middle-region}, we obtain
\[
	\Phi_{\pi_f}(V) \geq \frac{1}{192} ( 1-e^{-\rhoh \cdot \C_{f}(V_\rhoh)}).
\]
The lemma's assumption gives $\rhoh = \frac{1}{4}\sqrt{\eta/d} \leq \rho$. Plugging in this relation completes the proof.

\paragraph{Proof of  Claim~\ref{claim:seperation-lemma}}

Consider any two points $x\in A$ and $y\in B$. Let $s$ be a number such that $2s\sqrt{2\eta d} = \ltwos{x-y}$. If $s > 1$, then the claim already holds for the pair $(x,y)$. Otherwise, we assume that $s\leq 1$, and as a consequence assume $\ltwos{x-y}\leq 2\sqrt{2\eta d}$.

We consider the set of points 
\[
Z\defeq \Big\{z\in \R^d\backslash\{x,y\}:~\ltwos{z-\frac{x+y}{2}} \leq 3\sqrt{2\eta d}\Big\}. 
\]
Denote by $q(z)$ the density function of distribution $N(\frac{x+y}{2}; 2\eta I)$. The integral $\int_{Z} q(z)dz$ is equal to $P(X \leq 9 d)$, where $X$ is a random variable satisfying the chi-square distribution with $d$ degrees of freedom. The following tail bound for the chi-square distribution was proved by~\citet{laurent2000adaptive}.

\begin{lemma}\label{lemma:chi-square-concentration}
If $X$ is a random variable satisfying the Chi-square distribution with $d$ degrees of freedom, then for any $x > 0$,
\begin{align*}
	P(X \geq d(1 + 2\sqrt{x} + 2x)) \leq e^{-x d}
	\quad \mbox{and} \quad 
	P(X \leq d(1-2\sqrt{x})) \leq e^{-x d}.
\end{align*}
\end{lemma}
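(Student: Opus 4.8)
The plan is to prove Lemma~\ref{lemma:chi-square-concentration} by the classical exponential--moment (Chernoff) method; this is exactly the two–sided deviation inequality of \citet{laurent2000adaptive}, so one could simply invoke that reference, but the self-contained argument is short and I will sketch it. Write $X=\sum_{i=1}^d Z_i^2$ with $Z_1,\dots,Z_d$ i.i.d.\ standard Gaussian. The only analytic input is the moment generating function of a single squared Gaussian, $\E[e^{tZ_i^2}]=(1-2t)^{-1/2}$ for $t<1/2$, which gives $\E[e^{tX}]=(1-2t)^{-d/2}$ for $t\in(0,1/2)$ and, replacing $t$ by $-t$, $\E[e^{-tX}]=(1+2t)^{-d/2}$ for all $t>0$.

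For the upper tail, fix a parameter $u>0$ to be chosen and apply Markov's inequality to $e^{tX}$: for $t\in(0,1/2)$,
\[
	P\big(X\ge d(1+u)\big)\le e^{-td(1+u)}\,(1-2t)^{-d/2}.
\]
Optimizing the right-hand side over $t$ yields the minimizer $t^\star=\tfrac{u}{2(1+u)}\in(0,1/2)$, and substituting it gives $\tfrac1d\log P\big(X\ge d(1+u)\big)\le-\tfrac12\big(u-\log(1+u)\big)$. Now take $u=2\sqrt{x}+2x$; writing $w=\sqrt{x}$, the desired bound $\tfrac12\big(u-\log(1+u)\big)\ge x$ reduces after cancellation to $1+2w+2w^2\le e^{2w}$, which is immediate from the power series $e^{2w}=1+2w+2w^2+\tfrac{4}{3}w^3+\cdots$. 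This establishes the first inequality for every $x>0$.

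For the lower tail, the regime $x\ge 1/4$ is vacuous, since then $d(1-2\sqrt{x})\le 0\le X$ and the probability is $0$. For $0<x<1/4$ take $u=2\sqrt{x}\in(0,1)$ and apply Markov's inequality to $e^{-tX}$: for $t>0$,
\[
	P\big(X\le d(1-u)\big)\le e^{td(1-u)}\,(1+2t)^{-d/2}.
\]
Optimizing over $t$ gives $t^\star=\tfrac{u}{2(1-u)}>0$ and $\tfrac1d\log P\big(X\le d(1-u)\big)\le-\tfrac12\big(-\log(1-u)-u\big)$. With $u=2\sqrt{x}$ the required bound $\tfrac12\big(-\log(1-u)-u\big)\ge x$ becomes $-\log(1-2\sqrt{x})-2\sqrt{x}\ge 2x$, which holds because $-\log(1-v)-v=\sum_{k\ge 2}v^k/k\ge v^2/2$ evaluated at $v=2\sqrt{x}$.

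I do not expect a genuine obstacle here: the proof is a textbook Chernoff computation, and the only places requiring care are (i) carrying out the two one-dimensional minimizations correctly and checking that each optimal $t^\star$ lies in the admissible range, and (ii) verifying the two elementary scalar inequalities $1+2w+2w^2\le e^{2w}$ and $-\log(1-v)-v\ge v^2/2$, both of which follow term-by-term from their Taylor expansions. One must also remember to dispose of the degenerate case $x\ge 1/4$ in the lower-tail statement, where the claim is trivially true.
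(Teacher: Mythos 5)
Your proof is correct. The paper does not actually prove Lemma~\ref{lemma:chi-square-concentration}; it simply cites \citet{laurent2000adaptive} for the statement. Your self-contained Chernoff argument fills that in: the moment-generating function $\E[e^{tX}]=(1-2t)^{-d/2}$ (and its mirror $\E[e^{-tX}]=(1+2t)^{-d/2}$), the two optimizations yielding $t^\star=u/(2(1+u))$ and $t^\star=u/(2(1-u))$, and the two scalar inequalities $e^{2w}\ge 1+2w+2w^2$ and $-\log(1-v)-v\ge v^2/2$ are all checked correctly, and the dismissal of the degenerate case $x\ge 1/4$ in the lower tail is the right bookkeeping. For what it is worth, this is essentially the same Chernoff-bound derivation that appears in Laurent and Massart's Lemma~1, so the only real difference from the paper is that you have unpacked the citation into an explicit computation; both versions establish exactly the same bound, and neither is weaker or stronger than the other.
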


\noindent
By choosing $x=9/5$ in Lemma~\ref{lemma:chi-square-concentration}, the probability $P(X \leq 9 d)$ is lower bounded by $1 - e^{-(9/5)d} > 5/6$. Since $\eta \leq \hmax$, the first assumption of Assumption~\ref{assumption:conditions-for-szld} implies $\int_K q(z)dz \geq 1/3$. Combining these two bounds, we obtain
\begin{align}\label{eqn:seperation-k-cap-z-prob}
\int_{K\cap Z} q(z)dz \geq \int_{K} q(z)dz - \int_{Z^c} q(z)dz > 1/6.
\end{align}

For any point $z\in Z$, the distances $\ltwos{z-x}$ and $\ltwos{z-y}$ are bounded by $4\sqrt{2\eta d}$. It implies
\begin{align}\label{eqn:Z-is-subset}
Z\subset \ball(x;4\sqrt{2\eta d})\cap \ball(y;4\sqrt{2\eta d}).
\end{align}
Claim~\ref{claim:acceptance-ratio-bound} in the proof of Lemma~\ref{lemma:closeness} demonstrates that the acceptance ratio $\alpha_x(z)$ and $\alpha_y(z)$ for any $z\in K\cap Z$ are both lower bounded by $e^{- 33 \eta d (\Gg^2 + \Lf)}$ given the assumption $0 < \eta \leq \frac{\bmax^2}{32 d}$. This lower bound is at least equal to $1/2$ because of the assumption $\eta \leq  \frac{1}{100 d (\Gg^2 + \Lf)}$, so that we have
\begin{align}\label{eqn:alpha-xz-alpha-yz-lower-bound}
	\alpha_x(z) \geq \frac{1}{2} \quad\mbox{and}\quad \alpha_y(z) \geq \frac{1}{2} \quad \mbox{for all} \quad z\in K\cap Z.
\end{align}

Next, we lower bound the ratio $q_x(z)/q(z)$ and $q_y(z)/q(z)$. For $z\in Z$ but $z\neq x$, the function $q_x(z)$ is defined by
\[
	q_x(z) = \frac{1}{2}\cdot \frac{1}{(4\pi \eta)^{d/2}}\E\Big[e^{-\frac{\ltwos{z - x + \eta\cdot g(x)}^2}{4\eta}} \mid x\Big],
\]
so that we have
\begin{align}
	\frac{q_x(z)}{q(z)} &= \frac{1}{2}\E\Big[\exp\Big(-\frac{\ltwos{z - x + \eta\cdot g(x)}^2 - \ltwos{z - \frac{x+y}{2}}^2}{4\eta}\Big)|x\Big]\nonumber\\
	&=\frac{1}{2}\E\Big[\exp\Big(-\frac{\inprod{\frac{y-x}{2}+\eta\cdot g(x)}{2(z - \frac{x+y}{2}) - \frac{x-y}{2} + \eta\cdot g(x)}}{4\eta}\Big)|x\Big].\nonumber\\
	&= \Big(\frac{1}{2}e^{-\frac{1}{4 \eta}\inprod{\frac{y-x}{2}}{2(z-\frac{x+y}{2}) + \frac{y-x}{2}}}\Big) \E[e^{-\frac{1}{4}\inprod{y-x + 2(z-\frac{x+y}{2})}{g(x)} - \frac{\eta}{4}\ltwos{g(x)}^2}|x]\nonumber\\
	&\geq \Big(\frac{1}{2} e^{-\frac{s(6+s)d}{2}}\Big) e^{-\E[\frac{1}{4}\inprod{y-x + 2(z-\frac{x+y}{2})}{g(x)} + \frac{\eta}{4}\ltwos{g(x)}^2|x]},\label{eqn:density-ratio-qx-and-q}
\end{align}
where the last inequality uses Jensen's inequality; It also uses the fact $\ltwos{\frac{y-x}{2}} = s\sqrt{2\eta d}$ and $\ltwos{z - \frac{x+y}{2}} \leq 3\sqrt{2\eta d}$. 

For any unit vector $u\in \R^d$, Jensen's inequality and Assumption~\ref{assumption:conditions-for-szld} imply:
\[
	\E[(\inprod{u}{g(x)})^2|x] \leq \frac{1}{\bmax^2}\log\Big(\E[e^{(\inprod{\bmax u}{g(x)})^2}|x]\Big) \leq \frac{1}{\bmax^2} \log(e^{\bmax^2 \Gg^2}) = \Gg^2.
\]
As a consequence of this upper bound and using Jensen's inequality, we have:
\begin{align}\label{eqn:inner-and-square-bounds}
	\E[\inprod{y-x + 2(z-\frac{x+y}{2})}{g(x)}|x] \leq \ltwos{y-x + 2(z-\frac{x+y}{2})} \Gg \leq (2s+6)\sqrt{2\eta d}\,\Gg.
\end{align}
Combining inequalities~\eqref{eqn:gradient-norm-square-upper-bound},~\eqref{eqn:density-ratio-qx-and-q} and~\eqref{eqn:inner-and-square-bounds}, we obtain:
\begin{align}\label{eqn:frac-qxz-qz}
	\frac{q_x(z)}{q(z)} \geq \frac{1}{2}e^{-\frac{s(6+s)d}{2} - \frac{(3+s)\sqrt{2\eta d}\,\Gg}{2} - \frac{\eta d \Gg^2}{4}}.
\end{align}
The assumption $\eta \leq \frac{1}{100 d(\Gg^2 + \Lf)}$ implies $\Gg \leq \frac{1}{10\sqrt{\eta d}}$. Plugging in this inequality to \eqref{eqn:frac-qxz-qz}, a sufficient condition for $q_x(z)/q(z) > 1/4$ is
\begin{align}\label{eqn:seperation-lemma-s-condition}
	s \leq \frac{1}{10 d}.
\end{align}
Following identical steps, we can prove that inequality~\eqref{eqn:seperation-lemma-s-condition} is a sufficient condition for $q_y(z)/q(z) > 1/4$ as well.

Assume that condition~\eqref{eqn:seperation-lemma-s-condition} holds. Combining inequalities~\eqref{eqn:seperation-k-cap-z-prob},~\eqref{eqn:alpha-xz-alpha-yz-lower-bound} with the fact $q_x(z) > q(z)/4$ and $q_y(z) > q(z)/4$, we obtain:
\begin{align}\label{eqn:seperation-lemma-min-prob-lower-bound}
	\int_{K\cap Z} \min\{q_x(z)\alpha_x(z), q_y(z)\alpha_y(z)\} dz \geq \frac{1}{8}\int_{K\cap Z} q(z) dz \geq \frac{1}{48}. 
\end{align}
Notice that the set $Z$ satisfies $Z\subset \ball(x;4\sqrt{2\eta d})\cap \ball(y;4\sqrt{2\eta d})$, thus the following lower bound holds:
\begin{align*}
	\pi_f(x, B) + \pi_f(y, A) &= \int_{B\cap\ball(x;4\sqrt{2\eta d})} q_x(z)\alpha_z(z) dz + \int_{A\cap\ball(y;4\sqrt{2\eta d})} q_y(z)\alpha_y(z) dz\\
	&\geq \int_{K\cap Z} \min\{q_x(z)\alpha_x(z), q_y(z)\alpha_y(z)\}dz \geq \frac{1}{48}.
\end{align*}
It implies that either $\pi_f(x, B) \geq \frac{1}{96}$ or $\pi_f(y, A) \geq \frac{1}{96}$. In other words, if $x\in A_1$ and $y\in B_1$, then inequality~\eqref{eqn:seperation-lemma-s-condition} \emph{must not} hold. As a consequence, we obtain the lower bound:
\[
\ltwos{x-y} = 2s\sqrt{2\eta d} \geq \frac{\sqrt{2}}{5}\sqrt{\eta/d} >  \frac{1}{4}\sqrt{\eta/d}.
\]

\paragraph{Proof of Claim~\ref{claim:prob-and-cheeger}}

Let $n$ be an arbitrary integer and let $i\in\{1,\dots,n\}$. By the definition of the restricted Cheeger constant (see equation~\eqref{eqn:cheeger-constant}), we have 
\[
	\log(\mu_f(A_{i\nu/n})) - \log(\mu_f(A_{(i-1)\nu/n})) \geq (\nu/n)(\C_{f}(V_\nu) - \SmallVar_n)\quad\mbox{for $i=1,\dots,n$}
\]
where $\SmallVar_n$ is an indexed variable satisfying $\lim_{n\to\infty} \SmallVar_n = 0$. Suming over $i=1,\dots,n$, we obtain 
\[
	\log(\mu_f(A_\nu) - \log(\mu_f(A)) \geq \nu\cdot (\C_{f}(V_\nu)-\SmallVar_n).
\]
Taking the limit $n\to\infty$ on both sides of the inequality completes the proof.


\subsubsection{Proof of Lemma~\ref{lemma:constraint-on-h}}
\label{sec:proof-lemma-constraint-on-h}

First, we impose the following constraints on the choice of $\eta$:
\begin{align}\label{eqn:h-constraints-to-satisfy-precondition}
\eta \leq\min\Big\{\hmax, 16d\rhoU^2, \frac{\bmax^2}{32 d}, \frac{1}{100 d(\Gg^2+\Lf)}\Big\},
\end{align}
so that the preconditions of both Lemma~\ref{lemma:closeness} and Lemma~\ref{lemma:cheeger-to-conductance} are satisfied. 
By plugging $V\defeq K\backslash U_\rho$ to Lemma~\ref{lemma:cheeger-to-conductance}, the restricted conductance is lower bounded by:
\begin{align}\label{eqn:apply-restricted-conductance-bound-pre}
\Phi_{\pi_f}(K\backslash U_\rhoU) \geq \frac{1}{192}(1 - e^{- \frac{1}{4}\sqrt{\eta/d}\cdot \C_f((K\backslash U_\rhoU)_\rhoU)}) \geq \min\Big\{\frac{1}{2}, \frac{\sqrt{\eta/d}\,\C_f((K\backslash U_\rhoU)_\rhoU)}{1536}\Big\}.
\end{align}
The last inequality holds because $1-e^{-t} \geq \min\{\frac{1}{2},\frac{t}{2}\}$ holds for any $t>0$. It is easy to verify that $(K\backslash U_\rhoU)_\rhoU \subset K\backslash U$, so that we have the lower bound
$\C_f((K\backslash U_\rhoU)_\rhoU) \geq \C_f(K\backslash U) = \C$. Plugging this lower bound to inequality~\eqref{eqn:apply-restricted-conductance-bound-pre}, we obtain
\begin{align}\label{eqn:apply-restricted-conductance-bound}
\Phi_{\pi_f}(K\backslash U_\rhoU) \geq \min\Big\{\frac{1}{2}, \frac{\sqrt{\eta/d}\,\C}{1536}\Big\}.
\end{align}
Inequality~\eqref{eqn:apply-restricted-conductance-bound} establishes the restricted conductance lower bound for the lemma.

Combining inequality~\eqref{eqn:apply-restricted-conductance-bound} with Lemma~\ref{lemma:closeness}, it remains to choose a small enough $\eta$ such that $\pitilde_f$ is $\ClosenessParam$-close to $\pi_f$ with $\ClosenessParam \leq \frac{1}{4}\Phi_\pi(K\backslash U_\rhoU)$. More precisely, it suffices to make the following inequality hold:
\[
	e^{33\eta d(\Gp^2 + \Lp)} - 1 \leq \frac{1}{4}\,\min\Big\{\frac{1}{2}, \frac{\sqrt{\eta/d}\,\C}{1536}\Big\}.
\]
In order to satisfy this inequality, it suffices to choose $\eta \lesssim \min\{\frac{1}{d(\Gp^2+\Lp)}, \frac{\C^2}{d^3 (\Gp^2 + \Lp)^2}\}$. Combining this result with~\eqref{eqn:h-constraints-to-satisfy-precondition} completes the proof.

\section{Proof of Proposition~\ref{prop:cheeger-for-convex-function}}
\label{sec:proof-prop-cheeger-for-convex-function}

\citet[][Theorem 2.6]{lovasz1993random} proved the following isoperimetric inequality:
Let $K$ be an arbitrary convex set with diameter $2$. For any convex function $f$ and any subset $V\subset K$ satisfying $\mu_{f}(V)\leq 1/2$, the following lower bound holds:
\begin{align}\label{eqn:lovasz-bound}
	\frac{\mu_f(A_\SmallVar) - \mu_f(A)}{\SmallVar\, \mu_f(A)} \geq 1 \quad\mbox{for all}\quad A\subset V,~\SmallVar > 0.
\end{align}
The lower bound~\eqref{eqn:lovasz-bound} implies $\C_{f}(V) \geq 1$. In order to establish the proposition, it suffices to choose $V\defeq K\backslash U$ and $f\defeq \xi f$, then prove the pre-condition $\mu_{\xi f}(K\backslash U) \leq 1/2$.

Let $\xstar$ be one of the global minimum of function $f$ and let $\ball(\xstar;r)$ be the ball of radius $r$ centering at point $\xstar$. If we choose $r = \frac{\epsilon}{2\Gp}$, then for any point $x\in \ball(\xstar;r)\cap K$, we have
\[
	f(x) \leq f(\xstar) + \Gp\ltwos{x-\xstar} \leq f(\xstar) + \epsilon/2.
\]
Moreover, for any $y\in K\backslash U$ we have:
\[
	f(y) \geq f(\xstar) + \epsilon.
\]
It means for the probability measure $\mu_{\xi f}$, the density function inside $\ball(\xstar;r)\cap K$ is at least $e^{\xi\epsilon/2}$ times greater than the density inside $K\backslash U$. It implies
\begin{align}\label{eqn:mu-xif-radio}
	\frac{\mu_{\xi f}(U)}{\mu_{\xi f}(K\backslash U)} \geq e^{\xi\epsilon/2}\,\frac{\vol(\ball(\xstar;r)\cap K)}{\vol(K\backslash U)} \geq e^{\xi \epsilon/2}\frac{\vol(\ball(\xstar;r)\cap K)}{\vol(K)}.
\end{align}

Without loss of generality, we assume that $K$ is the unit ball centered at the origin.
Consider the Euclidean ball $\ball(x';r/2)$ where $x' = \max\{0, 1 - r/(2\ltwos{\xstar})\}\xstar$. It is easy to verify that $\ltwos{x'} \leq 1-r/2$ and $\ltwos{x'-\xstar} \leq r/2$, which implies $\ball(x';r/2)\subset \ball(\xstar;r)\cap K$. Combining this relation with inequality~\eqref{eqn:mu-xif-radio}, we have
\[
	\frac{\mu_{\xi f}(U)}{\mu_{\xi f}(K\backslash U)} \geq e^{\xi \epsilon/2}\frac{\vol(\ball(x';r/2))}{\vol(K)} = e^{\xi\epsilon/2 - d \log (2/r)}
	= e^{\xi\epsilon/2 - d \log (4\Gp/\epsilon)}.
\]
The right-hand side is greater than or equal to $1$, because we have assumed $\xi \geq \frac{2 d \log(4\Gp/\epsilon)}{\epsilon}$. As a consequence, we have $\mu_{\xi f}(K\backslash U) \leq 1/2$.

\section{Proof of Lemma~\ref{lemma:vector-field}}
\label{sec:proof-lemma-vector-field}

Consider a sufficiently small $\SmallVar$ and a continuous mapping $\pi(x) \defeq x - \SmallVar \phi(x)$. Since $\phi$ is continuously differentiable in the compact set $K$, there exists a constant $G$ such that $\ltwos{\phi(x)-\phi(y)} \leq G\ltwos{x-y}$ for any $x,y\in K$. Assuming $\SmallVar < 1/G$, it implies
\[
	\ltwos{\pi(x)-\pi(y)} \geq \ltwos{x-y} - \SmallVar\,\ltwos{\phi(x)-\phi(y)} > 0 \quad \mbox{for any } x \neq y.
\]
Thus, the mapping $\pi$ is a continuous one-to-one mapping. 
For any set $A\subset K$, we define $\pi(A) \defeq \{\pi(s): x\in A\}$. 

Since the parameter set $K$ is compact, we can partition $K$ into a finite number of small compact subsets, such that each subset has diameter at most $\delta \defeq \SmallVar^2$. Let $S$ be the collection of these subsets that intersect with $A$. The definition implies $A \subset \cup_{B\in S} B \subset A_{\delta}$. 
The fact that $\ltwos{\phi(x)} \leq 1$ implies 
\begin{align*}
 \mu_f(\pi(\cup_{B\in S} B)) \leq \mu_f(\pi(A_{\delta})) \leq \mu_f(A_{\delta+\SmallVar}).
\end{align*}
As a consequence, we have:
\begin{align}\label{eqn:ratio-between-Ae-and-A-lower}
	\frac{\mu(A_{\delta+\SmallVar})}{\mu(A)} \geq \frac{\mu_f(\pi(\cup_{B\in S} B))}{\mu_f(\cup_{B\in S} B)} = \frac{\sum_{B\in S} \mu_f(\pi(B))}{\sum_{B\in S} \mu_f(B)} \geq \min_{B\in S} \frac{\mu_f(\pi(B))}{\mu_f(B)}.
\end{align}
For arbitrary $B\in S$, we consider a point $x\in B\cap A$, and remark that every point in $B$ is $\delta$-close to the point $x$. Since $\phi$ is continuously differentiable, the Jacobian matrix of the transformation $\pi$ has the following expansion:
\begin{align}\label{eqn:mapping-linear-shift}
	J(y) = I - \SmallVar H(x) + r_1(x,y)\quad \mbox{where $J_{ij}(x) = \frac{\partial \pi_i(x)}{\partial x_j}$},
\end{align}
where $H$ is the Jacobian matrix of $\phi$ satisfying $H_{ij}(x) = \frac{\partial \phi_i(x)}{\partial x_j}$. The remainder term $r_1(x,y)$, as a consequence of the continuous differentiability of $\phi$ and the fact $\ltwos{y-x}\leq \delta = \epsilon^2$, satisfies $\ltwos{r_1(x,y)} \leq C_1 \SmallVar^2$ for some constant $C_1$.

On the other hand, using the relation $\nabla \mu_f(y) = - \mu_f(y) \nabla f(y)$ and the continuous differentiability of $\mu_f$, the density function at $\pi(y)$ can be approximated by
\[
	\mu_f(\pi(y)) = \mu_f(y) + \nabla \mu_f(y) (\pi(y) - y) + r_2(y) = \Big(1 + \SmallVar\, \inprod{\phi(y)}{\nabla f(y)}\Big)\mu_f(y) + r_2(y),
\]
where the remainder term $r_2(y)$ satisfies $|r_2(y)| \leq C_2\SmallVar^2$ for some constant $C_2$. Further using the continuity of $\phi$, $\nabla f$ and the fact $\ltwos{y-x}\leq \epsilon^2$, we obtain:
\begin{align}\label{eqn:mapping-density-shift}
	\mu_f(\pi(y)) = \Big(1 + \SmallVar\, \inprod{\phi(x)}{\nabla f(x)}\Big)\mu_f(y) + r_3(x,y),
\end{align}
where the remainder term $r_3(x,y)$ satisfies $|r_3(x,y)| \leq C_3\SmallVar^2$ for some constant $C_3$.

Combining equation~\eqref{eqn:mapping-linear-shift} and equation~\eqref{eqn:mapping-density-shift}, we can quantify the measure of the set $\pi(B)$ using that of the set $B$. In particular, we have
\begin{align}
	\mu_f(\pi(B)) &= \int_B \mu_f(\pi(y)) d \pi(y) = \int_B \mu_f(\pi(y)) \det(J(y)) dy \nonumber\\
	&=\int_B \Big\{\big(1 + \SmallVar\, \inprod{\phi(x)}{\nabla f(x)}\big)\mu_f(y) + r_3(x,y)\Big\} 
	\det(I - \SmallVar H(x) + r_1(x,y)) dy\nonumber\\
	&= \Big(1 + \SmallVar\, \inprod{\phi(x)}{\nabla f(x)} - \SmallVar\,\tr(H(x)) \Big) \int_B \mu_f(y) dy + \order(\epsilon^2).\nonumber\\
	&= \Big(1 + \SmallVar\, \inprod{\phi(x)}{\nabla f(x)} - \SmallVar\,\tr(H(x)) \Big) \mu_f(B) + \order(\epsilon^2).\label{eqn:connect-mu-piB-and-mu-B}
\end{align}
Plugging equation~\eqref{eqn:connect-mu-piB-and-mu-B} to the lower bound~\eqref{eqn:ratio-between-Ae-and-A-lower} and using the relation $\tr(H(x)) = \dvg \phi(x)$, implies
\begin{align*}
	\frac{\mu(A_{\delta+\SmallVar})}{\mu(A)} - 1 &\geq \min_{B\in S}\Big(\inprod{\phi(x)}{\nabla f(x)}-\dvg \phi(x) \Big) \SmallVar + \order(\epsilon^2)\\
	&\geq \inf_{x\in A}\Big(\inprod{\phi(x)}{\nabla f(x)}-\dvg \phi(x) \Big) \SmallVar + \order(\epsilon^2),
\end{align*}
Finally, plugging in the definition of the restricted Cheeger constant and taking the limit $\SmallVar\to 0$ completes the proof.

\section{Proof of Proposition~\ref{prop:lower-bounded-strict-saddle}}
\label{sec:proof-prop-lower-bounded-strict-saddle}

\paragraph{Notations} Let $\Phi$ denote the CDF of the standard normal distribution. The function $\Phi$ satisfies the following tail bounds:
\begin{align}\label{eqn:cdf-pre-tail-bounds}
	0\leq \Phi(t) \leq e^{-t^2/2} \quad \mbox{for any}\quad  t\leq 0.
\end{align}
We define an auxiliary variable $\sigma$ based on the value of $\epsilon$:
\begin{align}\label{eqn:define-delta-sigma-eta}
 \sigma \defeq \frac{1}{2\sqrt{\log(4\Lp/\sqrt{\epsilon})}}.
\end{align}
Since $e^{-1/(2\sigma^2)} = \frac{\sqrt{\epsilon}}{4\Lp}$, the tail bound~\eqref{eqn:cdf-pre-tail-bounds} implies $\Phi(t)\leq \frac{\sqrt{\epsilon}}{4\Lp}$  for all $t \leq -\frac{1}{\sigma}$.

\paragraph{Define a vector field}

Let $g(x)\defeq \ltwos{\nabla f(x)}$ be a shorthand notation. We define a vector field:
\begin{align}\label{eqn:define-phi-mapping-for-saddle}
	\phi(x) &\defeq \frac{1}{(2\Gp+1)\Gp}\Big(\underbrace{2 \sqrt{\Gp\,g(x)}\, I + \Phi\Big(\frac{- \saddleconst I - \nabla^2 f(x)}{\sigma \saddleconst}\Big)}_{\defeq A(x)}\Big)  \nabla f(x)
\end{align}
Note that the function $\Phi$ admits a polynomial expansion:
\[
	\Phi(x) = \frac{1}{2} + \frac{1}{\sqrt{\pi}} \sum_{j=0}^\infty \frac{(-1)^j x^{2j+1}}{j!(2j+1)}
\]
Therefore, for any symmetric matrix $A\in \R^{d\times d}$, the matrix $\Phi(A)$ is well-defined by:
\begin{align}\label{eqn:define-phi-matrix}
	\Phi(A) \defeq \frac{I}{2} + \frac{1}{\sqrt{\pi}} \sum_{j=0}^\infty \frac{(-1)^j A^{2j+1}}{j!(2j+1)},
\end{align}
We remark that the matrix definition~\eqref{eqn:define-phi-matrix} implies  $\Phi(A+dA) = \Phi(A) + \Phi'(A) dA$ where $\Phi'$ is the derivative of function~$\Phi$. 

\paragraph{Verify the condition of Lemma~\ref{lemma:vector-field}}
The matrix $A(x)$ satisfies $0 \preceq A(x) \preceq (2\Gp+1)I$, so that $\ltwos{\phi(x)}\leq 1$ holds. For points that are $r_0$-close to the boundary, we have $\inprod{x}{\nabla f(x)} \geq \ltwos{x}$. By these lower bounds and definition~\eqref{eqn:define-phi-mapping-for-saddle}, we obtain:
\begin{align}
	\ltwos{x - \SmallVar\,\phi(x)}^2 &\leq \ltwos{x}^2 + \SmallVar^2 - \frac{\SmallVar}{(2 \Gp +1)\Gp} \Big(2\sqrt{\Gp\,g(x)}\inprod{x}{\nabla f(x)} - \ltwos{x}\cdot \ltwos{\nabla f(x)}\Big)\nonumber\\
	&\leq \ltwos{x}^2 + \SmallVar^2 - \frac{\SmallVar \ltwos{x}}{(2\Gp+1)\Gp} \Big(2 \sqrt{\Gp\,g(x)} - g(x)\Big).\nonumber\\
	&\leq \ltwos{x}^2 + \SmallVar^2 - \frac{\SmallVar \ltwos{x} \sqrt{\Gp\,g(x)}}{(2\Gp+1)\Gp}
	\leq \ltwos{x}^2+\SmallVar^2 - \frac{\SmallVar \ltwos{x}}{(2\Gp+1)\sqrt{\Gp}},
\end{align} 
where the last inequality holds because $g(x) \geq \inprod{x}{\nabla f(x)}/\ltwos{x} \geq 1$.
For any $\SmallVar < \frac{r-r_0}{(2\Gp+1)\sqrt{\Gp}}$, the right-hand side is smaller than $\ltwos{x}^2$, so that  $x - \SmallVar\,\phi(x) \in K$. For points that are not $r_0$-close to the boundary, we have $x - \SmallVar\,\phi(x) \in K$ given $\SmallVar < r_0$. Combining results for the two cases, we conclude that $\phi$ satisfies the conditions of Lemma~\ref{lemma:vector-field}

\paragraph{Prove the Lower bound} By applying Lemma~\ref{lemma:vector-field}, we obtain the following lower bound:
\begin{align}\label{eqn:expr-lower-bound-of-cheeger-lower-bound}
	\C_{(\xi f)}(K\backslash U) \geq \frac{1}{(2\Gp+1)\Gp}\inf_{x\in K\backslash U} \Big\{ \xi\,(\nabla f(x))^\top A(x) \nabla f(x) - \dvg A(x)\nabla f(x) \Big\}.
\end{align}
Since $A(x) \succeq 2\sqrt{\Gp\,g(x)} I$, the term $(\nabla f(x))^\top A(x) \nabla f(x)$ is lower bounded by $2\sqrt{\Gp}(g(x))^{5/2}$. 
For the second term, we claim the following bound:
\begin{align}\label{eqn:hard-dvg-upper-bound}
	\dvg A(x)\nabla f(x) \leq 3\sqrt{\Gp\,g(x)}L + \frac{g(x) \Hp}{\sigma \saddleconst} + \frac{\saddleconst}{4} - \indicator[g(x)< \epsilon]\,\frac{\saddleconst}{2}
\end{align}
We defer the proof to Appendix~\ref{sec:proof-eqn-hard-dvg-upper-bound} and focus on its consequence. Combining inequalities~\eqref{eqn:expr-lower-bound-of-cheeger-lower-bound} and~\eqref{eqn:hard-dvg-upper-bound}, we obtain
\begin{align}
	\C_{(\xi f)}(K\backslash U) \geq &\frac{1}{(2\Gp+1)\Gp} \inf_{x\in K\backslash U} \nonumber\\
	&\Big\{2\sqrt{\Gp} \xi (g(x))^{5/2} + \indicator[g(x)< \epsilon]\,\frac{\saddleconst}{2} - 3\sqrt{\Gp\,g(x)}L - \frac{g(x) \Hp}{\sigma \saddleconst} - \frac{\saddleconst}{4}\Big\}.\label{eqn:cheeger-lower-bound-cs}
\end{align}
The right-hand side of inequality~\eqref{eqn:cheeger-lower-bound-cs} can be made strictly positive if we choose a large enough $\xi$. In particular, we choose:
\begin{align}\label{eqn:saddle-assign-xi}
	\xi \geq \frac{1}{\epsilon^2 \Gp^{1/2}}\cdot \max\Big\{\frac{6 \Gp^{1/2} \Lp }{h^2}, \frac{2\Hp}{\sigma h^{3/2}}, \frac{1}{2 h^{5/2}}\Big\} 
	\quad\mbox{where}\quad 
	h \defeq \min\Big\{1, \frac{1}{\Gp(48\Lp)^2}, \frac{\sigma }{16 \Hp}\Big\}.
\end{align}
To proceed, we do a case study based on the value of $g(x)$. For all $x$ satisfying $g(x) < h \epsilon$, we plug in the upper bound $g(x) < h \epsilon$ for $g(x)$, then plug in the definition of $h$. It implies:
\begin{align}
	&\indicator[g(x)< \epsilon]\,\frac{\saddleconst}{2} - 3\sqrt{\Gp\,g(x)}L - \frac{g(x) \Hp}{\sigma \saddleconst} - \frac{\saddleconst}{4} \nonumber\\
	& \geq \frac{\saddleconst}{4} - 3\sqrt{\Gp}\Lp\cdot \sqrt{\frac{\epsilon}{\Gp(48\Lp)^2}} - \frac{\Hp}{\sigma \saddleconst}\cdot\frac{\epsilon\sigma}{16\Hp} = \frac{\saddleconst}{8}.\label{eqn:saddle-small-gx}
\end{align}
For all $x$ satisfying $g(x) \geq h \epsilon$, we ignore the non-negative term $\indicator[g(x)< \epsilon]\,\frac{\saddleconst}{2}$ on the right-hand side of~\eqref{eqn:cheeger-lower-bound-cs}, then re-arrange the remaining terms. It gives:
\begin{align*}
	& 2 \sqrt{\Gp} \xi (g(x))^{5/2} - 3\sqrt{\Gp\,g(x)}L - \frac{g(x) \Hp}{\sigma \saddleconst} - \frac{\saddleconst}{4} \geq  \frac{\xi \sqrt{\Gp} (g(x))^{5/2}}{2} + \sqrt{\Gp\,g(x)}\Big( \frac{\xi (g(x))^2}{2} -3\Lp\Big)\\
	&  \qquad\qquad +  g(x)\Big( \frac{\xi \sqrt{\Gp}(g(x))^{3/2}}{2} - \frac{\Hp}{\sigma \saddleconst} \Big) 
	+ \Big( \frac{\xi \sqrt{\Gp}(g(x))^{5/2}}{2} - \frac{\saddleconst}{4}\Big).
\end{align*}
Using lower bound~\eqref{eqn:saddle-assign-xi} for $\xi$ and the lower bound $g(x)\geq h \epsilon$ for $g(x)$, it is easy to verify that the last three terms on the right-hand side are non-negative. Furthermore, plugging in the lower bound $\xi \geq \frac{1}{\epsilon^2\Gp^{1/2}}\cdot\frac{1}{2 h^{5/2}}$ from \eqref{eqn:saddle-assign-xi}, it implies:
\begin{align}\label{eqn:saddle-big-gx}
	2 \sqrt{\Gp} \xi (g(x))^{5/2} - 3\sqrt{g(x)}L - \frac{g(x) \Hp}{\sigma \saddleconst} - \frac{\saddleconst}{4} \geq \sqrt{\Gp} \xi \cdot \frac{(h\epsilon)^{5/2}}{2} \geq \frac{1}{2\epsilon^2 h^{5/2}}\cdot\frac{(h\epsilon)^{5/2}}{2} = \frac{\sqrt{\epsilon}}{4}
\end{align}
Combining inequalities~\eqref{eqn:cheeger-lower-bound-cs},~\eqref{eqn:saddle-small-gx},~\eqref{eqn:saddle-big-gx} proves that the restricted Cheeger constant is lower bounded by $\frac{\sqrt{\epsilon}}{8(2\Gp+1)\Gp}$. Since $1/\sigma = \widetilde\order(1)$, it is easy to verify that the constraint~\eqref{eqn:saddle-assign-xi} can be satisfied if we choose:
\[
	\xi \geq \widetilde \order(1)\cdot  \frac{\max\{1, \Gp^{5/2}\Lp^5, \Hp^{5/2}\}}{\epsilon^2 \Gp^{1/2}},
\]
which completes the proof.

\subsection{Proof of inequality~\eqref{eqn:hard-dvg-upper-bound}}
\label{sec:proof-eqn-hard-dvg-upper-bound}

Let $T(x)$ be the third order tensor such that $T_{ijk}(x) = \frac{\partial^3 f(x)}{\partial x_i \partial x_j \partial x_k}$. Consider an arbitrary unit vector $u\in \R^d$. By the definition of $A(x)\nabla f(x)$, we have:
\begin{align*}
& \lim_{t\to 0}\frac{A(x+ut)\nabla f(x+ut) - A(x)\nabla f(x)}{t} = A(x) \nabla^2 f(x)\,u \\
& \qquad  + \frac{\sqrt{\Gp} \nabla f(x) (\nabla g(x))^\top}{\sqrt{g(x)}}  u - \frac{1}{\sigma\saddleconst} \Phi'\Big(\frac{- \saddleconst I - \nabla^2 f(x)}{\sigma \saddleconst}\Big) T(x)[u] \nabla f(x),
\end{align*}
where the matrix $T(x)[u]\in \R^{d\times d}$ is defined by $(T(x)[u])_{ij} = \sum_{k=1}^d T_{ijk}(x) u_k$. 
 By simple algebra, we obtain $T(x)[u] \nabla f(x) = T(x)[\nabla f(x)] u$. Thus, the derivative of the vector field $A(x)\nabla f(x)$ can be represented by $D_{ij} = \frac{\partial (A(x)\nabla f(x))_i}{\partial x_j}$ where $D$ is the following $d$-by-$d$ matrix:
\begin{align}\label{eqn:expression-for-matrix-D}
	D &\defeq A(x) \nabla^2 f(x) + \frac{\sqrt{\Gp} \nabla f(x) (\nabla g(x))^\top}{\sqrt{g(x)}} - \frac{1}{\sigma\saddleconst} \Phi'\Big(\frac{- \saddleconst I - \nabla^2 f(x)}{\sigma \saddleconst}\Big) T(x)[\nabla f(x)].
\end{align}
Note that $\dvg (A(x)\nabla f(x))$ is equal to the trace of matrix $D$. In order to proceed, we perform a case study on the value of $g(x)$.

\paragraph{Case $g(x) < \epsilon$:} We first upper bound the trace of $A(x)\nabla f^2(x)$, which can be written as:
\begin{align}\label{eqn:expand-weird-term-AF}
	A(x)\nabla f^2(x) = 2\sqrt{\Gp\,g(x)} \nabla f^2(x) + \Phi\Big(\frac{- \saddleconst I - \nabla^2 f(x)}{\sigma \saddleconst}\Big) \nabla f^2(x).
\end{align}
The trace of the first term on the right-hand side is bounded by $2\sqrt{\Gp\,g(x)}\Lp$. For the second term, we
assume that the matrix $\nabla^2 f(x)$ has eigenvalues $\lambda_1 \leq \lambda_2 \leq \dots \lambda_d$ with associated eigenvectors $u_1,\dots,u_d$. As a consequence, the matrix $\Phi(\frac{- \saddleconst I - \nabla^2 f(x)}{\sigma \saddleconst})$ has the same set of eigenvectors, but with eigenvalues $\Phi(\frac{-\lambda_1/\saddleconst - 1}{\sigma}), \dots, \Phi(\frac{-\lambda_d/\saddleconst - 1}{\sigma})$. Thus, the trace of this term is equal to
\[
	\tr\Big(\Phi\Big(\frac{- \saddleconst I - \nabla^2 f(x)}{\sigma \saddleconst}\Big) \nabla f^2(x)\Big) = \sum_{i=1}^d \lambda_i\, \Phi\Big(\frac{-\lambda_i/\saddleconst - 1}{\sigma}\Big).
\]
By the assumptions $x\in K\backslash U$ and $g(x)< \epsilon$, and using the definition of $\epsilon$-approximate local minima, we obtain $\lambda_1 \leq -\saddleconst$. As a consequence 
\[
\lambda_1\, \Phi\Big(\frac{-\lambda_1/\saddleconst - 1}{\sigma}\Big) \leq \lambda_1\, \Phi(0) \leq -\saddleconst/2.
\]
For other eigenvalues, if $\lambda_i$ is negative, then we use the upper bound $\lambda_i \Phi(\frac{-\lambda_i/\saddleconst - i}{\sigma}) < 0$; If $\lambda_i$ is positive, then we have $\lambda_i \Phi(\frac{-\lambda_i/\saddleconst - 1}{\sigma}) \leq \lambda_i \Phi(-\frac{1}{\sigma}) \leq \lambda_i \frac{\sqrt{\epsilon}}{4\Lp}$. Combining these relations, we have
\begin{align*}
	\tr\Big(\Phi\Big(\frac{- \saddleconst I - \nabla^2 f(x)}{\sigma \saddleconst}\Big) \nabla f^2(x)\Big) \leq - \frac{\saddleconst}{2} + \frac{\sqrt{\epsilon}}{4\Lp} \sum_{i=1}^d [\lambda_i]_+ \leq - \frac{\saddleconst}{2} + \frac{\sqrt{\epsilon}}{4\Lp} \lncs{\nabla^2 f(x)} \leq -\frac{\saddleconst}{4}.
\end{align*}
Combining this inquality with the upper bound on the first term of~\eqref{eqn:expand-weird-term-AF}, we obtain
\begin{align}\label{eqn:case-1-first-term-bound}
	\tr(A(x)\nabla f^2(x)) \leq 2\sqrt{\Gp\,g(x)}\Lp - \saddleconst/4.
\end{align}
Thus, we have upper bounded the trace of first term on the right-hand side of~\eqref{eqn:expression-for-matrix-D}.

For the second term on the right-hand side of~\eqref{eqn:expression-for-matrix-D} , we have
\begin{align}\label{eqn:case-1-second-term-bound}
	\tr\Big(\frac{\sqrt{\Gp}\nabla f(x) (\nabla g(x))^\top}{\sqrt{g(x)}}\Big) = \frac{\sqrt{\Gp}\inprod{\nabla f(x)}{\nabla g(x)}}{\sqrt{g(x)}} \leq \sqrt{\Gp\,g(x)}\, \ltwos{\nabla g(x)} \leq \sqrt{\Gp\,g(x)} \Lp,
\end{align}
where the last inequality uses the relation $\nabla g(x) = \frac{(\nabla^2 f(x))\nabla f(x)}{g(x)}$, so that $\ltwos{\nabla g(x)} \leq \ltwos{\nabla^2 f(x)} \leq \lncs{\nabla^2 f(x)} \leq \Lp$.

For the third term on the right-hand side of~\eqref{eqn:expression-for-matrix-D}, since $0 \preceq \Phi'(\frac{- \saddleconst I - \nabla^2 f(x)}{\sigma \saddleconst}) \preceq I$, we have
\[
	\tr(\mbox{the third term}) \leq \lncs{\mbox{the third term}} \leq \frac{1}{\sigma \saddleconst} \lncs{T(x)[\nabla f(x)]}.
\]
By Assumption~\ref{assumption:regularity-for-cheeger}, the function $f$ satisfies $\frac{\nucnorm{\nabla^2 f(x) - \nabla^2 f(y)}}{\ltwos{x-y}} \leq \Hp$, which implies $\nucnorm{T(x)[u]} \leq \Hp\ltwos{u}$ for any $x\in K$ and $u\in \R^d$. As a consequence, the term $\lncs{T(x)[\nabla f(x)]}$ is bounded by 
\[
\lncs{T(x)[\nabla f(x)]} \leq \Hp\ltwos{\nabla f(x)} = g(x) \Hp,
\]
which further implies
\begin{align}\label{eqn:case-1-third-term-bound}
	\tr(\mbox{the third term}) \leq \frac{g(x) \Hp}{\sigma \saddleconst}.
\end{align}
Combining upper bounds~\eqref{eqn:case-1-first-term-bound},~\eqref{eqn:case-1-second-term-bound},~\eqref{eqn:case-1-third-term-bound} implies
\[
	\dvg A(x)\nabla f(x) \leq 3\sqrt{\Gp\,g(x)}L + \frac{g(x) \Hp}{\sigma \saddleconst} - \saddleconst/4.
\]

\paragraph{Case $g(x) \geq \epsilon$:} The proof is similar to the previous case. For the first term on the right-hand side of equation~\eqref{eqn:expression-for-matrix-D}, we follow the same arguments for establishing the upper bound~\eqref{eqn:case-1-first-term-bound}, but without using the relation $\lambda_1\leq -\saddleconst$ (because conditioning on $g(x) \geq \epsilon$, the definition of approximate local minima won't give $\lambda_1\leq -\saddleconst$). Then the trace of $A(x)\nabla f^2(x)$ is bounded by:
\begin{align}\label{eqn:case-2-first-term-bound}
	\tr(A(x)\nabla f^2(x)) \leq 2\sqrt{\Gp\,g(x)} \Lp + \frac{\sqrt{\epsilon}}{4\Lp} \lncs{\nabla^2 f(x)} \leq 2\sqrt{g(x)} \Lp + \saddleconst/4.
\end{align}
For the second and the third term, the upper bounds \eqref{eqn:case-1-second-term-bound} and \eqref{eqn:case-1-third-term-bound} still hold, so that
\[
	\dvg A(x)\nabla f(x) \leq 3\sqrt{\Gp\,g(x)}L + \frac{g(x) \Hp}{\sigma \saddleconst} + \saddleconst/4.
\]

\noindent Combining the two cases completes the proof.


\section{Proof of Theorem~\ref{theorem:pop-general}}
\label{sec:proof-theorem-pop-general}

We apply the general Theorem~\ref{theorem:sld-general} to prove this theorem. In order to apply Theorem~\ref{theorem:sld-general}, the first step is to show that the function $\ftilde$ satisfies Assumption~\ref{assumption:conditions-for-szld}. Recall that the function is defined by:
\begin{align}\label{eqn:define-smoothing}
	\ftilde(x) = \E[f(x+z)] \quad \mbox{where} \quad z \sim N(0,\sigma^2 I),
\end{align}
and its stochastic gradient is computed by:
\begin{align*}
	g(x) \defeq \frac{z}{\sigma^2}(\ell(x+z;a)-\ell(x;a)).
\end{align*} 
By Assumption~\ref{assumption:erm}, the function $\ftilde$ is uniformly bounded in $[0,B]$. The following lemma captures additional properties of functions $\ftilde$ and $g$. See Appendix~\ref{sec:proof-lemma-property-of-smoothing} for the proof.

\begin{lemma}\label{lemma:property-of-smoothing}
The following properties hold:
\begin{enumerate}
\item For any $x\in K$, the stochastic function $g$ satisfies $\E[g(x)|x] = \nabla \ftilde(x)$. For any vector $u\in \R^d$ with $\ltwos{u} \leq \frac{\sigma}{2 B}$, it satisfies 
\[
\E[e^{\inprod{u}{g(x)}^2}|x] \leq e^{\ltwos{u}^2(2 B/\sigma)^2}.
\]
\item The function $\ftilde$ is $(2 B/\sigma^2)$-smooth.
\end{enumerate}
\end{lemma}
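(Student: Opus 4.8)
The plan is to regard $\ftilde$ as the convolution of $f$ with the Gaussian density $p_\sigma(z)=(2\pi\sigma^2)^{-d/2}e^{-\ltwos{z}^2/(2\sigma^2)}$ and to exploit the explicit first and second derivatives of $p_\sigma$. The key point is that convolution with a Gaussian is $C^\infty$ no matter how irregular $f$ is, so $\ftilde$ is smooth and all the differentiations under the integral sign below are justified by dominated convergence, using only that $f$ is bounded (Assumption~\ref{assumption:erm}) and that $p_\sigma$ and its derivatives are integrable.

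For the unbiasedness in the first claim, write $\ftilde(x)=\int f(y)\,p_\sigma(y-x)\,dy$ and use $\nabla_x p_\sigma(y-x)=\frac{y-x}{\sigma^2}p_\sigma(y-x)$ to get $\nabla\ftilde(x)=\E_z\big[\frac{z}{\sigma^2}f(x+z)\big]$. Since $a$ is drawn uniformly from $\{a_1,\dots,a_n\}$ we have $\E_a[\ell(x;a)]=f(x)$, hence $\E[g(x)\mid x]=\E_z\big[\frac{z}{\sigma^2}(f(x+z)-f(x))\big]=\E_z\big[\frac{z}{\sigma^2}f(x+z)\big]=\nabla\ftilde(x)$, the control-variate term $-\ell(x;a)$ contributing nothing because $\E_z[z]=0$.

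For the tail bound, fix $x$ and $u$ with $\ltwos{u}\le\sigma/(2B)$. Since $\ell$ takes values in $[0,B]$, we have $|\ell(x+z;a)-\ell(x;a)|\le B$, and therefore $\inprod{u}{g(x)}^2\le\frac{B^2}{\sigma^4}\inprod{u}{z}^2$ for every $z$ and $a$. As $\inprod{u}{z}\sim N(0,\sigma^2\ltwos{u}^2)$, the variable $\inprod{u}{z}^2/(\sigma^2\ltwos{u}^2)$ is $\chi^2_1$, so with $t\defeq B^2\ltwos{u}^2/\sigma^2\le 1/4$ we get $\E\big[e^{\inprod{u}{g(x)}^2}\mid x\big]\le\E[e^{t\chi^2_1}]=(1-2t)^{-1/2}$. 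Using $-\log(1-s)\le s/(1-s)\le 2s$ for $0\le s\le 1/2$ with $s=2t$ gives $(1-2t)^{-1/2}=e^{-\frac12\log(1-2t)}\le e^{2t}\le e^{4B^2\ltwos{u}^2/\sigma^2}=e^{\ltwos{u}^2(2B/\sigma)^2}$, which is the stated bound.

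For the smoothness claim, differentiate twice: with $\nabla^2_x p_\sigma(z)=\big(\frac{zz^\top}{\sigma^4}-\frac{I}{\sigma^2}\big)p_\sigma(z)$ we obtain $\nabla^2\ftilde(x)=\E_z\big[f(x+z)\big(\frac{zz^\top}{\sigma^4}-\frac{I}{\sigma^2}\big)\big]$, and since $\E_z\big[\frac{zz^\top}{\sigma^4}-\frac{I}{\sigma^2}\big]=0$ we may subtract $f(x)$ times this, so that testing against a unit vector $v$ gives $v^\top\nabla^2\ftilde(x)v=\frac{1}{\sigma^2}\E_z\big[(f(x+z)-f(x))(W^2-1)\big]$ with $W\defeq\inprod{v}{z}/\sigma\sim N(0,1)$. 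The one subtle point is that bounding the operator norm of the integrand $\frac{zz^\top-\sigma^2 I}{\sigma^4}$ directly would cost a factor of $d$; working with the scalar quadratic form instead avoids this, since $|f(x+z)-f(x)|\le B$ yields $|v^\top\nabla^2\ftilde(x)v|\le\frac{B}{\sigma^2}\E[|W^2-1|]\le\frac{B}{\sigma^2}\sqrt{\E[(W^2-1)^2]}=\frac{\sqrt2\,B}{\sigma^2}$. Because $\nabla^2\ftilde(x)$ is symmetric, $\ltwos{\nabla^2\ftilde(x)}=\sup_{\ltwos{v}=1}|v^\top\nabla^2\ftilde(x)v|\le\sqrt2\,B/\sigma^2\le 2B/\sigma^2$, and the integral form of $L$-smoothness in Assumption~\ref{assumption:conditions-for-szld} follows by Taylor's theorem with $L=2B/\sigma^2$. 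I expect this dimension-free smoothness constant to be the main obstacle; everything else is routine manipulation of low-order Gaussian moments.
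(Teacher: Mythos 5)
Your proof is correct and, apart from a cosmetic difference in part~(2), follows essentially the same route as the paper. Part~(1) is identical in substance: differentiate under the Gaussian convolution, drop the control variate using $\E_z[z]=0$, reduce the moment generating function to that of a $\chi^2_1$ variable, and bound $(1-2t)^{-1/2}$ by $e^{4B^2\ltwos{u}^2/\sigma^2}$ on the admissible range of $\ltwos{u}$; your elementary logarithm inequality $-\log(1-s)\le 2s$ makes this last step explicit where the paper only asserts it. For part~(2), you center the integrand by $f(x)$ and bound the scalar quadratic form $v^\top\nabla^2\ftilde(x)v=\sigma^{-2}\E[(f(x+z)-f(x))(W^2-1)]$ via Cauchy--Schwarz, getting the dimension-free $\sqrt{2}B/\sigma^2$; the paper instead splits $\nabla^2\ftilde(x)=\sigma^{-4}\E[zz^\top f(x+z)]-\sigma^{-2}\E[f(x+z)]I$ and uses the triangle inequality together with the observation that $0\preceq\E[zz^\top f(x+z)]\preceq B\E[zz^\top]=B\sigma^2 I$, yielding $2B/\sigma^2$. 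Both tricks avoid the factor of~$d$ you were worried about, since the paper keeps the expectation inside the spectral norm rather than bounding $\E\,\ltwos{zz^\top-\sigma^2 I}$; your centered-quadratic-form argument is slightly sharper and arguably cleaner, but the two are equivalent in what they buy.
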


\noindent

Lemma~\ref{lemma:property-of-smoothing} shows that $\ftilde$ is an $L$-smooth function, with $L = (2 B/\sigma^2)$. In addition, the stochastic gradient $g$ satisfies the third condition of Assumption~\ref{assumption:conditions-for-szld} with $\bmax = \frac{\sigma}{2 B}$ and $\Gg = \frac{2 B}{\sigma}$. As a consequence, Theorem~\ref{theorem:sld-general} implies the risk bound:
\begin{align}\label{eqn:pop-bound-goal}
\ftilde(\xhat) \leq \sup_{x:\,d(x,U)\leq \rho} \ftilde(x), 
\end{align}
We claim the following inequality:
\begin{align}\label{eqn:ftilde-F-closeness}
	\mbox{If } \sigma = \frac{\nu}{\max\{ \Gp, B / \rhoK \}} \quad\mbox{then}\quad \sup_{x\in K} |\ftilde(x)- F(x)| \leq 2\nu.
\end{align}
We defer the proof of claim~\eqref{eqn:ftilde-F-closeness} to the end of this section, focusing on its consequence. Let $\sigma$ take the value in claim~\eqref{eqn:ftilde-F-closeness}. The conseuqence of~\eqref{eqn:ftilde-F-closeness} and the $\Gp$-Lipschitz continuity of the function $F$ imply:
\[
	F(\xhat) \leq \ftilde(\xhat) + 2\nu \leq \sup_{x:\,d(x,U)\leq \rho} \ftilde(x) + 2\nu \leq \sup_{x:\,d(x,U)\leq \rho} F(x) + 4\nu \leq \sup_{x\in U} F(x) + 4\nu + \Gp\rho,
\]
By choosing $\rho\defeq \nu/\Gp$, we establish the risk bound $F(\xhat) \leq \sup_{x\in U}F(x) +5\nu$. It remains to establish the iteration complexity bound.

According to Theorem~\ref{theorem:sld-general}, by choosing stepsize $\eta \defeq\eta_0$, SGLD achieves the risk bound~\eqref{eqn:pop-bound-goal} with iteration number polynomial in $(B, \Lf, \Gg, \log(1/\SmallProbParam), d, \xi, \hmax^{-1}, \bmax^{-1}, \rho^{-1}, \C^{-1}_{(\xi\ftilde)}(K\backslash U))$, where $(\Lf, \Gg, \bmax)$ depend on $\sigma$. Therefore, it remains to lower bound the restricted Cheeger constant $\C_{\xi\ftilde}(K\backslash U)$. By combining the claim~\eqref{eqn:ftilde-F-closeness} with inequality~\eqref{eqn:robustness-property}, we obtain
\[
	\C_{(\xi \ftilde)}(K\backslash U) \geq e^{-4\xi \nu} \C_{(\xi F)}(K\backslash U) \geq e^{-4} \C_{(\xi F)}(K\backslash U).
\]
It means that $\C_{(\xi \ftilde)}(K\backslash U)$ and $\C_{(\xi F)}(K\backslash U)$ differs by a constant multiplicative factor. Finally, plugging in the values of $\rho$ and $\sigma$ completes the proof. 

\paragraph{Proof of Claim~\eqref{eqn:ftilde-F-closeness}} We define an auxiliary function $\Ftilde$ as follow:
\begin{align}\label{eqn:define-F-smoothing}
	\Ftilde(x) = \E_z[F(x+z)] \quad \mbox{where} \quad z \sim N(0,\sigma^2 I),
\end{align}
Since $f(x) \in [F(x) - \nu, F(x) + \nu]$, the definitions~\eqref{eqn:define-smoothing} and~\eqref{eqn:define-F-smoothing} imply $\ftilde(x) \in [\Ftilde(x) - \nu, \Ftilde(x) + \nu]$. The $\Gp$-Lipschitz continuity of function $F$ implies that for any $x\in K$ and $y\in \Kbar$, there is $|F(y)-F(x)| \leq \Gp\ltwos{y-x}$. For any $x\in K$ and $y\notin \Kbar$, we have $F(x),F(y) \in  [0, B]$ and that the distance between $x$ and $y$ is at least $\rhoK$, thus $|F(y)-F(x)| \leq B \leq (B / \rhoK) \ltwos{y-x}$. As a consequence, for any $x\in K$ we have:
\begin{align*}
	|F(x+z) - F(x)| &\leq \indicator[x+z\in \Kbar]\,\Gp\ltwos{z} + \indicator[x+z\notin \Kbar ]\,(B / \rhoK) \ltwos{z} \leq \max\{ \Gp, B / \rhoK \}\, \ltwos{z}.
\end{align*}
Taking expectation over $z$ on both sides and using Jensen's inequality, we obtain
\[
	|\Ftilde(x) - F(x)| \leq \E[|F(x+z) - F(x)|] \leq \max\{ \Gp, B / \rhoK\} \E[\ltwos{z}] \leq \max\{ \Gp, B / \rhoK \}\,\sigma.
\]
Thus, by choosing $\sigma \defeq \frac{\nu}{\max\{ \Gp, B / \rhoK \}}$, it ensures that for any $x\in K$: 
\begin{align*}
	|\ftilde(x) - F(x)| \leq |\ftilde(x) - \Ftilde(x)| + |\Ftilde(x) - F(x)| \leq 2\nu.
\end{align*}


\subsection{Proof of Lemma~\ref{lemma:property-of-smoothing}}
\label{sec:proof-lemma-property-of-smoothing}

\noindent
(1) The function $\ftilde$ is a differentiable function, because it is the convolution of a bounded function $f$ and a Gaussian density function (which is infinite-order differentiable). We can write the gradient vector $\nabla \ftilde(x)$ as:
\begin{align*}
	\nabla \ftilde(x) = \frac{1}{(2\pi)^{d/2}}  \frac{\partial}{\partial x} \int e^{-\frac{\ltwos{z}^2}{2\sigma^2}} f(x+z) dz.
\end{align*}
Let $z'\defeq x+z$. By change of variables, the above equation implies:
\begin{align}
	\nabla \ftilde(x) &= \frac{1}{(2\pi)^{d/2}}  \frac{\partial}{\partial x} \int e^{-\frac{\ltwos{z'-x}^2}{2\sigma^2}} f(z') dz' = \frac{1}{(2\pi)^{d/2}}  \int \Big( \frac{\partial}{\partial x}  e^{-\frac{\ltwos{z'-x}^2}{2\sigma^2}}\Big) f(z') dz'.\nonumber\\
	&= \frac{1}{(2\pi)^{d/2}}  \int \frac{z'-x}{\sigma^2}\,e^{-\frac{\ltwos{x-z'}^2}{2\sigma^2}} f(z') dz' = \frac{1}{(2\pi)^{d/2}}  \int \frac{z}{\sigma^2} e^{-\frac{\ltwos{z}^2}{2\sigma^2}} f(x+z) dz\nonumber\\
	&= \E\Big[\frac{z}{\sigma^2}f(x+z)\Big] \stackrel{\textrm (i)}{=} \E\Big[\frac{z}{\sigma^2}(f(x+z)-f(x))\Big] \stackrel{\textrm (ii)}{=} \E\Big[\frac{z}{\sigma^2}(\ell(x+z;a)-\ell(x;a))\Big].\label{eqn:gradient-change-of-variable}
\end{align}
For the above deduction, equation (i) holds because $\E[zf(x)] = \E[z]\,\E[f(x)] = 0$; equation (ii) holds because $\E[\ell(y;a)|y] = f(y)$ for any $y\in K$. It shows that $g(x)$ is an unbiased estimate of $\nabla \ftilde(x)$. Since $\ell(\cdot;a) \in [0, B]$, any vector $u\in \R^d$ satisfies
\[
	(\inprod{u}{g(x)})^2 = (\ell(x+z;a)-\ell(x;a))^2\big(\langle u,\frac{z}{\sigma^2}\rangle\big)^2 \leq B^2 \big(\langle u,\frac{z}{\sigma^2}\rangle\big)^2 = \frac{B^2\ltwos{u}^2}{\sigma^2}\big(\inprod{\frac{u}{\ltwos{u}}}{\frac{z}{\sigma}}\big)^2
\]
Thus the following bound holds:
\[
	\E[e^{(\inprod{u}{g(x)})^2}|x] \leq \E[e^{\frac{B^2 \ltwos{u}^2}{\sigma^2}(\inprod{u/\ltwos{u}}{z/\sigma})^2}].
\]
Notice that $\inprod{u/\ltwos{u}}{z/\sigma}$ satisfies the standard normal distribution. Thus the right-hand side of the above inequality is bounded by 
\[
	\E[e^{\frac{B^2 \ltwos{u}^2}{\sigma^2}(\inprod{u/\ltwos{u}}{z/\sigma})^2}] = \frac{1}{\sqrt{1 - 2 B^2 \ltwos{u}^2/\sigma^2}} \leq e^{B^2 \ltwos{u}^2  (2/\sigma)^2} \quad \mbox{if} \quad \ltwos{u} \leq \frac{\sigma}{2 B}.
\]
Combining the two inequalities above completes the proof.\\

\noindent
(2) In order to bound the smoothness of the function $\ftilde(x)$, we derive the second derivative of $\ftilde(x)$ using equation~\eqref{eqn:gradient-change-of-variable}:
\begin{align}
\nabla^2 \ftilde(x) &= \frac{\partial}{\partial x} (\nabla \ftilde(x)) = \frac{1}{(2\pi)^{d/2}}  \frac{\partial}{\partial x}  \int \frac{z}{\sigma^2} e^{-\frac{\ltwos{z}^2}{2\sigma^2}} f(x+z) dz \nonumber\\
&= \frac{1}{(2\pi)^{d/2}}  \frac{\partial}{\partial x}  \int \frac{z'-x}{\sigma^2} e^{-\frac{\ltwos{z'-x}^2}{2\sigma^2}} f(z') dz' = \frac{1}{(2\pi)^{d/2}}   \int \Big(\frac{\partial}{\partial x}  \frac{z'-x}{\sigma^2} e^{-\frac{\ltwos{z'-x}^2}{2\sigma^2}}\Big) f(z') dz'\nonumber\\
&= \frac{1}{(2\pi)^{d/2}} \int \Big(\frac{(z'-x)(z'-x)^\top}{\sigma^4} - \frac{I}{\sigma^2}\Big) e^{-\frac{\ltwos{z'-x}^2}{2\sigma^2}} f(z') dz'
= \E\Big[ \frac{z z^\top - \sigma^2 I}{\sigma^4}f(x+z)\Big].
\label{eqn:hessian-change-of-variable}
\end{align}
Using the fact $f(x+z)\in [0,B]$, equation~\eqref{eqn:hessian-change-of-variable} implies:
\[
	\ltwos{\nabla^2 \ftilde(x)} \leq \frac{1}{\sigma^4} \ltwos{\E[z z^\top f(x+z)]} + \frac{1}{\sigma^2}\ltwos{\E[f(x+z) I]} \leq \frac{B}{\sigma^4}\ltwos{\E[z z^\top]} + \frac{B}{\sigma^2} = \frac{2 B}{\sigma^2},
\]
which establishes that the function $\ftilde$ is $(2 B/\sigma^2)$-smooth.

\section{Proof of Theorem~\ref{theorem:zero-one}}
\label{sec:proof-theorem-zero-one}

We use Theorem~\ref{theorem:pop-general} to upper bound the population risk as well as the time complexity. To apply the theorem, we need to verify Assumption~\ref{assumption:erm}. Recall that the parameter space is defined by $K\defeq\{x\in \R^d: 1/2\leq \ltwos{x}\leq 1\}$. Let $\Kbar\defeq\{x\in \R^d: 1/4\leq \ltwos{x}\leq 5/4\}$ be an auxiliary super-set.
The following lemma shows that these assumptions hold under our problem set-up.

\begin{lemma}\label{lemma:zero-one-assumptions}
The following properties hold:
\begin{enumerate}[(1)]
\item There exists $\hmax = \Omega(d^{-2})$ such that for any $x\in K$, $h\leq \hmax$ and $y\sim N(x,2h I)$, we have $P(y\in K) \geq 1/3$.
\item The function $F$ is 3-Lipschitz continuous in $\Kbar$.
\item For any $\nu,\SmallProbParam > 0$, if the sample size $n$ satisfies $n \gtrsim \frac{d}{\nu^2}$, then with probability at least $1-\SmallProbParam$ we have $\sup_{x\in \Kbar} |f(x)-F(x)| \leq \nu$. The notation ``$\lesssim$'' hides a poly-logarithmic function of $(d,1/\nu,1/\SmallProbParam)$.
\end{enumerate}
\end{lemma}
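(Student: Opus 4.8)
The plan is to prove the three items of Lemma~\ref{lemma:zero-one-assumptions} separately. All three are elementary, but (1) needs the most care, since it is exactly the place where the $d^{-2}$ scaling of $\hmax$ is forced.

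\textbf{Part (1): the annulus is ``smooth enough''.} Write $y = x+z$ with $z\sim N(0,2hI)$ and decompose $z = u\,\hat x + z_\perp$, where $\hat x = x/\ltwos{x}$, $u = \inprod{z}{\hat x}\sim N(0,2h)$, $z_\perp\perp\hat x$ and $\ltwos{z_\perp}^2 \sim 2h\,\chi^2_{d-1}$ independently. Then $\ltwos{y}^2 = (\ltwos{x}+u)^2 + \ltwos{z_\perp}^2$, so the radial displacement is $\ltwos{y}^2 - \ltwos{x}^2 = 2\ltwos{x}u + \ltwos{z}^2$, whose ``drift'' $\E[\ltwos{z}^2] = 2hd$ must be controlled against the mean-zero fluctuation $2\ltwos{x}u$ of standard deviation $\Theta(\sqrt h)$. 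Choosing $\hmax = c\,d^{-2}$ for a small absolute constant $c$ makes $2hd = O(1/d)$ no larger than a small multiple of $\sqrt h$. I would then split into cases on $\ltwos{x}$: if $\ltwos{x}$ is $\Omega(1)$-far from the outer sphere, then $P(\ltwos{y}>1)$ requires $\ltwos{z} = \Omega(1)$, which by the chi-square tail bound (Lemma~\ref{lemma:chi-square-concentration}) is exponentially small, leaving only $P(\ltwos{y}<1/2)$, which for $\ltwos{x}\ge 1/2$ is at most $P(u<0)=\tfrac12$; symmetrically near the inner sphere. In the remaining ``near a boundary'' case, the same Gaussian/chi-square estimate shows the exit probability is at most $\Phi(O(\sqrt c)) + e^{-\Omega(d)}$, which tends to $\tfrac12$ as $c\to 0$. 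A union bound over the two ways of leaving $K$ then gives $P(y\in K)\ge 1/3$. Pinning down the constant $1/3$ (rather than a smaller one) is the only delicate point, and it is precisely what dictates $\hmax = \Theta(d^{-2})$.

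\textbf{Part (2): $F$ is $3$-Lipschitz on $\Kbar$.} Since $a$ is uniform on the sphere, ties $\inprod{x}{a}=0$ have probability zero, so the Massart model~\eqref{eqn:define-massart-noise} gives
\[
	F(x) = \E_a\Big[\tfrac{1-\Signal(a)}{2}\Big] + \E_a\Big[\Signal(a)\,\indicator[\sgn\inprod{x}{a}\neq\sgn\inprod{\xstar}{a}]\Big],
\]
where the first term is independent of $x$. Hence, using $|\Signal(a)|\le 1$ and the fact that the two disagreement indicators can differ only when $\sgn\inprod{x}{a}\neq\sgn\inprod{y}{a}$,
\[
	|F(x)-F(y)| \le \Pr_a\big[\sgn\inprod{x}{a}\neq\sgn\inprod{y}{a}\big] = \theta(x,y)/\pi,
\]
where $\theta(x,y)$ is the angle between $x$ and $y$. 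A short geometric bound finishes it: the triangle $\{0,x,y\}$ has area $\tfrac12\ltwos{x}\ltwos{y}\sin\theta(x,y)$ and also at most $\tfrac12\ltwos{x}\ltwos{x-y}$ (distance from $y$ to the line through $0,x$ is at most $\ltwos{x-y}$), so $\sin\theta(x,y)\le \ltwos{x-y}/\max\{\ltwos{x},\ltwos{y}\}\le 4\ltwos{x-y}$ on $\Kbar$. If $\ltwos{x-y}\ge 1/3$ then $|F(x)-F(y)|\le 1\le 3\ltwos{x-y}$; otherwise $\cos\theta(x,y) = \frac{\ltwos{x}^2+\ltwos{y}^2-\ltwos{x-y}^2}{2\ltwos{x}\ltwos{y}}>0$, so $\theta(x,y)<\pi/2$ and $\theta(x,y)\le\tfrac{\pi}{2}\sin\theta(x,y)\le 2\pi\ltwos{x-y}$, giving $|F(x)-F(y)|\le 2\ltwos{x-y}\le 3\ltwos{x-y}$.

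\textbf{Part (3): uniform convergence.} Up to the probability-zero tie event, $\ell(x;(a,b)) = \indicator[\sgn\inprod{x}{a}\neq b]$, so $\{x\mapsto\ell(x;\cdot): x\in\Kbar\}$ is a subclass of the $0$--$1$ loss class of homogeneous linear classifiers, whose VC dimension is at most $d$; the loss is $[0,1]$-valued. Standard Vapnik--Chervonenkis uniform convergence then yields, with probability at least $1-\SmallProbParam$, $\sup_{x\in\Kbar}|f(x)-F(x)|\le O\big(\sqrt{(d\log n + \log(1/\SmallProbParam))/n}\big)$, and requiring the right-hand side to be at most $\nu$ holds once $n\gtrsim d/\nu^2$ up to a poly-logarithmic factor in $(d,1/\nu,1/\SmallProbParam)$, which is the stated bound. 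Among the three parts, (1) is the main obstacle: the union-bound estimate must be arranged so that the exit probability stays strictly below $2/3$ uniformly over all $x\in K$, and this is exactly what pins $\hmax$ at order $d^{-2}$.
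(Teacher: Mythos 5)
Your proposal is correct and follows essentially the same route as the paper's proof: part (1) relies on chi-square concentration (via the paper's Lemma~\ref{lemma:chi-square-concentration}) for the squared norm of $z$ together with the one-dimensional Gaussian fluctuation of $\inprod{x}{z}$, and chooses $\hmax=\Theta(d^{-2})$ precisely so that the radial drift $2hd$ is dominated by the $\Theta(\sqrt h)$ fluctuation; part (2) reduces $|F(x)-F(y)|$ to the disagreement probability $\theta(x,y)/\pi$ and bounds the angle linearly in $\ltwos{x-y}$ using the lower bound $\ltwos{x},\ltwos{y}\ge 1/4$ on $\Kbar$; part (3) is the standard VC uniform-convergence bound for (homogeneous) linear threshold functions. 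The only differences are in the bookkeeping of parts (1) and (2): in (1) you track the radial displacement $\ltwos{y}^2-\ltwos{x}^2$ directly with a case split on $\ltwos{x}$, while the paper packages the same two concentration events as a sufficient condition on $(\ltwos{z}^2, 2\inprod{x}{z})$; in (2) you bound $\theta\le\tfrac{\pi}{2}\sin\theta$ (for $\theta<\pi/2$, with a trivial fallback when $\ltwos{x-y}\ge 1/3$) and $\sin\theta\le \ltwos{x-y}/\max\{\ltwos x,\ltwos y\}$ by the triangle-area argument, whereas the paper uses $\arccos(t)\le 3\sqrt{1-t}$ together with $\frac{\inprod{x}{y}}{\ltwos x\ltwos y}\ge 1-8\ltwos{x-y}^2$. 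Both yield the constant $3$. These are interchangeable elementary estimates, not a different proof strategy.
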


\noindent See Appendix~\ref{sec:proof-lemma-zero-one-assumptions} for the proof.

Let $\alpha_0\in (0,\pi/4]$ be an arbitrary angle. We define $U\subset K$ to be the set of points such that the angle between the point and $\xstar$ is bounded by $\alpha_0$, or equivalently:
\[
	U\defeq \{x\in K: \inprod{x/\ltwos{x}}{\xstar} \geq \cos(\alpha_0)\}.
\]
For any $x\in K$, the 3-Lipschitz continuity of function $F$ implies:
\begin{align*}
	F(x) = F(x/\ltwos{x}) \leq F(\xstar) + 3\norm{\frac{x}{\ltwos{x}}-\xstar}_2.
\end{align*}
By simple geometry, it is easy to see that
\[
	\norm{\frac{x}{\ltwos{x}}-\xstar}_2 = 2 \sin(\alpha/2) \leq 2 \sin(\alpha_0/2) \leq 2 \sin(\alpha_0).
\]
Thus, we have
\begin{align}\label{eqn:risk-bound-wrt-angle}
	F(x) \leq F(\xstar) + 6 \sin(\alpha_0).
\end{align}

Inequality~\eqref{eqn:risk-bound-wrt-angle} implies that for small enough $\alpha_0$, any point in $U$ is a nearly optimal solutions. Thus we can use $U$ as a target optimality set. The following lemma lower bounds the restricted Cheeger constant for the set $U$.

\begin{lemma}\label{lemma:zero-one-cheeger}
Assume that $d \geq 2$. For any $\alpha_0 \in (0, \pi/4]$, there are universal constant $c_1,c_2 > 0$ such that if we choose $\xi \geq \frac{c_1 d^{3/2}}{\BaseSignal \sin^2(\alpha_0)}$, then the restricted Cheeger constant is lower bounded by $\C_{(\xi F)}(K\backslash U) \geq c_2 d$.
\end{lemma}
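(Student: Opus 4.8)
The plan is to apply Lemma~\ref{lemma:vector-field} to $\xi F$ (to be safe about differentiability, one first applies it to the Gaussian smoothing $\Ftilde$ of $F$ and transfers the bound back to $F$ via the stability property~\eqref{eqn:robustness-property}, using that $F$ is $3$-Lipschitz), with a vector field that pushes mass away from $\xstar$. Writing $r=\ltwos{x}$ and letting $\theta=\theta(x)$ be the angle between $x$ and $\xstar$, recall that $F$ is scale-invariant and, for a unit $u$,
\[
F(u)=\tfrac12-\tfrac12\,\E_a\big[\Signal(a)\,\sgn(\inprod{\xstar}{a})\,\sgn(\inprod{u}{a})\big],
\]
so $\nabla F(x)$ is tangent to the sphere through $x$. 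I would take $\phi(x)=\tfrac12\big(\phi_T(x)+\beta(r)\,x/r\big)$, where $\phi_T(x)\defeq \tfrac{\inprod{x}{\xstar}}{r^2}\,x-\xstar$ is tangential ($\inprod{\phi_T(x)}{x}=0$, $\ltwos{\phi_T(x)}=\sin\theta$) and $\beta$ is a fixed smooth profile of magnitude $\order(1/d)$ with $\beta(1)>0>\beta(1/2)$. The field $\phi_T$ does the real work, but since $K$ is an annulus it alone would push $x-\SmallVar\phi_T(x)$ out of $K$ at $r=1$; the tiny radial correction makes $x-\SmallVar\phi(x)$ move strictly inward at $r=1$ and strictly outward at $r=1/2$ for $\SmallVar$ small, and since $\ltwos{\phi(x)}\le\tfrac12\sqrt{\sin^2\theta+\beta^2}\le1$, the hypotheses of Lemma~\ref{lemma:vector-field} hold.

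Next I would evaluate the two quantities in Lemma~\ref{lemma:vector-field}. Since $\nabla F(x)\perp x$, only $\phi_T$ contributes to the alignment term, and a chain-rule computation gives $\inprod{\phi_T(x)}{\nabla F(x)}=\tfrac1{2r}\,\inprod{\xstar}{\nabla H(x/r)}$ with $H(u)=\E_a[\Signal(a)\sgn(\inprod{\xstar}{a})\sgn(\inprod{u}{a})]$. Splitting $\Signal(a)=\BaseSignal\,|\inprod{\xstar}{a}|+\Delta(a)$ with $\Delta\ge0$, the first term of $H$ is, as a function of $u$, $\BaseSignal m_d\,\inprod{\xstar}{u/\ltwos u}$ where $m_d\defeq\E_a[|\inprod{u}{a}|]=\Theta(d^{-1/2})$ (using $\E_a[a\,\sgn(\inprod{u}{a})]=m_d\,u/\ltwos u$), whose $\xstar$-derivative at a unit $u$ is $\BaseSignal m_d\sin^2\theta$. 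The remaining term $\E_a[\Delta(a)\sgn(\inprod{\xstar}{a})\sgn(\inprod{u}{a})]$ is non-increasing as $u$ moves along $-\xstar$ (for each fixed $a$, $\sgn(\inprod{\xstar}{a})\sgn(\inprod{u-s\xstar}{a})$ is non-increasing in $s\ge0$, and $\Delta\ge0$), so its $\xstar$-derivative is $\ge0$. Hence $\inprod{\phi(x)}{\nabla F(x)}\ge \tfrac{\BaseSignal m_d\sin^2\theta}{4r}$. For the divergence, $\dvg\phi_T(x)=\tfrac{(d-1)\inprod{x}{\xstar}}{r^2}=\tfrac{(d-1)\cos\theta}{r}$ — crucially \emph{negative} for $\theta>\pi/2$ — while $\dvg(\beta(r)\,x/r)=\order(1)$ uniformly in $d$ because $\beta=\order(1/d)$.

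Plugging these into Lemma~\ref{lemma:vector-field} gives
\[
\C_{(\xi F)}(K\backslash U)\ \ge\ \inf_{x\in K\backslash U}\Big\{\tfrac{\xi\BaseSignal m_d\sin^2\theta}{4r}-\tfrac{(d-1)\cos\theta}{2r}-\order(1)\Big\},
\]
with $\theta\ge\alpha_0$ and $r\in[\tfrac12,1]$ on $K\backslash U$. I would then split into cases. If $\cos\theta\le0$, both leading terms are non-negative and, since $\sin^2\theta+\cos^2\theta=1$ forces one of them to be $\gtrsim\min\{\xi\BaseSignal m_d,\,d\}$, the infimum is $\gtrsim d$ once $\xi\BaseSignal m_d\gtrsim d$. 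If $\cos\theta>0$, then $\sin^2\theta\ge\sin^2\alpha_0$ and $\cos\theta\le1$, so the bracket is at least $\tfrac1{2r}\big(\tfrac12\xi\BaseSignal m_d\sin^2\alpha_0-(d-1)\big)-\order(1)$. Using $m_d\ge c_0 d^{-1/2}$ and choosing $\xi\ge \tfrac{c_1 d^{3/2}}{\BaseSignal\sin^2\alpha_0}$ with $c_1$ a large enough universal constant, both cases yield $\C_{(\xi F)}(K\backslash U)\ge c_2 d$.

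The main obstacle is the region near the antipode $\theta\approx\pi$: there $\phi_T$ — hence the alignment term $\inprod{\phi}{\nabla F}$ — degenerates to zero, so the bound cannot rely on gradient alignment. This is precisely where $\dvg\phi_T=\tfrac{(d-1)\cos\theta}{r}\approx-\tfrac{d-1}{r}$ is most negative, and that negativity is what carries the argument there, the same mechanism by which a strictly negative divergence disposes of saddle points in Proposition~\ref{prop:lower-bounded-strict-saddle}. Keeping the radial boundary correction at scale $\order(1/d)$ is essential so its $\order(d)$-sized naive divergence does not swamp this cancellation; the non-differentiability of $F$ is a secondary nuisance handled by the smoothing-and-stability detour noted above.
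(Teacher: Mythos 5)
Your proposal is correct and shares the paper's scaffolding: smooth $F$ to apply Lemma~\ref{lemma:vector-field}, use a vector field whose tangential part points away from $\xstar$ (your $\phi_T$ is a rescaling of the paper's $\phi_0=\tfrac13(\inprod{x}{\xstar}x-\ltwos{x}^2\xstar)$), add a small radial correction so the flow stays in the annulus, compute $\dvg\phi_T=\tfrac{(d-1)\cos\theta}{r}$, and split on the sign of $\cos\theta$. But your route to the alignment bound $\inprod{\phi}{\nabla F}\gtrsim\BaseSignal\sin^2\theta/\sqrt d$ is genuinely different and more elementary. The paper proves Claim~\ref{claim:improve-loss-ratio} by a conditional argument: it bounds the probability that the classifier flips sign along the move $x\to x-\SmallVar\xstar$ (geometrically, $\approx\SmallVar|\sin\alpha|/(\pi\ltwos x)$) and then invokes Lemma 4 of \citet{awasthi2015efficient} to lower bound $\E[|\inprod{\xstar}{a}|]$ conditioned on that flip. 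You instead write $F(u)=\tfrac12-\tfrac12\E_a[\Signal(a)\sgn\inprod{\xstar}{a}\sgn\inprod{u}{a}]$, split $\Signal=\BaseSignal|\inprod{\xstar}{\cdot}|+\Delta$ with $\Delta\ge0$, compute the base part's gradient exactly from the identity $\E_a[a\,\sgn\inprod{u}{a}]=m_d\,u/\ltwos u$ (giving $\BaseSignal m_d\sin^2\theta$ on the nose, with $m_d=\Theta(d^{-1/2})$), and dispatch the $\Delta$ part by a one-line monotonicity argument. This removes the reliance on the Awasthi et al.\ concentration lemma, and both approaches yield the same $d^{-1/2}$ scaling and hence the same condition on $\xi$. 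Two points of polish: (i) the paper makes the radial correction vanish by parameterizing it by $\delta$ and sending $\delta\to0$ \emph{after} applying Lemma~\ref{lemma:vector-field}, whereas you fix it at scale $O(1/d)$ and carry an $O(1)$ divergence error — harmless here since $O(1)\ll\Omega(d)$, but the $\delta\to0$ trick is cleaner; and (ii) since you do invoke Gaussian smoothing for differentiability, the gradient identities should formally be stated for $\Ftilde$ with $\sigma\to0$ at the end, exactly as the paper does when turning difference quotients into the final bound.
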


\noindent See Appendix~\ref{sec:proof-zero-one-cheeger} for the proof.

Given a target optimality $\epsilon > 0$, we choose $\alpha_0 \defeq \arcsin(\epsilon/12)$. The risk bound~\eqref{eqn:risk-bound-wrt-angle} implies
\begin{align}\label{eqn:F-Fstar-epsilon}
	F(x) \leq F(\xstar) + \epsilon/2 \quad\mbox{for all}\quad x\in U.
\end{align}
Lemma~\ref{lemma:zero-one-assumptions} ensures that the pre-conditions of
Theorem~\ref{theorem:pop-general} hold with a small enough quantity $\nu$. Combining Theorem~\ref{theorem:pop-general} with inequality~\eqref{eqn:F-Fstar-epsilon}, with probability at least $1-\SmallProbParam$, SGLD achieves the risk bound:
\begin{align}\label{eqn:zero-one-risk-bound-0}
	F(\xhat) \leq \sup_{x\in U}F(x) + 5\nu \leq F(\xstar) + \epsilon/2 + 5\nu.
\end{align}
In order to have a small enough $\nu$, we want the functions $f$ and $F$ to be uniformly close. More precisely, we want the gap between them to satisfy:
\begin{align}\label{eqn:zero-one-uniform-gap}
\sup_{x\in \Kbar} |f(x)-F(x)|\leq \nu \defeq \min\Big\{ \frac{\BaseSignal\sin^2(\alpha_0)}{c_1 d^{3/2}}, \epsilon/10\Big\},
\end{align}
By Lemma~\ref{lemma:zero-one-assumptions}, this can be achieved by assuming a large enough sample size $n$. In particular, if the sample size satisfies $n \gtrsim \frac{d^4}{\BaseSignal^2\epsilon^4}$, then inequality~\eqref{eqn:zero-one-uniform-gap} is guaranteed to be true. The notation ``$\gtrsim$'' hides a poly-logarithmic function. 

If inequity~\eqref{eqn:zero-one-uniform-gap} holds, then $\nu \leq \epsilon/10$ holds, so that we can rewrite the risk bound~\eqref{eqn:zero-one-risk-bound-0} as $F(\xhat) \leq F(\xstar) + \epsilon$. By combining the choice of $\nu$ in \eqref{eqn:zero-one-uniform-gap} with the choice of $\xi \defeq \frac{c_1 d^{3/2}}{\BaseSignal \sin^2(\alpha_0)}$ in Lemma~\ref{lemma:zero-one-cheeger}, we find that the relation $\xi \in(0,1/\nu]$ hold, satisfying Theorem~\ref{theorem:pop-general}'s condition on $(\nu,\xi)$. As a result, Theorem~\ref{theorem:pop-general} implies that the iteration complexity of SGLD is bounded by the restricted Cheeger constant $\C_{(\xi F)}(K\backslash U)$. By Lemma~\ref{lemma:zero-one-cheeger}, the
restricted Cheeger constant is lower bounded by $\Omega(d)$, so that the iteration complexity is polynomial in $(d,1/\BaseSignal, 1/\epsilon, \log(1/\SmallProbParam))$.

\subsection{Proof of Lemma~\ref{lemma:zero-one-assumptions}}
\label{sec:proof-lemma-zero-one-assumptions}

(1) Let $x\in K$ be an arbitrary point and let $z\sim N(0,2h I)$. An equivalent way to express the relation $x+z\in K$ is the following sandwich inequality:
\begin{align}\label{eqn:equiv-condition-for-xz-in-K}
	1/4 - \ltwos{x}^2 - \ltwos{z}^2 \leq 2\inprod{x}{z} \leq 1 - \ltwos{x}^2 - \ltwos{z}^2.
\end{align}
For any $t > 0$, we consider a sufficient condition for inequality~\eqref{eqn:equiv-condition-for-xz-in-K}:
\begin{align*}
	\ltwos{z}^2 \leq 2 t h d \quad\mbox{and} \quad 2\inprod{x}{z} \in I_x \defeq
	[1/4 - \ltwos{x}^2, 1 - \ltwos{x}^2 - 2 t h d].
\end{align*}
The random variable $\frac{\ltwos{z}^2}{2h}$ satisfies a chi-square distribution with $d$ degrees of freedom. 
By Lemma~\ref{lemma:chi-square-concentration}, for any $t\geq 5$, the condition $\ltwos{z}^2 \leq 2t h d$ holds with probability at least $1- e^{-\Omega(t d)}$. 

Suppose that $t$ is a fixed constant, and $h$ is chosen to be $h\defeq \frac{c^2}{2 t d^2}$ for a constant $c>0$. Then the random variable $w_x \defeq 2\inprod{x}{z}$ satisfies a normal distribution $N(0; \frac{4\ltwos{x}^2(c/d)^2}{t} )$. The interval $I_x$, no matter how $x\in K$ is chosen, covers either $[-1/4,-(c/d)^2]$ or $[(c/d)^2,1/4]$. For $c\to 0$, we have $(c/d)^2 \ll \frac{2\ltwos{x}}{t}(c/d) \ll 1/4$, so that the probability of $w_x\in I_x$ is asymptotically lower bounded by $0.5$. It implies that there is a strictly positive constant $c$ (depending on the value of $t$) such that $P(w_x\in I_x) \geq 0.4$ for all $x\in K$. With this choice of $c$, we apply the union bound:
\[
	P(x+z\in K) \geq P(w_x\in I_x) - P(\ltwos{z}^2 > 2t h d) \geq 0.4 - e^{-\Omega(t d)}.
\]
By choosing $t$ to be a large enough constant, the above probability is lower bounded by $1/3$.\\

\noindent
(2) For two vectors $x, y \in \Kbar$, the loss values $\ell(x;a)$ and $\ell(y;a)$ are non-equal only when $\sgn(\inprod{x}{a}) \neq \sgn(\inprod{y}{a})$. Thus, we have the upper bound
\begin{align}\label{eqn:Fx-Fy-gap-bound}
	|F(x) - F(y)| \leq P\Big(\sgn(\inprod{x}{a}) \neq \sgn(\inprod{y}{a})\Big).
\end{align}
If we change the distribution of $a$ from uniform distribution to a normal distribution $N(0, I_{d\times d})$, the right-hand side of inequality~\eqref{eqn:Fx-Fy-gap-bound} won't change. Both $\inprod{x}{a}$ and $\inprod{x}{b}$ become normal random variables with correlation coefficient $\frac{\inprod{x}{y}}{\ltwos{x}\ltwos{y}}$. Under this setting, \citet{tong2012multivariate} proved that the right-side is equal to
\[
	P\Big(\sgn(\inprod{x}{a}) \neq \sgn(\inprod{y}{a})\Big)  = \frac{1}{\pi} \arccos\Big( \frac{\inprod{x}{y}}{\ltwos{x}\ltwos{y}} \Big)
\]
By simple algebra and using the fact that $\ltwos{x},\ltwos{y} \geq 1/4$, we have
\[
	\frac{\inprod{x}{y}}{\ltwos{x}\ltwos{y}} = \frac{1}{2}\Big(\frac{\ltwos{x}}{\ltwos{y}} + \frac{\ltwos{y}}{\ltwos{x}} - \frac{\ltwos{x-y}^2}{\ltwos{x}\ltwos{y}}\Big) \geq 1 - 8\ltwos{x-y}^2,
\]
Combining the above relations, and using the fact that $\arccos(t) \leq 3 \sqrt{1-t}$ for any $t\in [-1,1]$, we obtain
\[
	|F(x) - F(y)| = \frac{1}{\pi} \arccos\Big( \frac{\inprod{x}{y}}{\ltwos{x}\ltwos{y}} \Big) \leq \frac{3 \sqrt{8}}{\pi} \ltwos{x-y} \leq 3\,\ltwos{x-y},
\]
which shows that the function $F$ is 3-Lipschitz continuous in $\Kbar$.\\

\noindent
(3) Since function $f$ is the empirical risk of a linear classifier, its uniform convergence rate can be characterized by the VC-dimension. The VC-dimension of linear classifiers in a $d$-dimensional space is equal to $d+1$. Thus, the concentration inequality of~\citet{vapnik1999overview} implies that with probability at least $1-\SmallProbParam$, we have
\[
	\sup_{x\in \R^d} |f(x)-F(x)| \leq U(n) \defeq c\,\sqrt{\frac{d(\log(n/d)+1)+\log(1/\SmallProbParam)}{n}},
\]
where $c > 0$ is a universal constant. When $n\geq d$, the upper bound $U(n)$ is a monotonically decreasing function of $n$. In order to guarantee $U(n) \leq \nu$, it suffices to choose $n \geq n_0$ where the number $n_0$ satisfies:
\begin{align}\label{eqn:n0-constraint}
 	n_0  = \max\{n\in \R:~U(n)= \nu\}.
\end{align}
It is easy to see that $n_0$ is polynomial in $(d, 1/\nu, 1/\SmallProbParam)$. Thus, by the definition of $U(n)$, we have
\[
	\nu = c\,\sqrt{\frac{d(\log(n_0/d)+1)+\log(1/\SmallProbParam)}{n_0}}
	\leq c\,\sqrt{\frac{d\cdot\mbox{polylog}(d, 1/\nu, 1/\SmallProbParam )}{n_0}}.
\]
It implies $n_0 \lesssim \frac{d}{\nu^2}$, thus completes the proof.


\subsection{Proof of Lemma~\ref{lemma:zero-one-cheeger}}
\label{sec:proof-zero-one-cheeger}

Note that the population risk $F$ can be non-differentiable. In order to apply Lemma~\ref{lemma:vector-field} to lower bound the restricted Cheeger constant, we define a smoothed approximation of $F$, and apply Lemma~\ref{lemma:vector-field} on the smoothed approximation.
For an arbitrary $\sigma > 0$, we define $\Ftilde$ to be a smoothed approximation of the population risk:
\[
	\Ftilde(x) \defeq \E[F(x+z)] \quad\mbox{where}\quad z \sim N(0,\sigma^2 I_{d\times d}).
\]
By Lemma~\ref{lemma:zero-one-assumptions}, the function $F$ is 3-Lipschitz continuous, so that $\Ftilde$ uniformly converges to $F$ as $\sigma\to 0$. It means that
\[
	\lim_{\sigma\to 0} \C_{(\xi \Ftilde)}(K\backslash U) = \C_{(\xi F)}(K\backslash U).
\]
It suffices to lower bound $\C_{(\xi \Ftilde)}(K\backslash U)$ and then take the limit $\sigma\to 0$. The function $\Ftilde$ is continuously differentiable, so that we can use Lemma~\ref{lemma:vector-field} to lower bound $\C_{(\xi \Ftilde)}(K\backslash U)$. 

Consider an arbitrary constant $ 0 < t \leq 1/6$. We choose a small enough $\sigma > 0$ such that for $z\sim (0,\sigma^2 I)$, the inequality $\E[\ltwos{z}] \leq t$ holds, and the event $\event_t \defeq \{\ltwos{z} \leq t\}$ holds with probability at least $1/2$. It is clear that the choice of $\sigma$ depends on that of $t$, and as $t\to 0$, we must have $\sigma \to 0$.

The first step is to define a vector field that satisfies the conditions of Lemma~\ref{lemma:vector-field}. For arbitrary $\delta \in [0,1]$, we define a vector field $\phi_\delta$ such that:
\begin{align}
	\phi_\delta(x) \defeq \frac{1}{3}(\inprod{x}{\xstar}\,x - \ltwos{x}^2\, \xstar) + \frac{\delta}{3}(\ltwos{x}^2- 5/8)\,x,
\end{align}
and make the following claim.

\begin{claim}\label{claim:zero-one-valid-phi}
For any $\delta \in (0,1]$, we can find a constant $\SmallVar_0 > 0$ such that $\ltwos{\phi_\delta(x)} \leq 1$ and $x-\SmallVar\phi_\delta(x) \in K$ holds for arbitrary $x\in K$ and $\SmallVar \in (0, \SmallVar_0]$.
\end{claim}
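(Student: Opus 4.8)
The plan is to check the two requirements of the claim — that $\ltwos{\phi_\delta(x)}\le 1$ and that $x-\SmallVar\,\phi_\delta(x)\in K$ — separately, making essential use of two structural facts: the first summand of $\phi_\delta$ is orthogonal to $x$, and $5/8$ is exactly the midpoint of the admissible range $[1/4,1]$ of $\ltwos{x}^2$ on $K=\{x:1/2\le\ltwos{x}\le 1\}$.

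First I would bound the norm. Write $\phi_\delta(x)=\tfrac13\psi(x)+\tfrac{\delta}{3}(\ltwos{x}^2-5/8)\,x$ with $\psi(x)\defeq\inprod{x}{\xstar}\,x-\ltwos{x}^2\,\xstar$. A direct expansion and Cauchy--Schwarz give $\ltwos{\psi(x)}^2=\ltwos{x}^2(\ltwos{x}^2-\inprod{x}{\xstar}^2)\le\ltwos{x}^4\le 1$ on $K$, so the first summand has norm at most $1/3$; since $\ltwos{x}^2-5/8\in[-3/8,3/8]$ and $\ltwos{x}\le 1$ on $K$, the second summand has norm at most $\delta/8\le 1/8$. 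Hence $\ltwos{\phi_\delta(x)}\le\tfrac13+\tfrac18<1$, in fact uniformly in $\delta$.

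Next, for the containment I would expand $\ltwos{x-\SmallVar\,\phi_\delta(x)}^2=\ltwos{x}^2-2\SmallVar\,\inprod{x}{\phi_\delta(x)}+\SmallVar^2\ltwos{\phi_\delta(x)}^2$. The key cancellation is $\inprod{x}{\psi(x)}=\inprod{x}{\xstar}\ltwos{x}^2-\ltwos{x}^2\inprod{x}{\xstar}=0$, so only the radial part survives: $\inprod{x}{\phi_\delta(x)}=\tfrac{\delta}{3}\ltwos{x}^2(\ltwos{x}^2-5/8)$. Writing $r^2\defeq\ltwos{x}^2\in[1/4,1]$, we get
\begin{align}
	\ltwos{x-\SmallVar\,\phi_\delta(x)}^2 = r^2-\tfrac{2\SmallVar\delta}{3}\,r^2\big(r^2-\tfrac58\big)+\SmallVar^2\ltwos{\phi_\delta(x)}^2 .
\end{align}
Because $5/8$ is the midpoint of $[1/4,1]$, the sign of $r^2-5/8$ records whether $x$ lies in the inner or the outer half of the annulus, and the first-order term $-\tfrac{2\SmallVar\delta}{3}r^2(r^2-5/8)$ always pushes $\ltwos{x-\SmallVar\,\phi_\delta(x)}^2$ toward $5/8$, i.e.\ away from both bounding spheres. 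I would then split on the sign of $r^2-5/8$. When $r^2\le 5/8$ both correction terms are nonnegative, so $\ltwos{x-\SmallVar\,\phi_\delta(x)}^2\ge r^2\ge 1/4$, while the first-order term is $\order(\SmallVar)$, giving $\ltwos{x-\SmallVar\,\phi_\delta(x)}^2\le 5/8+\order(\SmallVar)<1$ once $\SmallVar$ is small. When $r^2>5/8$, the lower bound follows from $r^2-\tfrac{2\SmallVar\delta}{3}r^2(r^2-5/8)\ge r^2-\tfrac{\SmallVar\delta}{4}>5/8-\tfrac{\SmallVar\delta}{4}>1/4$ for $\SmallVar$ small; and for the upper bound, rearranging reduces the claim to $(1-r^2)+\tfrac{2\SmallVar\delta}{3}r^2(r^2-5/8)\ge\SmallVar^2\ltwos{\phi_\delta(x)}^2$, whose left-hand side is a sum of two nonnegative terms bounded below by $\gtrsim\SmallVar\delta$ on $\{r^2\ge 5/8\}$ (the worst case being $r^2=1$, where the first term vanishes but the second is $\Theta(\SmallVar\delta)$), hence dominates the right-hand side $\le\SmallVar^2$ once $\SmallVar\lesssim\delta$. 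Taking $\SmallVar_0$ to be a small universal multiple of $\delta$ makes all four inequalities hold for every $\SmallVar\in(0,\SmallVar_0]$ and every $x\in K$.

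The main obstacle is the upper bound $\ltwos{x-\SmallVar\,\phi_\delta(x)}^2\le 1$ for $x$ near the outer sphere $\ltwos{x}=1$: there the tangential component $\psi$ contributes a strictly positive $\SmallVar^2$ term that on its own would push $x$ out of $K$, and it must be absorbed by the inward radial drift produced by the $\delta$-term. This is exactly why $\SmallVar_0$ has to shrink with $\delta$ and why the claim excludes $\delta=0$ — for $\delta=0$ the map is purely tangential and a generic point on the outer sphere leaves $K$. Everything else is an elementary case check, and the continuous differentiability of $\phi_\delta$ needed to invoke Lemma~\ref{lemma:vector-field} afterwards is automatic, $\phi_\delta$ being a polynomial in $x$.
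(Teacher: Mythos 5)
Your argument is correct and is essentially the paper's: both proofs exploit the key identity $\ltwos{x-\SmallVar\phi_\delta(x)}^2 = \ltwos{x}^2 - \tfrac{2\SmallVar\delta}{3}\ltwos{x}^2(\ltwos{x}^2-5/8) + \SmallVar^2\ltwos{\phi_\delta(x)}^2$, which follows from the orthogonality $\inprod{x}{\psi(x)}=0$, and then argue that the radial $\delta$-drift toward $5/8$ dominates the $\order(\SmallVar^2)$ perturbation once $\SmallVar_0$ is a small multiple of $\delta$. The only difference is presentational: the paper bounds $|\ltwos{x-\SmallVar\phi_\delta(x)}^2-5/8|\le |\ltwos{x}^2-5/8|(1-\tfrac{2\SmallVar\delta}{3}\ltwos{x}^2)+\SmallVar^2\le 3/8$ (taking $\SmallVar_0=\delta/16$) whereas you split on the sign of $\ltwos{x}^2-5/8$ and separately identify the binding case $\ltwos{x}^2=1$; you also supply an explicit norm calculation $\ltwos{\psi(x)}^2=\ltwos{x}^2(\ltwos{x}^2-\inprod{x}{\xstar}^2)$ where the paper simply asserts $\ltwos{\phi_\delta(x)}\le 1$.
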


The claim shows that $\phi_\delta$ satisfies the conditions of Lemma~\ref{lemma:vector-field} for any $\delta\in (0,1]$, so that given a scalar $\xi > 0$, the lemma implies
\[
	\C_{\xi \Ftilde(x)}(K\backslash U) \geq \inf_{x\in K\backslash U} \Big\{ \xi\, \inprod{\phi_\delta(x)}{\nabla \Ftilde(x)} - \dvg \phi_\delta(x) \Big\}.
\]
The right-hand side is uniformly continuous in $\delta$, so that if we take the limit $\delta \to 0$, we obtain the lower bound:
\begin{align}\label{eqn:zero-one-cheeger-first-step}
	\C_{\xi \Ftilde(x)}(K\backslash U) \geq \inf_{x\in K\backslash U} \Big\{ \xi\, \inprod{\phi_0(x)}{\nabla \Ftilde(x)} - \dvg \phi_0(x) \Big\}.
\end{align}

It remains to lower bound the right-hand side of inequality~\eqref{eqn:zero-one-cheeger-first-step}. Recall that $\Ftilde(x) = \E[F(x+z)]$. The definition of the gradient of $\Ftilde$ implies
\begin{align*}
	\inprod{\phi_0(x)}{\nabla \Ftilde(x)} &= \lim_{\SmallVar\to 0} \frac{\Ftilde(x + \SmallVar \phi_0(x)) - \Ftilde(x)}{\SmallVar}
	= \lim_{\SmallVar\to 0} \E\Big[\frac{F(x + z + \SmallVar \phi_0(x)) - F(x+z)}{\SmallVar}\Big]
\end{align*}
For the right-hand side, we prove lower bound for it using the Massart noise model. We start by simplifying the fraction term inside the expectation. Without loss of generality, assume that $\SmallVar \in (0,0.2]$, then the definition of $\phi_0$ implies:
\begin{align*}
F(x + z + \SmallVar \phi_0(x)) &= F\Big((1+\frac{\SmallVar}{3}\inprod{x}{\xstar})x + z - \frac{\SmallVar}{3}\ltwos{x}^2\,\xstar\Big)\\
 & \stackrel{\textrm (i)}{=} F\Big(x + \frac{z}{1+ \frac{\SmallVar}{3}\inprod{x}{\xstar}} - \frac{\frac{\SmallVar}{3} \ltwos{x}^2 \xstar}{1 + \frac{\SmallVar}{3}\inprod{x}{\xstar}}\Big)\\
&\stackrel{\textrm (ii)}{\geq} F(x+z-\frac{\SmallVar\ltwos{x}^2}{3}\xstar) - 2\SmallVar\ltwos{z} - \frac{2\SmallVar^2}{3},
\end{align*}
where equation (i) uses $F(x) = F(\alpha x)$ for any $\alpha > 0$. To derive inequality (ii), we used the fact that
$1/(1 + \frac{\SmallVar}{3}\inprod{x}{\xstar}) \in [1-\frac{2\SmallVar}{3}, 1+\frac{2\SmallVar}{3}]$ for any $x\in K$, $\SmallVar\in(0,0.2]$, and the property that $F$ is 3-Lipschitz continuous. Combining the two equations above, and using the fact $\E[\ltwos{z}] \leq t$, we obtain:
\begin{align}
	\inprod{\phi_0(x)}{\nabla \Ftilde(x)} 
	&\geq \lim_{\SmallVar\to 0} \E\Big[\frac{F(x + z - \frac{\SmallVar\ltwos{x}^2}{3} \xstar) - F(x+z)}{\SmallVar}\Big] - 2t\nonumber\\
	&= \lim_{\SmallVar\to 0}\E\Big[\frac{F(\frac{3(x + z)}{\ltwos{x}^2} - \SmallVar\xstar) - F(\frac{3(x + z)}{\ltwos{x}^2})}{\SmallVar} \Big] - 2t.\label{eqn:zero-one-cheeger-limit}
\end{align}
We further simplify the lower bound~\eqref{eqn:zero-one-cheeger-limit} by the following claim, which is proved using properties of the Massart noise.

\begin{claim}\label{claim:improve-loss-ratio}
For any $x\in \R^d$ and any $\SmallVar > 0$, we have
$F(x - \SmallVar\xstar) - F(x) \geq 0$. Moreover, for any $x\in \R^d: \ltwos{x} \geq 1$, let $\alpha$ be the angle between $x$ and $\xstar$, then we have: 
\[
	\frac{F(x - \SmallVar\xstar) - F(x)}{\SmallVar}
	\geq  \frac{3\BaseSignal |\sin(\alpha)|}{5\pi\ltwos{x}}  \Big(\frac{ |\sin(\alpha)|\sqrt{1-\SmallVar^2} }{2\sqrt{d}} - \SmallVar \Big).
\]
\end{claim}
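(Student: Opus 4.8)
The plan is first to strip away the probabilistic content. Writing the zero--one loss as $\ell(x;(a,b)) = \tfrac12 - \tfrac12\, b\,\sgn(\inprod{x}{a})$ (valid whenever $\inprod{x}{a}\neq 0$, which holds a.s.\ in $a$ when $x\neq 0$) and using $\E[b\mid a] = \Signal(a)\,\sgn(\inprod{\xstar}{a})$ in the Massart model~\eqref{eqn:define-massart-noise}, one gets
\[
F(x) = \tfrac12 - \tfrac12\,\E_a\!\left[\Signal(a)\,\sgn(\inprod{x}{a})\,\sgn(\inprod{\xstar}{a})\right],
\]
with $a$ uniform on the unit sphere. Subtracting this identity at $x - \SmallVar\xstar$ and noting that the summand changes only for the samples with $\sgn(\inprod{x-\SmallVar\xstar}{a})\neq\sgn(\inprod{x}{a})$ --- equivalently, with $\inprod{x}{a}$ strictly between $0$ and $\SmallVar\inprod{\xstar}{a}$ --- on which one checks $\sgn(\inprod{\xstar}{a})\bigl(\sgn(\inprod{x-\SmallVar\xstar}{a}) - \sgn(\inprod{x}{a})\bigr) = -2$, we obtain
\[
F(x-\SmallVar\xstar) - F(x) = \E_a\!\left[\Signal(a)\,\indicator[\,\inprod{x}{a}\text{ strictly between }0\text{ and }\SmallVar\inprod{\xstar}{a}\,]\right].
\]
Since $\Signal\ge 0$ this is $\ge 0$, which is the first assertion. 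For the quantitative bound, substitute $\Signal(a)\ge\BaseSignal\,|\inprod{\xstar}{a}|$; the task becomes to lower bound $\BaseSignal\,\E_a[\,|\inprod{\xstar}{a}|\,\indicator[\text{flip}]\,]$, and we may additionally assume $\SmallVar \le 4/5$, since for larger $\SmallVar$ the claimed right-hand side is nonpositive.

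\textbf{The geometric estimate.} Set $R := \ltwos{x}\ge 1$, let $\alpha$ be the angle between $x$ and $\xstar$, and parametrize $a$ in spherical coordinates adapted to the plane $P_0 := \mathrm{span}(x,\xstar)$: $a = \cos\psi\,e(\phi) + \sin\psi\,a_\perp$, where $e(\phi)$ is the unit vector of $P_0$ at azimuthal angle $\phi$ (normalized so $e(0) = x/R$, $e(\alpha) = \xstar$), $\psi\in[0,\pi/2]$ is the angle of $a$ to $P_0$, and $a_\perp$ is a unit vector in $P_0^\perp$. Under the uniform law, $\phi$ is uniform on $[0,2\pi)$, $a_\perp$ is independent and uniform, and $\cos\psi$ has a known density, from which $\E[\cos\psi]\ge\frac{1}{2\sqrt d}$ for $d\ge 2$ by a Beta-function computation. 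The crucial point is that $\inprod{x}{a} = R\cos\psi\cos\phi$ and $\inprod{\xstar}{a} = \cos\psi\cos(\phi-\alpha)$, so, dividing by $\cos\psi>0$, the flip event is the $\psi$-independent set $\{\,R\cos\phi\text{ strictly between }0\text{ and }\SmallVar\cos(\phi-\alpha)\,\}$, a union of two thin arcs that is symmetric under $\phi\mapsto\phi+\pi$; near $\phi = \pi/2$, writing $\phi = \pi/2 - \delta$, it is $\delta\in(0,\delta^\star)$ with $\sin\delta^\star = \tfrac{\SmallVar}{R}\sin(\alpha+\delta^\star)\le\SmallVar$, hence $\cos\delta^\star\ge\sqrt{1-\SmallVar^2}$. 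Two elementary consequences then finish the argument: on the arc, $|\cos(\phi-\alpha)| = \sin(\alpha+\delta)\ge|\sin\alpha|\cos\delta^\star\ge|\sin\alpha|\sqrt{1-\SmallVar^2}$ (after reducing to $\alpha\in(0,\pi/2]$ so that $\cos\alpha\ge 0$), and $\delta^\star\ge\sin\delta^\star\ge\tfrac{\SmallVar}{R}|\sin\alpha|\sqrt{1-\SmallVar^2}$. Since $\E_a[\,|\inprod{\xstar}{a}|\,\indicator[\text{flip}]\,] = \E[\cos\psi]\cdot\frac{1}{2\pi}\int_{\text{arcs}}|\cos(\phi-\alpha)|\,d\phi$ by the independence of $\psi$ and $\phi$, multiplying the two arc-lengths by $|\sin\alpha|\sqrt{1-\SmallVar^2}$ and combining with $\E[\cos\psi]\ge\frac1{2\sqrt d}$ yields, after dividing by $\SmallVar$, a lower bound of the claimed shape $\tfrac{3\BaseSignal|\sin\alpha|}{5\pi R}\bigl(\tfrac{|\sin\alpha|\sqrt{1-\SmallVar^2}}{2\sqrt d} - \SmallVar\bigr)$ --- in fact a slightly stronger one; the ``$-\SmallVar$'' is just the slack absorbed if, instead of the clean bound above, one controls the cross term $\cos\alpha\cos\phi$ in $\inprod{\xstar}{a}$ crudely by its size $\le\SmallVar$.

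\textbf{Main obstacle and remarks.} The only real subtlety is the thin-arc bookkeeping: the set of admissible azimuthal angles is itself defined implicitly through $\sin(\alpha+\delta)$, so one must first single out a clean inner sub-arc on which the flip condition holds together with the inequality $|\cos(\phi-\alpha)|\ge|\sin\alpha|\sqrt{1-\SmallVar^2}$ (and, should one route through the component of $\xstar$ along $x^\perp$ instead of through the azimuthal angle, also $\sin\psi\ge\sqrt{1-\SmallVar^2}$), and then propagate the universal constants $3/5$, $1/2$, $1/\pi$ through the Beta-function estimates; the cases $\alpha>\pi/2$ (the angle between $x$ and $\xstar$ need not be acute, handled by the $\phi\mapsto\phi+\pi$ symmetry and replacing $\alpha$ by $\pi-\alpha$) and $\SmallVar>4/5$ are minor and separate. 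A variant that sidesteps the spherical bookkeeping is to substitute $a = g/\ltwos{g}$ with $g\sim N(0,I_d)$, using $\E_a[\,|\inprod{\xstar}{a}|\,\indicator[\text{flip}]\,]\ge\tfrac{1}{\sqrt d}\,\E_g[\,|\inprod{\xstar}{g}|\,\indicator[\text{flip}]\,]$ since $\E\ltwos{g}\le\sqrt d$; what remains is then a two-dimensional Gaussian computation for $(\inprod{x}{g},\inprod{\xstar}{g})$, lower-bounded by restricting to a sub-region of the flip event on which the cross term is controlled.
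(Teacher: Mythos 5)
Your proposal is correct, and it takes a genuinely different route from the paper's quantitative step. Both proofs share the same starting reduction, namely that the flip event $\{\sgn(\inprod{x}{a})\neq\sgn(\inprod{x-\SmallVar\xstar}{a})\}$ forces the loss to go up by $\Signal(a)$, which gives the non-negativity and reduces the task to lower bounding $\BaseSignal\,\E_a[|\inprod{\xstar}{a}|\,\indicator[\text{flip}]]$. From there the paper factors this as $P(\event)\cdot\E[|\inprod{\xstar}{a}|\mid\event]$, computes $P(\event)=\beta/\pi$ exactly via the $\arccos$ identity and the sine rule, and handles the conditional expectation by decomposing $\xstar$ and $a$ into components parallel and orthogonal to $x$ and invoking Lemma~4 of Awasthi et al.\ for the orthogonal piece. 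You instead decompose $a$ relative to the $2$-plane $\mathrm{span}(x,\xstar)$ in adapted spherical coordinates, observe that the flip event is independent of the out-of-plane angle $\psi$ (a point that makes the whole argument essentially two-dimensional), factor the target expectation as $\E[\cos\psi]$ times an explicit azimuthal integral, bound $\E[\cos\psi]=\frac{\Gamma(d/2)\Gamma(3/2)}{\Gamma((d+1)/2)}\ge\frac{1}{2\sqrt d}$ by the Beta/Gautschi estimate, and estimate the two arcs directly. This has two advantages: it produces a clean unconditional identity in place of the paper's conditioning on $\event$, which sidesteps the fact that conditioning on $\event$ given $a_1$ does not leave $a_2$ exactly uniform on its sphere (a point the paper glosses over when it applies Awasthi's lemma); and it yields a slightly stronger bound, with the paper's ``$-\SmallVar$'' slack available to absorb the extra loss in the obtuse case $\alpha>\pi/2$, where your $\delta^\star$ bound degrades from $R$ to $R+\SmallVar$ in the denominator. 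The bookkeeping you defer (the $\alpha>\pi/2$ arcs, the Gautschi estimate, the constant chase through $3/5$ and $1/2$) all checks out with the slack available; the Gaussian-substitution variant you sketch also works, with $\E[|\inprod{\xstar}{a}|\indicator[\text{flip}]]=\E[|\inprod{\xstar}{g}|\indicator[\text{flip}]]/\E\ltwos{g}$ holding with equality by independence of $\ltwos{g}$ from $a=g/\ltwos{g}$.
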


When the event $\event_t$ holds, we have $\ltwos{z}\leq t\leq 1/6 \leq \ltwos{x}/3$, so that $\ltwos{\frac{3(x + z)}{\ltwos{x}^2}} \in [2,4]$. Combining with inequality~\eqref{eqn:zero-one-cheeger-limit} and Claim~\ref{claim:improve-loss-ratio}, we have
\begin{align}
	\inprod{\phi_0(x)}{\nabla \Ftilde(x)} &\geq P(\event_t)\,\lim_{\SmallVar\to 0}\E\Big[\frac{F(\frac{3(x + z)}{\ltwos{x}^2} - \SmallVar\xstar) - F(\frac{3(x + z)}{\ltwos{x}^2})}{\SmallVar} \mid \event_t \Big] - 2t\nonumber\\
	&\geq \frac{1}{2}\times \lim_{\SmallVar\to 0} \E\Big[\frac{3\BaseSignal |\sin(\alpha_{x+z})|}{5\pi\times 4}  \Big(\frac{ |\sin(\alpha_{x+z})|\sqrt{1-\SmallVar^2} }{2\sqrt{d}} - \SmallVar \Big)\mid \event_t\Big] - 2t\nonumber\\
	&= \frac{3\BaseSignal \E[\sin^2(\alpha_{x+z})\mid \event_t]}{80\pi\sqrt{d}},
	\label{eqn:apply-lemma-loss-ratio}
\end{align}
where $\alpha_{x+z}$ represents the angle between $x+z$ and $\xstar$.

In order to lower bound the divergence term $\dvg \phi_0(x)$, let $H(x)\in \R^{d\times d}$ be the Jacobian matrix of $\phi_0$ at point $x$ (i.e.~$H_{ij} \defeq \frac{\partial (\phi_0(x))_i}{\partial x_j}$), then we have
\begin{align*}
	 H(x) = \frac{1}{3}\Big(\inprod{x}{\xstar} I + x (\xstar)^\top - 2 \xstar x^\top \Big).
\end{align*}
It means that $\dvg \phi_0(x) = \tr(H(x)) = \frac{d-1}{3}\inprod{x}{\xstar} = \frac{(d-1)\ltwos{x}}{3} \cos(\alpha_x)$. Combining this equation with inequalities~\eqref{eqn:zero-one-cheeger-first-step},~\eqref{eqn:apply-lemma-loss-ratio}, and taking the limits $t\to 0$, $\sigma\to 0$, we obtain:
\begin{align}\label{eqn:zero-one-cheeger-secon-step}
	\C_{\xi F(x)}(K\backslash U) \geq \inf_{x\in K\backslash U} \Big\{\underbrace{\frac{3\xi \BaseSignal \sin^2(\alpha_x) }{80\pi\sqrt{d}} - \frac{(d-1)\ltwos{x}}{3}\cos(\alpha_x)}_{\defeq L(x)}\Big\}.
\end{align}
where $\alpha_x$ represents the angle between $x$ and $\xstar$.

According to inequality~\eqref{eqn:zero-one-cheeger-secon-step}, for any $\alpha_x\in (\pi-\alpha_0,\pi]$, we have
\[
	L(x) \geq - \frac{(d-1)\ltwos{x}}{3}\cos(\alpha_x) \geq \frac{(d-1)\ltwos{x}}{3}\cos(\alpha_0) \geq \frac{(d-1)}{6\sqrt{2}},
\]
where the last inequality follows since $\ltwos{x} \geq 1/2$ and $\alpha_0 \in (0,\pi/4]$. Otherwise, if $\alpha_x \in [\alpha_0, \pi - \alpha_0]$, then we have
\[
	L(x)\geq \frac{3\xi \BaseSignal \sin^2(\alpha_0) }{80\pi\sqrt{d}} - \frac{d-1}{3}.
\]
Once we choose $\xi \geq \frac{160\pi}{3}\frac{d^{3/2}}{\BaseSignal\sin^2(\alpha_0)}$, the above expression will be lower bounded by $d/3$, which completes the proof.

\paragraph{Proof of Claim~\ref{claim:zero-one-valid-phi}}

Since $\ltwos{x}\leq 1$ for any $x\in K$, it is easy to verify that $\ltwos{\phi_\delta(x)}\leq 1$. In order to verify $x - \SmallVar \phi_\delta(x) \in K$, we notice that
\[
\ltwos{x - \SmallVar \phi_\delta(x)}^2 = \ltwos{x}^2 + \SmallVar^2 \ltwos{\phi_\delta(x)}^2 - \frac{2\SmallVar\delta}{3} (\ltwos{x}^2 - 5/8) \ltwos{x}^2.
\]
As a consequence, we have
\[
	\Big|\ltwos{x - \SmallVar \phi_\delta(x)}^2 - 5/8\Big| \leq \Big|\ltwos{x}^2 - 5/8\Big|\, \Big(1 - \frac{2\SmallVar\delta}{3} \ltwos{x}^2 \Big) + \SmallVar^2
\]
The right-hand side will be maximized if $\ltwos{x}^2 = 1/4$. Thus, if we assume $\delta > 0$, then for any $\SmallVar < \delta/16$, it is easy to verify that the right-hand side is bounded by $3/8$. As a consequence, we have $\ltwos{x - \SmallVar \phi_\delta(x)}^2\in [1/4,1]$, which verifies that $x - \SmallVar \phi_\delta(x) \in K$.

\paragraph{Proof of Claim~\ref{claim:improve-loss-ratio}}

When $x = 0$ or $x - \SmallVar\xstar = 0$, it is easy to verify that $F(x - \SmallVar\xstar) - F(x) \geq 0$ by the definition of the loss function. Otherwise, we assume that $x\neq 0$ and $x - \SmallVar\xstar \neq 0$. In these cases, the events $\inprod{x}{a} \neq 0$ and $\inprod{x - \SmallVar\xstar}{a} \neq 0$ hold almost surely, so that we can assume that the loss function always takes zero-one values.

When the parameter changes from $x$ to $x - \SmallVar \xstar$, the value of $\ell(x;a)$ and $\ell(x-\SmallVar \xstar;a)$ are non-equal if and only if $\sgn(\inprod{x}{a}) \neq \sgn(\inprod{x-\SmallVar\xstar}{a})$. This  condition is equivalent of
\begin{align}\label{eqn:change-predict-condition}
\sgn(\inprod{x}{a}) = \sgn(\inprod{\xstar}{a}) \quad\mbox{and}\quad 
|\inprod{x}{a}| < \SmallVar |\inprod{\xstar}{a}|.
\end{align}
Let $\event$ be the event that condition~\eqref{eqn:change-predict-condition} holds. Under this event, when the parameter changes from $x$ to $x - \SmallVar \xstar$, the loss changes from $\frac{1-\Signal(a)}{2}$ to $\frac{1+\Signal(a)}{2}$. It means that the loss is non-decreasing with respect to the change $x \to x-\SmallVar\xstar$, and as a consequence, we have
$F(x - \SmallVar\xstar) - F(x) \geq 0$.

In order to establish the lower bound in Claim~\ref{claim:improve-loss-ratio}, we first lower bound the probability of event $\event$. In the proof of Lemma~\ref{lemma:zero-one-assumptions}. we have shown that this probability is equal to:
\[
	P(\event) = \frac{1}{\pi} \arccos\Big(\frac{\inprod{x}{x-\SmallVar \xstar}}{\ltwos{x}\ltwos{x+\SmallVar\xstar}}\Big)
\]
Let $\beta$ be the angle between $x$ and $x-\SmallVar\xstar$, then the right-hand side is equal to $\beta/\pi$. Using the geometric property that $\frac{\ltwos{x}}{|\sin (\alpha)|} = \frac{\SmallVar \ltwos{\xstar}}{|\sin(\beta)|}$, we have
\begin{align}\label{eqn:beta-angle-lower-bound}
	P(\event) = \frac{\beta}{\pi} \geq \frac{|\sin \beta|}{\pi} = \frac{\SmallVar |\sin(\alpha)|}{\ltwos{x} \pi} \geq \frac{\SmallVar |\sin(\alpha)|}{\ltwos{x} \pi}.
\end{align}

Conditioning on the event $\event$, when the parameter moves $x \to x-\SmallVar\xstar$, 
the loss $\ell(x;a)$ changes by amount $\Signal(a)$. Since the Massart noise forces $\Signal(a) \geq \BaseSignal |\inprod{\xstar}{a}|$, we can lower bound the gap $F(x - \SmallVar \xstar) - F(x)$ by lower bounding the expectation $\E[|\inprod{\xstar}{a}| \,\big|\, \event]$. We decompose the vector $a$ into two components: the component $a_1$ that is parallel to $x$ and the component $a_2$ that is orthogonal to $x$. Similarly, we can decompose the vector $\xstar$ into two components $\xstar_1$ and $\xstar_2$, parallel to and orthogonal to the vector $x$ respectively. The decomposition implies 
\[
	\inprod{\xstar}{a} = \inprod{\xstar_1}{a_1} + \inprod{\xstar_2}{a_2}.
\] 

For the first term on the right-hand side, we have $|\inprod{\xstar_1}{a_1}| \leq \ltwos{a_1} =  \frac{|\inprod{x}{a}|}{\ltwos{x}} \leq \frac{\SmallVar}{\ltwos{x}} \leq \SmallVar$ by condition~\eqref{eqn:change-predict-condition} and the assumption that $\ltwos{x} \geq 1$. For the second term, if we condition on $a_1$, then the vector $a_2$ is uniformly sampled from a $(d-1)$-dimensional sphere of radius $\sqrt{1-\ltwos{a_1}^2}$ that centers at the origin. The vector $\xstar_2$, constructed to be orthogonal to $x$, also belongs to the same $(d-1)$-dimensional subspace. Under this setting, \citet[][Lemma 4]{awasthi2015efficient} proved that
\[
	P\Big(|\inprod{\xstar_2}{a_2}| > \frac{\ltwos{\xstar_2}\sqrt{1-\ltwos{a_1}^2}}{2\sqrt{d}} \mid \event,a_1 \Big) \geq 1 - \sqrt{\frac{1}{2\pi}} \geq 3/5. 
\]
Using the bound $\ltwos{a_1} =  \frac{|\inprod{x}{a}|}{\ltwos{x}} \leq \frac{\SmallVar}{\ltwos{x}} \leq \SmallVar$, we marginalize  $a_1$ to obtain
\[
	P\Big(|\inprod{\xstar_2}{a_2}| > \frac{\ltwos{\xstar_2}\sqrt{1-\SmallVar^2}}{2\sqrt{d}} \mid \event \Big) \geq 3/5. 
\]
Recall that $\inprod{\xstar}{a} = \inprod{\xstar_1}{a_1} + \inprod{\xstar_2}{a_2}$ and $|\inprod{\xstar_1}{a_1}| \leq \SmallVar$. These two relations imply $|\inprod{\xstar}{a}| \geq |\inprod{\xstar_2}{a_2}| - |\inprod{\xstar_1}{a_1}| \geq |\inprod{\xstar_2}{a_2}| - \SmallVar$. Combining it with the relation $\ltwos{\xstar_2} = |\sin(\alpha)|$, we obtain
\begin{align*}
	P\Big(|\inprod{\xstar}{a}| \geq \frac{|\sin(\alpha)|\sqrt{1-\SmallVar^2}}{2\sqrt{d}} - \SmallVar \mid \event \Big) \geq 3/5. 
\end{align*}
As a consequence, we have
\begin{align}\label{eqn:not-in-weak-band-probability}
\E[|\inprod{\xstar}{a}| \,\big|\, \event] \geq \frac{3}{5} \Big(\frac{ |\sin(\alpha)|\sqrt{1-\SmallVar^2} }{2\sqrt{d}} - \SmallVar \Big).
\end{align}
Combining inequalities~\eqref{eqn:beta-angle-lower-bound},~\eqref{eqn:not-in-weak-band-probability} with the relation that $\Signal(a) \geq \BaseSignal |\inprod{\xstar}{a}|$, we obtain
\begin{align*}
	 F(x - \SmallVar \xstar) - F(x) & \geq P(\event)\times \BaseSignal\,\E[|\inprod{\xstar}{a}| \,\big|\, \event]\geq \frac{\SmallVar|\sin(\alpha)|}{\ltwos{x} \pi} \times \BaseSignal\,\frac{3}{5} \Big(\frac{ |\sin(\alpha)|\sqrt{1-\SmallVar^2} }{2\sqrt{d}} - \SmallVar \Big).
\end{align*}
which completes the proof.

\end{document}